\definecolor{niceRed}{RGB}{190,38,38}
\definecolor{niceYellow}{HTML}{f5b400}
\definecolor{blueGrotto}{HTML}{059DC0}
\definecolor{royalBlue}{HTML}{057DCD}
\definecolor{navyBlue}{HTML}{0B579C}
\definecolor{limeGreen}{HTML}{81B622}
\definecolor{nicePurple}{HTML}{9c27b0}
\definecolor{lightRoyalBlue}{HTML}{def2ff}  
\definecolor{gold}{HTML}{ffa300}
\renewcommand{\Pr}{\mathbb{P}}
\newcommand{\E}{\mathbb{E}} 
\newcommand{\poly}{\textnormal{poly}}
\newcommand{\eps}{\epsilon}
\newcommand{\calD}{\mathcal{D}}
\newcommand{\calH}{\mathcal{H}}
\newcommand{\calN}{\mathcal{N}}
\def\<{\langle}
\def\>{\rangle}
\def\wh{\widehat}
\newcommand{\R}{\mathbb{R}}
\renewcommand{\hat}{\widehat}
\renewcommand{\bar}{\overline}
\theoremstyle{plain} 
\newtheorem{theorem}{Theorem}[section]
\newtheorem{lemma}[theorem]{Lemma}
\newtheorem{definition}{Definition}
\newtheorem*{definition*}{Definition}
\theoremstyle{definition} 
\theoremstyle{remark} 
\newtheorem{remark}{Remark}
\crefname{section}{Section}{Sections}
\crefname{theorem}{Theorem}{Theorems}
\crefname{theorem*}{Theorem}{Theorems}
\crefname{inftheorem}{Informal Theorem}{Informal Theorems}
\crefname{assumption}{Assumption}{Assumptions}
\crefname{lemma}{Lemma}{Lemmas}
\crefname{definition}{Definition}{Definitions}
\crefname{infdefinition}{Informal Definition}{Informal Definitions}
\crefname{conjecture}{Conjecture}{Conjectures}
\crefname{corollary}{Corollary}{Corollaries}
\crefname{construction}{Construction}{Constructions}
\crefname{conjecture}{Conjecture}{Conjectures}
\crefname{claim}{Claim}{Claims}
\crefname{observation}{Observation}{Observations}
\crefname{proposition}{Proposition}{Propositions}
\crefname{fact}{Fact}{Facts}
\crefname{question}{Question}{Questions}
\crefname{problem}{Problem}{Problems}
\crefname{remark}{Remark}{Remarks}
\crefname{example}{Example}{Examples}
\crefname{equation}{Equation}{Equations}
\crefname{appendix}{Appendix}{Appendices}
\crefname{algorithm}{Algorithm}{Algorithms}
\crefname{model}{Model}{Models}
\crefname{figure}{Figure}{Figures}
\crefname{condition}{Condition}{Conditions}
\crefname{section}{Section}{Sections}
\crefname{theorem}{Theorem}{Theorems}
\crefname{lemma}{Lemma}{Lemmas}
\crefname{problem}{Problem}{Problems}
\crefname{program}{Program}{Progams}
\crefname{definition}{Definition}{Definitions}
\crefname{conjecture}{Conjecture}{Conjectures}
\crefname{corollary}{Corollary}{Corollaries}
\crefname{construction}{Construction}{Constructions}
\crefname{conjecture}{Conjecture}{Conjectures}
\crefname{claim}{Claim}{Claims}
\crefname{observation}{Observation}{Observations}
\crefname{proposition}{Proposition}{Propositions}
\crefname{fact}{Fact}{Facts}
\crefname{question}{Question}{Questions}
\crefname{problem}{Problem}{Problems}
\crefname{remark}{Remark}{Remarks}
\crefname{example}{Example}{Examples}
\crefname{equation}{Equation}{Equations}
\crefname{appendix}{Section}{Sections}
\crefname{algorithm}{Algorithm}{Algorithms}
\crefname{model}{Model}{Models}
\crefname{figure}{Figure}{Figures}
\crefname{infassumption}{Informal Assumption}{Informal Assumptions}
\crefname{inftheorem}{Informal Theorem}{Informal Theorems}
\crefname{infdefinition}{Informal Definition}{Informal Definitions}
\crefname{minftheorem}{Main Informal Theorem}{Main Informal Theorems}
\crefname{maintheorem}{Main Theorem}{Main Theorems}
\crefname{assumption}{Assumption}{Assumptions}
\crefname{step}{Step}{Steps}
\crefname{result}{Result}{Results}
\crefname{event}{Event}{Events}
\crefname{none}{}{}
\definecolor{myC}{rgb}{0, 255, 255}
\definecolor{myY}{rgb}{204, 204, 0}
\definecolor{myM}{rgb}{255, 0, 255}
\definecolor{secinhead}{RGB}{249,196,95}
\definecolor{lgray}{gray}{0.8}
\crefname{appsec}{Appendix}{Appendices}
\renewcommand{\epsilon}{\varepsilon}
\newcommand*{\tran}{{\mathpalette\@tran{}}}
\newcommand*{\@tran}[2]{\raisebox{\depth}{$\m@th#1\intercal$}}
\def\<{\langle}
\def\>{\rangle}
\DeclareMathAlphabet{\mathpzc}{OT1}{pzc}{m}{it}
\DeclareMathAlphabet{\mathdutchcal}{U}{dutchcal}{m}{n}
\SetMathAlphabet{\mathdutchcal}{bold}{U}{dutchcal}{b}{n}
\DeclareMathAlphabet{\mathdutchbcal}{U}{dutchcal}{b}{n}
\DeclareMathAlphabet\urwscr{U}{urwchancal}{b}{n}%
\DeclareMathAlphabet\rsfscr{U}{rsfso}{m}{n}
\DeclareMathAlphabet\euscr{U}{eus}{m}{n}
\DeclareMathAlphabet\stixcal{LS2}{stixcal}{m} {n}
\newcommand{\normal}{\mathcal N}
\newcommand{\somepallietal}{S23}
\newcommand{\wenetal}{W24}
\title{Does Generation Require Memorization?\\ Creative Diffusion Models using Ambient Diffusion}
\author{Kulin Shah \thanks{Email: \texttt{kulinshah@utexas.edu}, supported by the NSF AI Institute for Foundations of Machine Learning (IFML).
} \\
UT Austin
\and Alkis Kalavasis \thanks{Email: \texttt{alkis.kalavasis@yale.edu}, supported by the Institute for Foundations of
Data Science at Yale (FDS).}\\
Yale University 
\and Adam R. Klivans \thanks{Email: \texttt{klivans@cs.utexas.edu}, supported by the NSF AI Institute for Foundations of Machine Learning (IFML).
}\\
UT Austin
\and 
Giannis Daras \thanks{Email: \texttt{gdaras@mit.edu}, supported by the NSF AI Institute for Foundations of Machine Learning (IFML) and the Computer Science \& Artificial Intelligence Laboratory at MIT (CSAIL).} \\
MIT
}
\begin{document}

\maketitle

\begin{abstract}
There is strong empirical evidence that the state-of-the-art diffusion modeling paradigm leads to models that
memorize the training set, especially when the training set is small. 
Prior methods to mitigate the memorization problem often lead to a decrease in image quality.  Is it possible to obtain \textit{strong} and \emph{creative} generative models, i.e., models that achieve high generation quality and low memorization? Despite the current pessimistic landscape of results, we make significant progress in pushing the trade-off between fidelity and memorization.
We first provide theoretical evidence that memorization in diffusion models is only necessary for denoising problems at low noise scales (usually used in generating high-frequency details). Using this theoretical insight, we propose a simple, principled method to train the diffusion models using noisy data at large noise scales. We show that our method significantly reduces memorization without decreasing the image quality, for both text-conditional and unconditional models and for a variety of data availability settings.

\end{abstract}

\begin{figure}[!ht]
    \centering
    \includegraphics[scale=0.5]{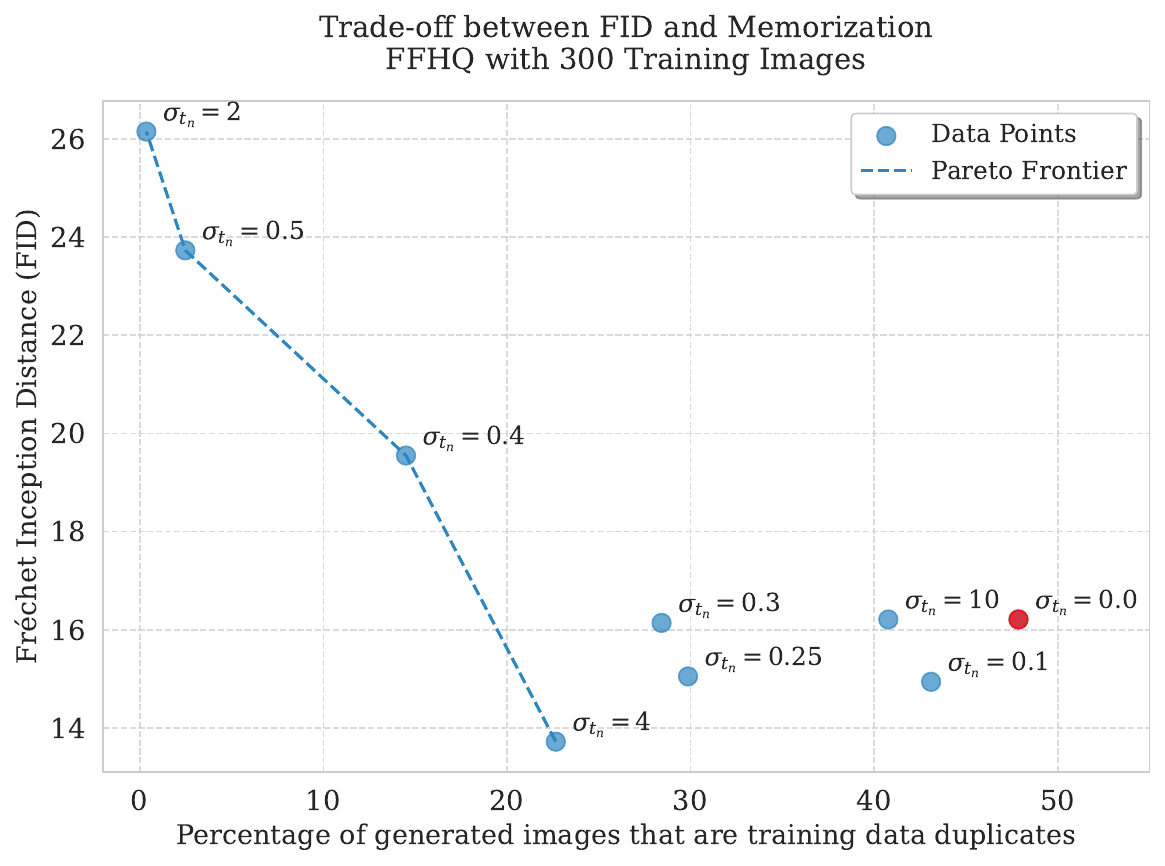}
    \caption{(FID, Memorization) pairs for different values of $\sigma_{t_{\mathrm{n}}}$ used in our proposed  \Cref{alg:training_algorithm} (presented in \Cref{section:method}) for training diffusion models from limited data. The standard DDPM objective corresponds to $\sigma_{t_{\mathrm{n}}} = 0$ and it is not in the Pareto frontier. Setting $\sigma_{t_{\mathrm{n}}}$ too low or too high reverts back to the DDPM behavior. Values for $\sigma_{t_{\mathrm{n}}} \in [0.4, 4]$ strike different balances between memorization and quality of generated images.
    The models in this Figure are trained on only $300$ images from FFHQ.}
    \label{fig:tradeoff}
\end{figure}

\section{Introduction}
\label{sec:introduction}
 Diffusion models~\citep{ncsn,ddpm,ncsnv3} have become a widely used framework for unconditional and text-conditional image generation. However, recent works~\citep{somepalli2022diffusion,carlini2023extracting,daras2023ambient,somepalli2023understanding,daras2024consistent,ross2024geometric} have shown that the trained models memorize the training data and often replicate them at generation time. This issue has raised important privacy and ethical concerns~\citep{somepalli2022diffusion, tramer2022position, appel2023generative}, especially in applications where the training set contains sensitive or copyrighted information~\citep{chambon2022adapting, chambon2022roentgen}. \cite{carlini2023extracting} conjectures that the improved performance over alternative frameworks may come \textit{from} the increased memorization~\cite{holistic-eval-text-to-image}. This raises the following question:

\begin{center}
    \emph{Can we improve the memorization of diffusion models\\ without decreasing the image generation quality?}
\end{center}

Prior work has shown that the optimal solution to the diffusion objective is a model that merely replicates the training points~\citep{de2022convergence, scarvelis2023closed, biroli2024dynamical, kamb2024analytic, benton2024nearly}. The experimentally observed creativity in diffusion modeling happens when the models fail to perfectly minimize their training loss~\citep{kamb2024analytic}. As the training dataset becomes smaller, overfitting becomes easier, memorization increases and output diversity decreases~\citep{somepalli2023understanding,daras2023ambient, gu2023memorization}. Text-conditioning is also known to exacerbate memorization~\citep{somepalli2022diffusion,carlini2023extracting,somepalli2023understanding} and text-conditional diffusion models are known to memorize individual training points even when trained on billions of image-text pairs~\citep{carlini2023extracting, daras2023consistent}.

\begin{figure}[!ht]
    \centering
    
        \centering
        \makebox[0.18\linewidth]{\footnotesize \shortstack{\textbf{Training Image} \\ \textbf{}}}
        \makebox[0.39\linewidth]{\footnotesize \hspace{3.5cm} \shortstack{\textbf{\cite{wen2024detecting}} \\ \textbf{} } }
        \makebox[0.39\linewidth]{\footnotesize \hspace{2.5cm} \shortstack{\textbf{\cite{wen2024detecting}} \textbf{+ Ours}} }
    
    \vspace{0.1in}

    \begin{subfigure}{}
        \centering
        \includegraphics[scale=0.4]{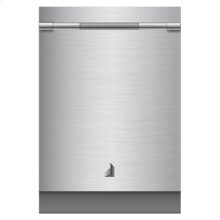}
    \end{subfigure}
    \hspace{2.5cm}
    \begin{subfigure}{}
        \centering
        \includegraphics[scale=0.35]{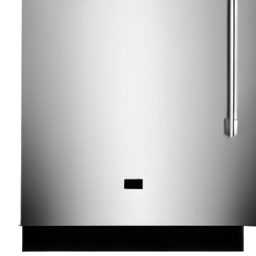}
    \end{subfigure}
    \hspace{2.5cm}
    \begin{subfigure}{}
        \centering
        \includegraphics[scale=0.35]{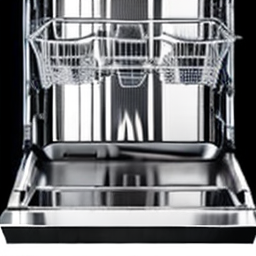}
    \end{subfigure}
    \par{"RISE 24" TriFecta Dishwasher}  

    \vspace{0.1in}

    \begin{subfigure}{}
        \includegraphics[scale=0.3]{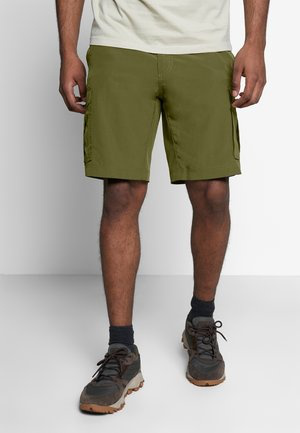}
    \end{subfigure}
    \hspace{2.3cm}
    \begin{subfigure}{}
        \includegraphics[scale=0.45]{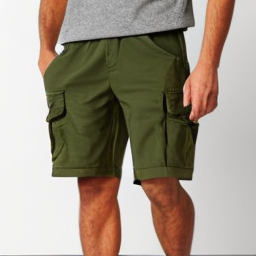}
    \end{subfigure}
    \hfill
    \begin{subfigure}{}
        \includegraphics[scale=0.45]{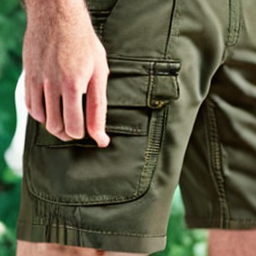}
    \end{subfigure}
    \par{CANYON CARGO - Outdoor shorts - dark moss}  

    \begin{subfigure}{}
        \includegraphics[scale=0.35]{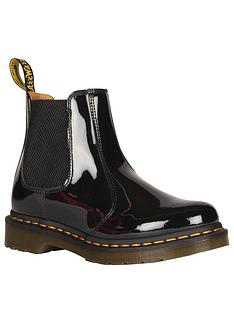}
    \end{subfigure}
    \hspace{2.5cm}
    \begin{subfigure}{}
        \includegraphics[scale=0.32]{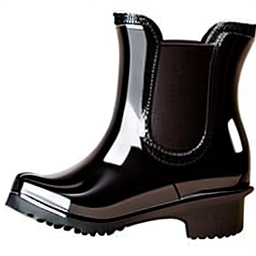}
    \end{subfigure}
    \hspace{2.65cm}
    \begin{subfigure}{}
        \includegraphics[scale=0.35]{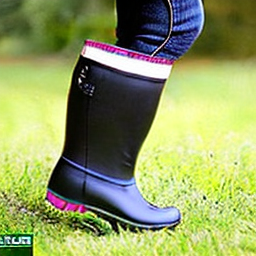}
    \end{subfigure}
    \par{Western Chief Down Hill Trot (Black) Women's Rain Boots}

    \caption{Qualitative results for reducing the memorization of Stable Diffusion 2. Combining our method with \cite{wen2024detecting} helps generate novel samples for the above prompts. See \cref{section:method} for our method and \cref{sec:text-conditioned-exp} for more details on the experiment.}
    \label{fig:prompt_grid}
\end{figure}

\paragraph{Related work.} Several methods have been proposed to reduce the memorization in diffusion models \cite{somepalli2023understanding, daras2023ambient, wen2024detecting, daras2024much, kazdan2024cpsample, chen2024towards, gu2023memorization, ren2024unveiling, hintersdorf2025finding, wu2024erasediff, liu2024iterative, ross2024geometric, wang2024evaluating, zhang2024wasserstein, jain2024classifier}.
A line of work proposes sampling adaptations that guide the generation process away from training points~\citep{kazdan2024cpsample, wen2024detecting, chen2024towards}. \cite{kulikov2023sinddm, wang2025sindiffusion} propose decreasing the receptive field of the generative model to avoid memorization. Another line of work corrupts the images \citep{daras2023ambient, daras2024consistent} or the text-embedding in text-conditioned image models \cite{somepalli2023understanding}. These methods, while effective in reducing memorization, often decrease the image generation quality. Feldman \cite{feldman2020does} theoretically showed strong trade-offs between memorization and generalization by showing that memorization is \emph{necessary} for (optimal) \textit{classification}. This raises the natural question of whether this trade-off also applies to \textit{generative modeling}.

\begin{figure}[!ht]
    \centering
    
    \vspace{0.3em}
    \textbf{Input images} \\[0.3em]
    \begin{minipage}{0.48\linewidth}
        \centering
        \textbf{$\sigma = 17$} \\[0.5em]
        \includegraphics[width=\linewidth]{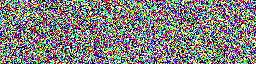} 
    \end{minipage}
    \hfill
    \begin{minipage}{0.48\linewidth}
        \centering
        \textbf{$\sigma = 8$} \\[0.5em]
        \includegraphics[width=\linewidth]{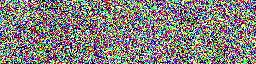} 
    \end{minipage}

    \vspace{0.3em}
    \textbf{Outputs of diffusion model trained on 52k images} \\[0.3em]
    \begin{minipage}{0.48\linewidth}
        \centering
        \includegraphics[width=\linewidth]{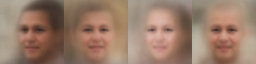} 
    \end{minipage}
    \hfill
    \begin{minipage}{0.48\linewidth}
        \centering
        \includegraphics[width=\linewidth]{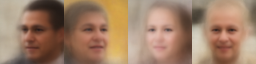} 
    \end{minipage}

    \vspace{0.8em}
    \textbf{Outputs of diffusion model trained on 300 images} \\[0.3em]
    \begin{minipage}{0.48\linewidth}
        \centering
        \includegraphics[width=\linewidth]{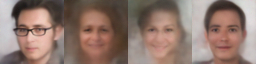} 
    \end{minipage}
    \hfill
    \begin{minipage}{0.48\linewidth}
        \centering
        \includegraphics[width=\linewidth]{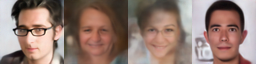} 
    \end{minipage}

    \vspace{0.8em}
    \textbf{Outputs of Ambient Diffusion trained on 300 images} \\[0.3em]
    \begin{minipage}{0.48\linewidth}
        \centering
        \includegraphics[width=\linewidth]{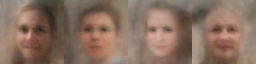} 
    \end{minipage}
    \hfill
    \begin{minipage}{0.48\linewidth}
        \centering
        \includegraphics[width=\linewidth]{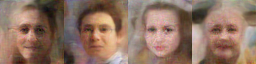} 
    \end{minipage}
    \caption{Comparison of denoised images under different noise levels and training conditions. Standard diffusion modeling leads to overconfident predictions (row 3) even for very highly noised inputs when it is trained on small datasets. Our algorithm (row 4), has a similar behavior (blurry outputs) to a model trained with significantly more data (row 1), indicating less memorization.}
\label{fig:denoising_comparison}
\end{figure}

The need for memorization in \citep{feldman2020does} is associated with the frequencies of different subpopulations (e.g., cats, dogs, etc.) that appear in the dataset. The key observation is that the distribution of the frequencies is usually \emph{heavy-tailed}~\cite{zhu2014capturing}, i.e., roughly speaking in a dataset of size $n$, there will be many classes with frequency around $1/n$. This means that the training algorithm will only observe a single representative from those subpopulations and cannot distinguish between the following two cases:\\
\textit{Case 1.} If the unique example comes from an extremely rare subpopulation (with frequency $\ll 1/n$), then memorizing it has no significant
benefits, and, \\
\textit{Case 2.} If the unique example comes from a subpopulation with $1/n$ frequency, then memorizing it will probably improve the accuracy on the entire subpopulation
and decrease the generalization error by $\Omega(1/n)$. Hence, the optimal classifier should memorize these unique examples to avoid paying Case 2 in the error.

The key assumption above seems to break when noise is added to the images. That is because different subpopulations start to merge and the heavy-tails of the weights' distribution disappear.  Interestingly, diffusion models learn the (score of the) distribution at different levels of noise.
This indicates that, in principle, it is feasible to avoid memorization in the high-noise regime (without sacrificing too much quality). Despite that, regular diffusion model training, e.g., the DDPM~\citep{ddpm} objective, results in score functions that have attractors around the training points, even for highly noisy inputs, as shown in Figure \ref{fig:denoising_comparison}.

The discussion above suggests that it should be possible to train high-quality diffusion models that do not memorize in the high-noise part of the diffusion. It has been empirically established that this part controls the structural information of the outputs and hence the diversity of the generated distribution~\citep{dieleman2024spectral, li2024critical}. To avoid memorization in the high-noise regime, we propose a simple, principled framework that trains the diffusion model only with noisy data at large noise scales. Our main contributions can be summarized as follows:

\paragraph{Our contributions:}

\begin{itemize}[noitemsep,topsep=0pt,parsep=0pt,partopsep=0pt]
    \item {We propose a simple framework to train diffusion models that achieve reduced memorization and high-quality sample generation even when trained on limited data.}
    \item We experimentally validate our approach on various datasets and data settings, showcasing significantly reduced memorization and improved generation quality compared to natural baselines, in both the unconditional and text-conditional settings \footnote{We open-source our code: \url{https://github.com/kulinshah98/memorization_noisy_data}}.
    \item On the theory side, we adapt the theoretical framework of \cite{feldman2020does} for studying memorization to diffusion models. Based on that, we argue about the necessity of memorizing the training set in different noise scales indicating that memorization is only essential at the low-noise regime.
\end{itemize}

\hspace{1cm}

{Concurrently with our work, there has been some theoretical progress on the question regarding the trade-offs between generation quality and memorization in diffusion models \cite{wu2025taking,chen2025interpolation}. The work of \cite{chen2025interpolation} studies how score smoothing in restricted theoretical settings can enable the denoising dynamics to produce distributions on the training data subspace without
fully memorizing the training set. On the other side, \cite{wu2025taking} studies the connection between the way the score functions are learned and the dynamics of SGD for appropriate learning rates. In particular, they show that when the noise level is sufficiently small and the learning rate sufficiently larger than that, the learned score function cannot
be arbitrarily close to the empirical optimal one (due to the instability of SGD). Both works are aligned with our empirical findings that memorization can, in general, be avoided in diffusion models.}

\section{Background and Related Work}
\label{sec:background}
\subsection{Diffusion Modeling}
The first step in diffusion modeling is to design a corruption process. For the ease of presentation, we focus on the widely used Variance Preserving (VP) corruption~\citep{ddpm, ncsnv3}. We define a sequence of increasing corruption levels indexed by $t\in [0, 1]$, with:
\begin{gather}
\label{eq:Forward}
X_t = \sqrt{1 - \sigma_t^2} X_0 + \sigma_t Z, \quad Z\sim \mathcal N(0, I_d),
\end{gather}
where the map $\sigma_t := \sigma(t)$ is the noise schedule and $X_0$ is drawn from the clean distribution $p_0$. We remark that our framework extends to other noise schedules, diffusion models~\citep{ncsn, bansal2022cold, karras2022elucidating, daras2023soft} and flow matching~\citep{lipman2022flow, liu2022flow,albergo2023stochastic}.

Our ultimate goal is to sample from the unknown distribution $p_0$.
The key idea behind diffusion modeling is to learn the score functions, defined as $\nabla \log p_t(\cdot)$, for different noise levels $t$, where $X_t \sim p_t$. The latter is related to the optimal denoiser $\E[x_0 | X_t = x_t]$ through Tweedie's formula~\citep{efron2011tweedie}:
\begin{gather}
    \nabla \log p_t(x_t) = \frac{\sqrt{1 - \sigma_t^2} \E[x_0 | X_t=x_t] - x_t}{\sigma_t^2}.
    \label{eq:tweedies}
\end{gather}
The conditional expectation is typically learned from the available data with supervised learning over some parametric class of models $\calH = \{h_\theta : \theta \in \Theta\}$, using the training objective:
\begin{gather}
    J(\theta) = \E_{x_0} \E_{(x_t, t) | x_0}\left[\| h_{\theta}(x_t, t) - x_0 \|^2\right].
    \label{eq:x0_pred_loss}
\end{gather}
Post training, the score function $\nabla \log p_t(x_t)$ is approximated by plugging the optimal solution of \eqref{eq:x0_pred_loss} to \eqref{eq:tweedies}.
Alternatively, one can train directly for the score function using the noise prediction loss~\citep{vincent2011connection, ddpm}:
\begin{gather}
    J(\theta) = \E_{x_0, x_t, t}\Bigg[ \Bigg\| s_{\theta}(x_t, t) - \frac{\sqrt{1 - \sigma_t^2} x_0 - x_t}{\sigma_t^2}\Bigg\|^2\Bigg].
    \label{eq:noise_pred_loss}
\end{gather}
Given access to the score function for different times $t$, one can sample from the distribution of $p_0$ by running the process~\citep{ncsnv3}:
\begin{gather}
    \mathrm{d}x = \left(-x -\frac{(\mathrm{d} \sigma_t / \mathrm{d} t)\sigma_t}{1 +\sigma_t^2}\nabla \log p_t(x_t) \right)\mathrm{d}t.
    \label{eq:deterministic_reverse}
\end{gather}

\subsection{Memorization in Diffusion Models}
\label{sec:background_ddpm_mem}
The first expectation of \eqref{eq:x0_pred_loss} is taken over the distribution of $x_0$. The underlying distribution of $x_0$ is continuous, but in practice we only optimize this objective over a finite distribution of training points. Prior work has shown that when the expectation is taken over an empirical distribution $\wh{p}_0$, the optimal score can be written in closed form~\citep{de2022convergence,scarvelis2023closed,biroli2024dynamical, kamb2024analytic, benton2024nearly}. Specifically, the optimal score for the empirical distribution, which corresponds to a finite amount of examples $S$, can be written as:

\vspace{-5pt}
\begin{gather}
    \wh{s}_*(x_t, t) = \frac{1}{\sigma_t^2} \frac{1} { \sum_{x_0 \in S}\mathcal N(x_t; \sqrt{1 - \sigma_t^2}x_0, \sigma_t I)} \nonumber
    \cdot \sum_{x_0 \in S}\eqnmarkbox[Plum]{pull}{(\sqrt{1- \sigma_t^2}x_0 - x_t)} \eqnmarkbox[RoyalBlue]{weighting}{\mathcal N(x_t; \sqrt{1 - \sigma_t^2}x_0, \sigma_t I)\,.}
    \nonumber 
\end{gather}
\annotate[yshift=-1.0em]{below,left}{pull}{attraction to $x_0$}
\annotate[yshift=-1.0em,xshift=-1em]{below,left}{weighting}{weight of attraction}

\smallskip 

Intuitively, each point $x_0$ in the finite sample $S$ (i.e., the empirical distribution $\wh{p}_0)$ is pulling the noisy iterate $x_t$ towards itself, where the weight of the pull depends on the distance of each training point to the noisy point. The above solution will lead to a diffusion model that only \emph{replicates} the training points during sampling~\citep{scarvelis2023closed,kamb2024analytic}. 
Hence, any potential creativity that is observed experimentally in diffusion models comes from the failure to perfectly optimize the training objective. 

\subsection{Ambient Score Matching}
\label{sec:ambient_score_matcing}

One way to mitigate memorization is to never see the training data. Recent techniques for training with corrupted data allow learning of the score function without ever seeing a clean image~\citep{kawar2023gsure,daras2023consistent,daras2024consistent,bai2024expectation,wang2024integrating,rozet2024learning}. Consider the case where we are given samples from a noisy distribution $p_{t_{\mathrm{n}}}$ (where $t_{\mathrm{n}}$ stands for $t$-nature) and we desire to learn the score at time $t$ for $t > t_{\mathrm{n}}$. The Ambient Score Matching loss~\citep{daras2024consistent}, defined as:

\begin{gather}\label{eq:ambient_loss_vp}
    J_{\mathrm{ambient}}(\theta) = \mathbb E_{x_{t_{\mathrm{n}}}} \mathbb E_{(x_t, t) | x_{t_{\mathrm{n}}}} \bigg[ \bigg \| \frac {\sigma_t^2 - \sigma_{t_{\mathrm{n}}}^2}{\sigma_t^2 \sqrt{1 -\sigma_{t_{\mathrm{n}}}^2}}h_{\theta}(x_t, t) +  \frac{\sigma_{t_{\mathrm{n}}}^2}{\sigma_t^2}\sqrt{\frac{1 - \sigma_t^2}{1-\sigma_{t_{\mathrm{n}}}^2}}x_t - x_{t_{\mathrm{n}}} \bigg \|^2\bigg],
\end{gather}
can learn the conditional expectation $\E[x_0 | x_t]$ (similar to Equation \eqref{eq:x0_pred_loss}) without ever looking at clean data from $p_0$.
The intuition behind this objective is that to denoise the noisy sample $x_t$, we need to find the direction of the noise and then rescale it appropriately. The former can be found by denoising to an intermediate level $t_{\mathrm{n}}$ and the rescaling ensures that we denoise all the way to the level of clean images. Once the conditional expectation $\E[x_0 |x_t]$ is recovered, we get the score by using Tweedie's Formula.

We remark that this objective can only be used for $t > t_{\mathrm{n}}$. While there are ways to train for $t \leq t_{\mathrm{n}}$ without any clean data (e.g., see \citep{daras2023consistent,daras2024consistent, bai2024expectation,wang2024integrating,rozet2024learning}), this leads to performance deterioration unless a massive noisy dataset is available~\citep{daras2024much}. For what follows, we refer to Eq.\eqref{eq:x0_pred_loss} as the DDPM training objective and to Eq.\eqref{eq:ambient_loss_vp} as the Ambient Diffusion training objective for noisy data.

\vspace{-5pt}

\section{Method}
\label{section:method}
We are now ready to present our framework for training diffusion models with limited data that will allow creativity without sacrificing quality. Our key observation is that the diversity of the generated images is controlled in the high-noise part of the diffusion trajectory~\citep{dieleman2024spectral, li2024critical}. Hence, if we can avoid memorization in this regime, it is highly unlikely that we will replicate training examples at inference time, even if we memorize at the low-noise part. Our training algorithm can ``copy'' details from the training samples and still produce diverse outputs.

\begin{algorithm*}[!ht]
\caption{Algorithm for training diffusion models using limited data.}
\begin{algorithmic}[1]
\Require untrained network $h_{\theta}$, set of samples $S$, noise level $t_{\mathrm{n}}$, noise scheduling $\sigma(t)$, batch size $B$, diffusion time $T$
\State $S_{{t_{\mathrm{n}}}} \gets \{\sqrt{1 - \sigma_{t_{\mathrm{n}}}^2}x_0^{(i)} + \sigma_{t_{\mathrm{n}}} \epsilon^{(i)} | x_0^{(i)} \in S, \epsilon^{(i)} \sim \mathcal N(0, I_d)\}$ \Comment{Noise the training set at level $t_{\mathrm{n}}$.}
\While{not converged}
    \State Form a batch $\mathcal{B}$ of size $B$ uniformly sampled from $S \cup S_{{t_{\mathrm{n}}}}$
    \State $\text{loss} \gets 0$ \Comment{Initialize loss.}
    \For{each sample $x \in \mathcal{B}$}
        \State $\epsilon \sim \mathcal N( 0, I)$ \Comment{Sample noise.}
        \If{$x \in S_{{t_{\mathrm{n}}}}$}
            \State $x_{t_{\mathrm{n}}} \gets x$    \Comment{We are dealing with a noisy sample.}
            \State $t \sim \mathcal{U}(t_{\mathrm{n}}, T)$ \Comment{Sample diffusion time for noisy sample.}
            \State $x_t \gets \sqrt{\frac{1 - \sigma_t^2}{1 - \sigma_{t_{\mathrm{n}}}^2}}x_{t_{\mathrm{n}}} + \sqrt{\frac{\sigma_t^2- \sigma_{t_{\mathrm{n}}}^2}{1 - \sigma_{t_{\mathrm{n}}}^2}}\epsilon$ \Comment{Add additional noise.}
            \State loss $\gets$ loss + $\left\| \frac{\sigma_t^2 - \sigma_{t_{\mathrm{n}}}^2}{\sigma_t^2 \sqrt{1 -\sigma_{t_{\mathrm{n}}}^2}}h_{\theta}(x_t, t) + \frac{\sigma_{t_{\mathrm{n}}}^2}{\sigma_t^2}\sqrt{\frac{1 - \sigma_t^2}{1-\sigma_{t_{\mathrm{n}}}^2}}x_t - x_{t_{\mathrm{n}}}\right\|^2$ \Comment{Ambient Score Matching.}
        \Else
            \State $x_0 \gets x$ \Comment{We are dealing with a clean sample.}
            \State $t \sim \mathcal{U}(0, t_{\mathrm{n}})$ \Comment{Sample diffusion time for clean sample.}
            \State $x_t \gets \sqrt{1 - \sigma_t^2} x_0 + \sigma_t\epsilon$ \Comment{Add noise.}
            \State loss $\gets$ loss + $\| h_{\theta}(x_t, t) - x_0\|^2$ \Comment{Regular Denoising Score Matching.}
        \EndIf
    \EndFor
    \State loss $\gets \frac{\mathrm{loss}}{B}$ \Comment{Compute average loss.}
    \State $\theta \gets \theta - \eta \nabla_{\theta} \text{loss}$ \Comment{Update network parameters via backpropagation.}
\EndWhile
\end{algorithmic}
\label{alg:training_algorithm}
\end{algorithm*}

Our training framework is presented in \Cref{alg:training_algorithm}. It works by splitting the diffusion training time into two parts, $t \leq t_{\mathrm{n}}$ and $t > t_{\mathrm{n}}$, where $t_{\mathrm{n}}$\footnote{We often use the symbol $n$ for sample size; the notation $t_{\mathrm{n}}$ is unrelated to the size $n$.} ($t$-nature) is a free parameter to be controlled. For the regime, $t \leq t_{\mathrm{n}}$, we train with the regular diffusion training objective, and (assuming perfect optimization) we know the exact score, which is as given in Section \ref{sec:background_ddpm_mem}. To train for $t > t_{\mathrm{n}}$, we first create the set $S_{t_{\mathrm{n}}}$ which has \textit{one} noisy version of each image in the training set. Then, we train using the set $S_{t_{\mathrm{n}}}$ and the Ambient Score Matching loss introduced in Section \ref{sec:ambient_score_matcing}.

It is useful to build some intuition about why this algorithm avoids memorization and at the same time produces high-quality outputs. Regarding memorization: 1) the learned score function for times $t \geq t_{\mathrm{n}}$ does not point directly towards the training points since Ambient Diffusion aims to predict the noisy points (recall that the optimal DDPM solution points towards scalings of the training points) and 2) the noisy versions $x_{t_{\mathrm{n}}}$ are harder to memorize than $x_0$, since noise is not compressible. At the same time, if the dataset size were to grow to infinity, both our algorithm and the standard diffusion objective would find the true solution: the score of the underlying continuous distribution. In fact, \Cref{alg:training_algorithm} learns the same score function for times $t\leq t_{\mathrm{n}}$ as DDPM. This contributes to generating samples with high-quality details, copied from the training set. 

We want to note that one of the main novelty of our method lies in how we combine the DDPM objective with the ambient matching loss, as well as in the procedure used to construct the noisy data on which the ambient matching loss is applied. For times $t\geq t_n$, we replace the original dataset with a noisy version of it where each datapoint has been replaced (once) with a noisy realization. This corresponds to the creation of the set $S_{t_n}$ at step 1 of Algorithm 1. The creation of $S_{t_n}$ once before the launch of the training, instead of recreating at each epoch, ensures that there is reduced information about the clean distribution for the learning that happens at times $t\geq t_n$. This step in the algorithm leads to decreased memorization.

\section{Theoretical Results}
\label{section:theory}

In this section, we attempt to formalize the intuition of why our proposed algorithm reduces memorization of the dataset.
We start by showing the following Lemma that characterizes the sampling distribution of our algorithm for $t=t_{\mathrm{n}}$.

\begin{lemma}
[Ambient Diffusion solution at $t_{\mathrm{n}}$]
Let $S_{t_{\mathrm{n}}}$ be the noisy training set as in L1 of \Cref{alg:training_algorithm}. For a fixed $S_{t_{\mathrm{n}}}$, let $\hat p_{t_{\mathrm{n}}}$ be the distribution at time $t=t_{\mathrm{n}}$ that arises by using the {optimal} score in the reverse process of Eq.\eqref{eq:deterministic_reverse} initialized at $\mathcal N(0, I_d)$. It holds that $
\hat p_{t_{\mathrm{n}}} = \frac{1}{|S_{t_{\mathrm{n}}}|} \sum_{x_{t_{\mathrm{n}}} \in S_{t_{\mathrm{n}}}} \delta(x - x_{t_{\mathrm{n}}})$.
\label{lemma:ambient_diffusion_sampling_dist}
\end{lemma}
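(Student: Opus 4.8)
The claim is that the diffusion model trained by \Cref{alg:training_algorithm}, when run as a reverse ODE from $t=T$ down to $t=t_{\mathrm{n}}$, produces exactly the empirical distribution supported on the noisy set $S_{t_{\mathrm{n}}}$. The natural strategy is: (i) identify the score function that \Cref{alg:training_algorithm} learns in the regime $t > t_{\mathrm{n}}$ under the assumption of perfect optimization; (ii) recognize that this score is precisely the score of a genuine diffusion (sub-)process whose terminal marginal at time $t_{\mathrm{n}}$ is the empirical measure $\hat p_{t_{\mathrm{n}}} := \frac{1}{|S_{t_{\mathrm{n}}}|}\sum_{x_{t_{\mathrm{n}}}\in S_{t_{\mathrm{n}}}} \delta(x-x_{t_{\mathrm{n}}})$; (iii) invoke the fact that the probability-flow ODE of Eq.\eqref{eq:deterministic_reverse}, initialized at the correct Gaussian prior $\mathcal N(0,I_d)$ and driven by the exact score, transports the prior marginal back to the marginal at the target time. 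Step (iii) is standard (it is the defining property of the probability-flow ODE), so the work is really in steps (i) and (ii).

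For step (i): by the analysis of Ambient Score Matching (Eq.\eqref{eq:ambient_loss_vp} and the surrounding discussion, following \cite{daras2024consistent}), the minimizer of $J_{\mathrm{ambient}}$ over a sufficiently expressive class recovers $h_\theta(x_t,t) = \E[x_0 \mid x_t]$, where the conditional law is taken with respect to the forward process that first draws $x_0$ from the \emph{empirical} clean distribution $\hat p_0 = \frac{1}{|S|}\sum_{x_0\in S}\delta(x-x_0)$, then forms $x_{t_{\mathrm{n}}} = \sqrt{1-\sigma_{t_{\mathrm{n}}}^2}x_0 + \sigma_{t_{\mathrm{n}}}\epsilon$, and then further noises to level $t$. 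Here is the subtlety to handle carefully: \Cref{alg:training_algorithm} does \emph{not} resample the noise $\epsilon^{(i)}$ defining $S_{t_{\mathrm{n}}}$ — the set $S_{t_{\mathrm{n}}}$ is frozen at line~1. So, conditioned on the frozen $S_{t_{\mathrm{n}}}$, the effective clean-data distribution seen by the $t>t_{\mathrm{n}}$ branch is the empirical measure on the fixed points $\{x_{t_{\mathrm{n}}}^{(i)}\}$ treated as the ``level-$t_{\mathrm{n}}$ data'', and the additional noising $x_t \gets \sqrt{\tfrac{1-\sigma_t^2}{1-\sigma_{t_{\mathrm{n}}}^2}}x_{t_{\mathrm{n}}} + \sqrt{\tfrac{\sigma_t^2-\sigma_{t_{\mathrm{n}}}^2}{1-\sigma_{t_{\mathrm{n}}}^2}}\epsilon$ is exactly the VP forward kernel from time $t_{\mathrm{n}}$ to time $t$. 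Plugging the recovered conditional expectation into Tweedie's formula (Eq.\eqref{eq:tweedies}), the learned score $\nabla\log \hat p_t$ for $t\in(t_{\mathrm{n}},T]$ coincides with the score of the VP diffusion whose marginal at time $t_{\mathrm{n}}$ is exactly $\hat p_{t_{\mathrm{n}}}$, i.e. $\hat p_t$ is the Gaussian convolution of $\hat p_{t_{\mathrm{n}}}$ transported forward under Eq.\eqref{eq:Forward}-style dynamics started at $t_{\mathrm{n}}$.

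For step (ii)--(iii): once we know $\nabla\log\hat p_t$ for all $t\in[t_{\mathrm{n}},T]$ is the exact score of this VP process, Eq.\eqref{eq:deterministic_reverse} is its probability-flow ODE, and the standard Fokker–Planck / continuity-equation argument shows that the law of the ODE solution at time $s$ equals $\hat p_s$ for every $s\in[t_{\mathrm{n}},T]$, provided it is initialized from $\hat p_T$ at time $T$. At the top of the trajectory the prior used is $\mathcal N(0,I_d)$; here one either (a) takes $T$ large enough / uses the VP convention that $\sigma_T \to 1$ so that $\hat p_T$ is (exactly, in the idealized limit, or to arbitrary accuracy) $\mathcal N(0,I_d)$, matching the initialization in the Lemma's hypothesis, or (b) simply adopts the same idealization already implicit in the reverse-process setup of Section~\ref{sec:background}. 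Running the ODE down to $t=t_{\mathrm{n}}$ then yields law $=\hat p_{t_{\mathrm{n}}} = \frac{1}{|S_{t_{\mathrm{n}}}|}\sum_{x_{t_{\mathrm{n}}}\in S_{t_{\mathrm{n}}}}\delta(x-x_{t_{\mathrm{n}}})$, which is the claim.

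\textbf{Main obstacle.} The delicate point is not any single computation but making precise the claim in step (i) that, conditioned on the frozen $S_{t_{\mathrm{n}}}$, the $t>t_{\mathrm{n}}$ branch of \Cref{alg:training_algorithm} is \emph{exactly} Ambient Score Matching for the empirical ``level-$t_{\mathrm{n}}$ distribution'' $\hat p_{t_{\mathrm{n}}}$ — i.e. that freezing the noise (rather than resampling $\epsilon^{(i)}$ each step) is what makes the recovered object $\E[x_0\mid x_t]$ collapse to a denoiser pointing at the discrete atoms of $S_{t_{\mathrm{n}}}$, so that $\hat p_{t_{\mathrm{n}}}$ is genuinely atomic rather than a smoothed version of $\hat p_0$. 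A secondary, more cosmetic issue is the $\mathcal N(0,I_d)$-versus-$\hat p_T$ boundary condition, which requires either the large-$T$ / $\sigma_T\to1$ convention or an explicit statement that we work in the idealized-initialization regime; I would dispatch this with one sentence rather than a quantitative bound.
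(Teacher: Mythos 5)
Your proposal follows essentially the same route as the paper's proof: condition on the frozen noisy set $S_{t_{\mathrm{n}}}$, observe that the $t>t_{\mathrm{n}}$ branch (under perfect optimization of the Ambient Score Matching objective) yields exactly the score of the VP forward process started from the empirical measure on $S_{t_{\mathrm{n}}}$ at time $t_{\mathrm{n}}$ (a Gaussian mixture centered at the noisy atoms for $t>t_{\mathrm{n}}$), and then invoke the fact that the probability-flow ODE of Eq.\eqref{eq:deterministic_reverse} reverses this forward process, so its marginal at $t=t_{\mathrm{n}}$ is $\frac{1}{|S_{t_{\mathrm{n}}}|}\sum_{x_{t_{\mathrm{n}}}\in S_{t_{\mathrm{n}}}}\delta(x-x_{t_{\mathrm{n}}})$. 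Your extra care about the frozen noise and the $\mathcal N(0,I_d)$-versus-$\hat p_T$ initialization only makes explicit idealizations the paper also adopts, so the argument is correct and matches the paper's.
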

{We mention that the above result is not about learning or estimating the distribution at time $t_{\mathrm{n}}$ but focuses on its structure if we were running the inverse process with the true score function.}
For the proof, we refer to \Cref{app:proof2}. 
This Lemma extends the result of Kamb et al. \cite{kamb2024analytic} from the standard diffusion objective of Eq.\eqref{eq:x0_pred_loss} to the training objective of Eq.\eqref{eq:ambient_loss_vp}. 



Given this result, we can see that the optimal DDPM solution at time $t_n$ leaks more information compared to the optimal ambient solution. A particular illustration of this is to compare the optimal solutions when the input is a single point $x_0$. Let us consider a set $A$ of size $m$ generated i.i.d. by $\wh{p}_{t_{\mathrm{n}}}$ (optimal ambient solution at time $t_{\mathrm{n}}$ with input $x_0$) and a set $D$ of size $m$ generated i.i.d. by $\wh q_{t_{\mathrm{n}}}$ (optimal DDPM solution at time $t_{\mathrm{n}}$ with input $x_0$). Then given the set $D$, one can get an estimator for $x_0$ with error $\poly(1/m)$, while for the set $A$ provided by the Ambient Diffusion solution, no consistent estimation is possible. 

The above indicate that Ambient Diffusion tends to memorize the noisy images. Our justification for the improved performance in practice is that memorizing noise is much harder since noise is not compressible. Even if the noisy images are perfectly memorized, they do not contain enough information to perfectly recover the training set (as shown above) and hence creativity will emerge. 
A possible conjecture is that under reasonable smoothness assumptions the concatenation of Ambient Diffusion (i.e., of a non-memorized trajectory (up to $t_{\mathrm{n}}$)) and of DDPM (i.e., of a memorized one (from $t_{\mathrm{n}}$ to 0)) will not lead to memorized outputs. 
Under this conjecture, controlling the high noise case is all you need to decrease memorization, and this is what our algorithm achieves. Showing non-trivial upper/lower bounds between the distribution learned by our algorithm and the distribution learned by DDPM is an interesting theoretical problem that remains to be addressed.

\vspace{-3pt}

\subsection{Connections to
\texorpdfstring{Feldman~\cite{feldman2020does}}
{Feldman (2020)}}
In the previous section, we discussed ways to reduce the memorization. In this section, we consider what is the price to pay for reduced memorization, i.e., we analyze the trade-off between memorization and fidelity.

While there is a significant amount of empirical research on connections between memorization and generation for diffusion models,
our rigorous theoretical understanding is still lacking. 
In terms of theory, there are many works studying memorization-generalization trade-offs for machine learning algorithms \cite{feldman2020does,feldman2020neural,brown2021memorization,brown2022strong,cheng2022memorize,livni2024information,attias2024information} with several connections to differential privacy and stability in learning \cite{bousquet2002stability,xu2017information,bassily2018learners, russo2019much,feldman2020does,
steinke2020reasoning}. 
Our work studies this trade-off in diffusion models, inspired by the work of \cite{feldman2020does}.

\paragraph{Section Overview.} 
We study the memorization-generalization trade-offs in the diffusion models when the data distribution is modeled as a mixture \cite{kulin_gmms, chen2024learninggeneralgaussianmixtures, gatmiry2024learning}. In Section \ref{sec:subpopulation_model}, we define the distribution to be learned as a mixture of distributions of subpopulations (e.g., dogs, cats, etc.) with unknown mixing weights. This distribution is learned given a finite set $Z$ of size $n$ and we are interested in the generalization error of the trained model (at some fixed noise scale $\sigma_t)$. In Theorem \ref{thm:Informal} we express this generalization error into two terms, one of which is the error of the algorithm for populations that are seen only once during training. We consider that the trained model ``memorizes'' when the error of these rare examples is small. 
Due to the error decomposition, generalization is related to the memorization error and its multiplying constant $\tau_1$ that appears in Theorem \ref{thm:Informal}. In Section \ref{sec:tau_behavior} we analyze how this constant changes for different noise levels under the assumption of \cite{zhu2014capturing,feldman2020does} that the mixing weights are heavy-tailed. We argue that when the noise level is small, $\tau_1$ is large and due to the decomposition, the only way to achieve good generalization is to memorize. For high noise levels, $\tau_1$ becomes smaller and hence it is in principle possible to achieve generalization without excessive memorization.


\subsubsection{%
\texorpdfstring{Subpopulations Model of Feldman~\cite{feldman2020does}}
{Subpopulations Model of Feldman (2020)}}
\label{sec:subpopulation_model}
Let us consider a continuous data domain $X \subseteq \R^d$ (e.g., images). We model the data distribution as a mixture of $N$ fixed distributions $M_1,...,M_N$, where each component corresponds to a subpopulation (e.g., dogs, cats, etc.). For simplicity, we follow Feldman \cite{feldman2020does} and assume that each component $M_i$ has disjoint support $X_i$ (this can be relaxed, see \Cref{remark:GMM}). Without loss of generality, let $X = \cup_i X_i.$ 

We will now describe the procedure of \cite{feldman2020does} that assigns frequencies to each subpopulation of the mixture.

\begin{enumerate}[noitemsep,topsep=0pt,parsep=0pt,partopsep=0pt]
    \item Consider a list of frequencies $\pi = (\pi_1, \pi_2, ..., \pi_N)$.
    \item For each component $i \in [N]$ of the mixture, select randomly and independently an element $p_i$ from $\pi$.
    \item Finally, to obtain the mixing weights, we normalize the elements $p_1,...,p_N$, i.e., the weight of component $i$ is $D_i = \frac{p_i}{\sum_{j \in [N]} p_j}$.
\end{enumerate}

We denote by $\calD_\pi$ the distribution over the mixing coefficients tuple $(D_1,...,D_N)$. A sample $D \sim \calD_\pi$ is just a list of the normalized frequencies of the $N$ subpopulations. 
If $D \sim \calD_\pi$, then we can define the true mixture as

\[
M_D(x) = \sum_{i \in [N]} \eqnmarkbox[Plum]{mixingweight}{D_i} \eqnmarkbox[RoyalBlue]{disti}{{M_i(x)}}\,.
\]
\annotate[yshift=0.2em]{above,left}{mixingweight}{mixing weight of class $i$}
\annotate[yshift=0.2em,xshift=-1.5em]{above,right}{disti}{distribution of class $i$}

The above random distribution corresponds to the subpopulations model introduced by Feldman \cite{feldman2020does}. 

\subsubsection{Adaptation to Diffusion}
As explained in the Background Section \ref{sec:background}, one way to train a generative model in order to generate from the target $M_D$ is to estimate the score function $\nabla_x \log M_{D_t}$ for all levels of noise indexed by $t$. For the analysis of this Section, we consider the case of a single fixed $t$.
We define learning algorithms $A$ as (potentially randomized) mappings from datasets $Z$ to {score functions} $s_\theta \sim A(Z).$ 

As in 
Feldman \cite{feldman2020does},
we are interested about the expected error of $A$ conditioned on dataset being equal to $Z \in X^n$ 
 as
\[
\overline{\mathrm{err}}(\pi, A | Z)
=
\E_{D \sim \calD_\pi(\cdot|Z)} \E_{s_\theta \sim A(Z)} \mathrm{err}_{M_D}(s_\theta)\,,
\]
{where $D \sim \calD_\pi$ is a 
(random) collection of mixing weights
and $\mathrm{err}_{M_D}(s_\theta)=\E_{x_0 \sim M_D} L(s_\theta; x_0)$
for some loss function $L$ is the expected loss of the score function $s_\theta$ under the true population $M_D$.  The results we will present shortly are agnostic to the choice of $L$, but the reader should think of $L$ as the noise prediction loss used in \eqref{eq:noise_pred_loss} for a fixed time $t$.

The quantity $\overline{\mathrm{err}}(\pi, A | Z)$ measures the generalization error of the score function of the learning algorithm $A$ {conditional on the training set being $Z$}. We will show that the population loss of an algorithm given a dataset $Z$ is at least: 
\begin{enumerate}[noitemsep,topsep=0pt,parsep=0pt,partopsep=0pt]
    \item its loss on the \emph{unseen} part of the domain, i.e., the population loss in $X \setminus Z$ plus
    \item its loss on the elements of $Z$ that belong to subpopulations that are represented only \emph{once} in $Z$ (i.e., the dataset contains a single image of a dog or a single image of a car). This loss, denoted by $\mathrm{err}_Z(A,1)$, is scaled up by a coefficient $\tau_1$, which expresses the ''likelihood'' of having such subpopulations.
\end{enumerate}
Typically, we define:
\[
\tau_1 = \frac{\E_{\alpha \sim \overline{\pi}}[\alpha^{2}(1-\alpha)^{n-1}]}{\E_{\alpha  \sim \overline{\pi}}[\alpha (1-\alpha)^{n-1}]}\,,
\]
where $\overline{\pi}$ is the marginal distribution $\overline{\pi}(a) = \Pr_{D}[D_i = a]$. Note that, because the random process of picking the mixing weights is run independently for any $i \in [N]$, the marginal is the same across different $i$'s (and hence we omit the index $i$ from $\overline{\pi})$. We are now ready to present our result.
\begin{theorem}
[Informal, see \Cref{lemma:mainFeldman}]
\label{thm:Informal}
It holds that 
\[
\overline{\mathrm{err}}(\pi, A | Z) \geq \overline{\mathrm{err}}_{\mathrm{unseen}}(\pi, A | Z) + \tau_1 \cdot \mathrm{err}_Z(A,1)\,.
\]
\end{theorem}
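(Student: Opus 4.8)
The plan is to follow the error‑decomposition template of Feldman~\cite{feldman2020does}, now with the score‑matching loss $L$ (assumed, as for the noise‑prediction loss, to be nonnegative) in place of classification error.

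\textbf{Step 1 (decompose the population error over subpopulations).} First I would exploit that the components have disjoint supports $X_i$: for any score function $s_\theta$ and any mixing weights $D=(D_1,\dots,D_N)$,
\[
\mathrm{err}_{M_D}(s_\theta)=\E_{x_0\sim M_D}L(s_\theta;x_0)=\sum_{i\in[N]}D_i\,\mathrm{err}_{M_i}(s_\theta),\qquad \mathrm{err}_{M_i}(s_\theta):=\E_{x_0\sim M_i}L(s_\theta;x_0)\ge 0.
\]
Since $A$ sees only $Z$, the draw $s_\theta\sim A(Z)$ is independent of $D$ given $Z$, so averaging over $s_\theta\sim A(Z)$ and $D\sim\calD_\pi(\cdot\mid Z)$ and using linearity gives $\overline{\mathrm{err}}(\pi,A\mid Z)=\sum_{i\in[N]}\E_{D\sim\calD_\pi(\cdot\mid Z)}[D_i]\cdot\E_{s_\theta\sim A(Z)}[\mathrm{err}_{M_i}(s_\theta)]$.

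\textbf{Step 2 (split by multiplicity in $Z$; discard over‑represented classes).} By disjointness each point of $Z$ lies in exactly one $X_i$; letting $\ell_i$ count the points of $Z$ in $X_i$, I would partition $[N]=\mathcal U\cup\mathcal S_1\cup\mathcal S_{\ge 2}$ by $\ell_i=0$, $\ell_i=1$, $\ell_i\ge 2$, set $\overline{\mathrm{err}}_{\mathrm{unseen}}(\pi,A\mid Z):=\sum_{i\in\mathcal U}\E_D[D_i\mid Z]\,\E_{s_\theta}[\mathrm{err}_{M_i}(s_\theta)]$ and $\mathrm{err}_Z(A,1):=\sum_{i\in\mathcal S_1}\E_{s_\theta}[\mathrm{err}_{M_i}(s_\theta)]$, and drop the nonnegative $\mathcal S_{\ge 2}$ block. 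Using $\E_{s_\theta}[\mathrm{err}_{M_i}(s_\theta)]\ge 0$, the theorem then reduces to the pointwise posterior bound $\E_{D\sim\calD_\pi(\cdot\mid Z)}[D_i]\ge\tau_1$ for every once‑seen index $i\in\mathcal S_1$.

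\textbf{Step 3 (the posterior computation --- the crux).} Fix $i\in\mathcal S_1$. The membership pattern of $Z$ is a sufficient statistic for $D$, so Bayes' rule gives $\calD_\pi(D\mid Z)\propto\calD_\pi(D)\prod_j D_j^{\ell_j}$, and hence $\E_D[D_i\mid Z]$ is a ratio of moments of $\calD_\pi$. Writing $D_j=p_j/P$ with $p_1,\dots,p_N$ i.i.d.\ from the list $\pi$ and $P=\sum_j p_j$, conditioning on $(p_j)_{j\neq i}$ and integrating out the single coordinate $p_i$ collapses the ratio to a one‑dimensional expression against the marginal $\overline{\pi}$, in which the factor $(1-\alpha)^{n-1}$ appears as the chance that none of the other $n-1$ samples also hit subpopulation $i$. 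In Feldman's clean model this ratio is exactly $\tau_1=\E_{\alpha\sim\overline{\pi}}[\alpha^2(1-\alpha)^{n-1}]\big/\E_{\alpha\sim\overline{\pi}}[\alpha(1-\alpha)^{n-1}]$; more generally I would argue --- via concentration of the normalizer $P$, or an association inequality showing the coupling is in the favorable direction --- that the residual $P^{-n}$ factors only increase the ratio, yielding $\E_D[D_i\mid Z]\ge\tau_1$. Plugging this into Steps 1--2 completes the proof.

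\textbf{Expected main obstacle.} Steps 1--2 are bookkeeping with linearity of expectation and nonnegativity of $L$; the real work is Step 3. The global normalization $P=\sum_j p_j$ in $D_j=p_j/P$ couples all $N$ coordinates, so the posterior does not factor and the reduction to the one‑dimensional ratio defining $\tau_1$ is not automatic --- controlling this coupling, exactly as in Feldman~\cite{feldman2020does}, is the technical heart of the argument, and is also why the rigorous statement in \Cref{lemma:mainFeldman} is somewhat more delicate (e.g.\ regarding whether the bound is per‑$Z$ or in expectation over the dataset) than the informal version stated above.
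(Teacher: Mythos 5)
Your overall route is the same as the paper's: use the disjoint supports to decompose the population error, use that $s_\theta \sim A(Z)$ is independent of $D$ given $Z$, split according to the multiplicity of each subpopulation in $Z$, and reduce everything to the posterior mean $\E_{D\sim\calD_\pi(\cdot\mid Z)}[D_i]$ for once-represented subpopulations. Two caveats, one definitional and one substantive. The definitional one: your terms do not match the paper's. In \Cref{lemma:mainFeldman} the quantity $\mathrm{errn}_Z(A,1)$ is the loss evaluated \emph{only at the training points} of once-represented subpopulations (weighted by $M_{i_x}(x)$), and the unseen term is the loss on all of $X\setminus Z$ (including non-training points of seen subpopulations); you instead place the entire population error of the singleton subpopulations into the second term and shrink the unseen term to wholly unseen subpopulations. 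Your decomposition is a valid alternative lower bound, but it is not literally the statement the paper formalizes (neither bound implies the other), so you should either align the definitions or say explicitly which inequality you are proving.

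The substantive issue is Step 3, which you correctly identify as the crux but do not prove. The paper does not re-derive this posterior computation: it invokes Lemma 2.1 of Feldman \cite{feldman2020does}, which gives the exact identity $\E_{D\sim\calD_\pi(\cdot\mid Z)}[D_{i_x}]=\tau_\ell$ for a subpopulation with exactly $\ell$ representatives --- an equality for every $\ell$, which is why \Cref{lemma:mainFeldman} is stated as an equality (with all terms $\tau_\ell\cdot\mathrm{errn}_Z(A,\ell)$) rather than an inequality. You instead assert $\E_D[D_i\mid Z]\ge\tau_1$ and propose to tame the coupling induced by the normalizer $P=\sum_j p_j$ via concentration of $P$ or an association inequality asserting that the residual factors ``only increase the ratio''; neither argument is carried out, and that monotonicity claim is not obviously true, so as written the technical heart of the proof is missing. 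The repair is straightforward: import the exact posterior identity from Feldman's Lemma 2.1, as the paper's proof in \Cref{app:proof} does, after which your Steps 1--2 (with the definitions aligned to the paper's) complete the argument.
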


\vspace{-3pt}

The above result can be extended to subpopulations represented by $2$ or more examples in $Z$ (see \Cref{app:Feldman}). The above inequality relates the population error of the model with its loss on some parts of the training set. The crucial parameter that relates the two quantities is the coefficient $\tau_1$. If the coefficient $\tau_1$ is large, it means that if the model does not fit the  ''rare examples'' of the dataset, it will have to pay roughly $\tau_1$ in the generalization error. As shown by \cite{feldman2020does}, $\tau_1$ is controlled by how much heavy-tailed is the distribution of the frequencies of the mixture model. This is the topic of the next section, where we also investigate the effect of adding noise to the training set.

\subsubsection{Heavy Tails and the Role of Noise}
\label{sec:tau_behavior}
In this section, we are going to formally explain what it means for the frequencies of the original dataset to be heavy-tailed \citep{zhu2014capturing,feldman2020does}. This heavy-tailed structure will then allow us to control the generalization error in \Cref{thm:Informal}.  We will be interested in subpopulations
that have only one representative in the training set $Z$ (these are the examples that will cost roughly $\tau_1$ in the error of \Cref{thm:Informal}).
We will refer to them as \emph{single} subpopulations.
For this to happen given that $|Z| = n$, it should be roughly speaking the case where some frequencies $D_i$ are of order $1/n$. The quantity that controls how many of the frequencies $D_i$ will be of order $1/n$ is the mass that the distribution $\overline{\pi}(a) = \Pr_{D}[D_i = a]$ assigns to the interval $[1/(2n), 1/n]$. Typically, we will call a list of frequencies $\pi$ \emph{heavy-tailed} if
\begin{equation}
        \label{HeavyTail}
        \mathrm{weight}\left(\overline{\pi}, \left[\frac{1}{2n}, 1/n\right]\right) = \Omega(1)\,.
\end{equation}
In words, there should be a constant number of subpopulations with frequencies of order $O(1/n)$. This definition is important because it can then lower bound the value $\tau_1$ in \Cref{thm:Informal} and hence it can lower bound the generalization loss of not fitting single subpopulations. 
\begin{lemma}
[Informal, see \Cref{lemma:T1} and Lemma 2.6 in \cite{feldman2020does}]
Consider a dataset of size $n$ and
assume that $\pi$ is heavy-tailed, as in \eqref{HeavyTail}. Then $\tau_1 = \Omega(1/n)$.
\label{T1:lower}
\end{lemma}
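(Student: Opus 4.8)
The plan is to bound $\tau_1$ from below by exploiting the heavy-tail assumption~\eqref{HeavyTail}, which guarantees that $\overline{\pi}$ places $\Omega(1)$ mass on the interval $[1/(2n),1/n]$. Recall
\[
\tau_1 = \frac{\E_{\alpha \sim \overline{\pi}}[\alpha^{2}(1-\alpha)^{n-1}]}{\E_{\alpha \sim \overline{\pi}}[\alpha (1-\alpha)^{n-1}]}\,.
\]
First I would lower bound the numerator: restrict the expectation to the event $\alpha \in [1/(2n),1/n]$, on which $\alpha^2 \geq 1/(4n^2)$ and $(1-\alpha)^{n-1} \geq (1 - 1/n)^{n-1} = \Omega(1)$ (bounded below by $e^{-1}$ for $n \geq 2$). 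Combined with $\Pr_{\alpha\sim\overline{\pi}}[\alpha \in [1/(2n),1/n]] = \Omega(1)$, this gives numerator $= \Omega(1/n^2)$.

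Next I would upper bound the denominator by a universal constant (independent of $n$). This is the only place a little care is needed. Write $g(\alpha) = \alpha(1-\alpha)^{n-1}$ on $[0,1]$. Elementary calculus shows $g$ is maximized at $\alpha = 1/n$, where $g(1/n) = \frac{1}{n}(1-1/n)^{n-1} \leq \frac{1}{n}$. Hence $\E_{\alpha\sim\overline{\pi}}[g(\alpha)] \leq 1/n$. Dividing, $\tau_1 \geq \frac{\Omega(1/n^2)}{1/n} = \Omega(1/n)$, which is the claim. (If one instead wants the tighter constant, one can bound the denominator by $\E[\alpha] \leq 1$ trivially since $(1-\alpha)^{n-1}\le 1$, but this loses the factor $n$ and only yields $\tau_1 = \Omega(1/n^2)$; the maximizer bound $g(\alpha)\le 1/n$ is what recovers the stated $\Omega(1/n)$.)

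The main obstacle — such as it is — is making sure the $(1-\alpha)^{n-1}$ factor in the numerator does not decay: this is exactly why the heavy-tail condition is stated with the interval $[1/(2n),1/n]$ rather than, say, mass near a constant frequency, since on that interval $(1-\alpha)^{n-1}$ stays $\Theta(1)$. I would also remark that the bound is essentially tight: taking $\overline{\pi}$ to be (a mixture of) point masses at $1/n$ makes $\tau_1 = \Theta(1/n)$, so no better rate is possible from this assumption alone. For the formal statement I would simply invoke Lemma 2.6 of~\cite{feldman2020does}, whose argument is exactly the two-sided estimate above, and note that our $\tau_1$ with the noise-dependent marginal $\overline{\pi}$ (which in our setting is the marginal of the mixing weights at noise scale $\sigma_t$, see \Cref{sec:tau_behavior}) plugs into it verbatim.
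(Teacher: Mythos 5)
Your chain of inequalities is fine under your reading of the hypothesis, but that reading is not the paper's. The heavy-tail condition \eqref{HeavyTail} is stated for the weight function $\mathrm{weight}(\overline{\pi},I)=N\cdot\E_{\alpha\sim\overline{\pi}}\left[\alpha\,\ind\{\alpha\in I\}\right]$ (the expected total probability mass carried by subpopulations whose frequency lies in $I$), not for $\Pr_{\alpha\sim\overline{\pi}}[\alpha\in I]$. Since the normalized mixing weights are exchangeable and sum to one, $\E_{\overline{\pi}}[\alpha]=1/N$, hence $\Pr_{\overline{\pi}}\left[\alpha\in\left[\frac{1}{2n},\frac{1}{n}\right]\right]\le 2n/N$; your assumption that this probability is $\Omega(1)$ is therefore impossible whenever $N\gg n$, which is precisely the regime the heavy-tail discussion targets (e.g.\ the Zipf example with $N\ge n$). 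Under the paper's definition, your numerator step gives $\E\left[\alpha^{2}(1-\alpha)^{n-1}\right]\ge \frac{e^{-1}}{2n}\,\E\left[\alpha\,\ind\{\alpha\in I\}\right]=\frac{e^{-1}}{2n}\cdot\frac{\mathrm{weight}(\overline{\pi},I)}{N}=\Omega\!\left(\frac{1}{nN}\right)$, and your denominator bound $\max_{\alpha}\alpha(1-\alpha)^{n-1}\le 1/n$ then only yields $\tau_1=\Omega(1/N)$, short of the claim by a factor of $n/N$.

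The missing ingredient --- and the one Feldman's Lemma 2.6 (\Cref{lemma:T1}) actually uses, which is all the paper does (invoke it, noting $\left[\frac{1}{2n},\frac{1}{n}\right]\subseteq\left[\frac{1}{3n},\frac{2}{n}\right]$ so the relevant weight is still $\Omega(1)$) --- is to bound the denominator by $\E\left[\alpha(1-\alpha)^{n-1}\right]\le\E_{\overline{\pi}}[\alpha]=1/N$ rather than by the pointwise maximum $1/n$. This gives $\tau_1\ge \frac{e^{-1}}{2n}\,\mathrm{weight}\!\left(\overline{\pi},\left[\frac{1}{2n},\frac{1}{n}\right]\right)=\Omega(1/n)$: the exact value $\E[\alpha]=1/N$ cancels the $1/N$ hidden in the weight normalization. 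Ironically, you considered the denominator bound via $\E[\alpha]$ but discarded it after estimating $\E[\alpha]\le 1$; the whole point is that it equals $1/N$. So your fallback of citing Lemma 2.6 of \cite{feldman2020does} is sound (and coincides with the paper's proof), but your claim that its argument is ``exactly'' your two-sided estimate is inaccurate in the decisive step, and your self-contained version does not establish $\tau_1=\Omega(1/n)$ under the paper's definitions.
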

On the contrary, when $\pi$ is not heavy-tailed, $\tau_1$ will be small and hence generalization is not hurt by not memorizing (see \Cref{T1:upper}). Next, we are going to inspect how the noise scale affects the heavy-tailed structure of the frequencies and hence the value of $\tau_1$. For an illustration, we will consider the most standard model, that of a mixture of Gaussian subpopulations (similar results are expected for more general population models; we note that the previous results can be naturally adapted for the GMM and other cases, see \Cref{remark:GMM} and the discussion in \cite{feldman2020does}). Let us consider a density $q_0 = \sum_{i =1}^N w_i \calN(\mu_i, I) = \sum_i w_i \calN_i$. We will say that two components $\calN_i, \calN_j$ are $\eps$-separated if $\mathrm{TV}(\calN_i, \calN_j) > 2\eps$ and can be $\eps$-merged if
$
\mathrm{TV}(\calN_i, \calN_j) \leq \eps.
$ If $\calN_i$ and $\calN_j$ are merged, we consider that the new coefficient is $w_i + w_j.$ 
\begin{lemma}
[Informal, see \Cref{sec:Noise}]
Consider the GMM density $q_0$ and let $q_t$ be the density of the forward diffusion process at time $t$ with schedule $\sigma_t \in [0,1]$. Consider any pair of components $\calN_i, \calN_j$ in $q_0$ with total variation $C_{ij}$ for some absolute constant $C_{ij}$ and let $\calN_i^t, \calN_j^t$ be the associated distributions in $q_t$. 
\begin{itemize}[noitemsep,topsep=0pt,parsep=0pt,partopsep=0pt]
    \item (Low Noise) If $\sigma_t \leq \sqrt{1 - (2\eps/C_{ij})^2},$ then $\calN_i^t, \calN_j^t$  are $\eps$-separated.
    \item (High Noise) If
    $\sigma_t \geq \sqrt{1 - (\eps/C_{ij})^2},$ then $\calN_i^t, \calN_j^t$  are $\eps$-merged with coefficient $w_i + w_j$.
\end{itemize}
\end{lemma}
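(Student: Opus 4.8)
The plan is to collapse the statement to a one–dimensional fact about the total–variation distance between two unit–covariance Gaussians, and then read off both bullets from elementary monotonicity and concavity of the resulting function.

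First I would record how the VP forward process of \eqref{eq:Forward} acts on the mixture. Convolving a component $\calN(\mu_i,I)$ with the kernel implementing $X_t=\sqrt{1-\sigma_t^2}X_0+\sigma_t Z$ gives $\calN_i^t=\calN\big(\sqrt{1-\sigma_t^2}\,\mu_i,\,I\big)$, since $(1-\sigma_t^2)I+\sigma_t^2 I=I$; hence $q_t=\sum_k w_k\,\calN_k^t$, i.e. the mixing weights are \emph{preserved} and only the means contract by $\lambda:=\sqrt{1-\sigma_t^2}\in[0,1]$. This already handles the bookkeeping in the High Noise case: once $\calN_i^t,\calN_j^t$ are declared merged, the merged weight is $w_i+w_j$ by the convention fixed in the subpopulation model. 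Next, using the standard identity $\mathrm{TV}(\calN(a,I),\calN(b,I))=f(\|a-b\|)$ with $f(u)=2\Phi(u/2)-1=\mathrm{erf}\!\big(u/(2\sqrt2)\big)$, and writing $d_{ij}:=\|\mu_i-\mu_j\|$, we get $C_{ij}=f(d_{ij})$ and $\mathrm{TV}(\calN_i^t,\calN_j^t)=f(\lambda\,d_{ij})$. The properties of $f$ I would use are: it is continuous, strictly increasing, concave on $[0,\infty)$, with $f(0)=0$ and $0<f'(u)\le f'(0)=1/\sqrt{2\pi}$.

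For the Low Noise bullet, concavity together with $f(0)=0$ gives $f(\lambda u)\ge \lambda f(u)$ for $\lambda\in[0,1]$. The hypothesis $\sigma_t\le\sqrt{1-(2\epsilon/C_{ij})^2}$ is exactly $\lambda\ge 2\epsilon/C_{ij}$, so $\mathrm{TV}(\calN_i^t,\calN_j^t)=f(\lambda d_{ij})\ge \lambda\,f(d_{ij})=\lambda\,C_{ij}\ge 2\epsilon$, which is $\epsilon$-separation (strictly, if one invokes strict concavity on $(0,1)$). For the High Noise bullet one needs an \emph{upper} bound on $f(\lambda d_{ij})$, and here concavity points the wrong way. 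By monotonicity of $f$, the target $\mathrm{TV}(\calN_i^t,\calN_j^t)\le\epsilon$ is equivalent to $\lambda\le f^{-1}(\epsilon)/f^{-1}(C_{ij})$; using that $C_{ij}$ is an absolute constant bounded away from $1$ (so $d_{ij}=f^{-1}(C_{ij})$ is a constant and $f$ is two–sidedly linear, $f(u)\asymp u$, on $[0,d_{ij}]$), one gets $f^{-1}(\epsilon)/f^{-1}(C_{ij})=\Theta(\epsilon/C_{ij})$, and the stated threshold $\sigma_t\ge\sqrt{1-(\epsilon/C_{ij})^2}$ (up to the implied constant) forces $f(\lambda d_{ij})\le\epsilon$, i.e. $\epsilon$-merging; the merged coefficient is $w_i+w_j$ by the first paragraph.

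The easy direction is Low Noise: a single use of concavity of the Gaussian TV function, with the exact threshold falling out. The main obstacle is the High Noise direction: it requires a two–sided linear estimate $f(u)\asymp u$ valid on all of $[0,d_{ij}]$, and the literal threshold $\sqrt{1-(\epsilon/C_{ij})^2}$ is only faithful when $C_{ij}$ stays bounded away from $1$ (otherwise $d_{ij}=f^{-1}(C_{ij})$ grows and the honest statement is the implicit one phrased via $f^{-1}$). Pinning down the sharp absolute constant there — or electing to state the threshold directly through $f^{-1}(\epsilon)/f^{-1}(C_{ij})$ — is essentially the only work beyond the one–line concavity argument.
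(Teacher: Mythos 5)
Your proposal is correct, and it reaches the statement by the same structural route as the paper — observe that the VP forward process sends $\calN(\mu_i,I)$ to $\calN(\sqrt{1-\sigma_t^2}\,\mu_i,I)$ with weights preserved, so everything reduces to how the TV between two unit-covariance Gaussians behaves as the means contract by $\lambda=\sqrt{1-\sigma_t^2}$ — but the ingredient you use to relate TV to the mean gap differs. The paper invokes the two-sided estimate of Arbas et al.\ that, once the original TV is below an absolute constant ($1/600$), $\mathrm{TV}(\calN(\mu,I),\calN(\mu',I))=\Theta(\|\mu-\mu'\|)$ (concretely $\|\mu-\mu'\|/200\le \mathrm{TV}\le\|\mu-\mu'\|/\sqrt2$), and then derives both thresholds in terms of $\|\mu-\mu'\|$; the stated thresholds in terms of $C_{ij}$ follow only up to absolute constants, exactly the caveat you flag. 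You instead use the exact one-dimensional formula $f(u)=2\Phi(u/2)-1$ together with monotonicity and concavity. This buys you something on the low-noise side: $f(\lambda u)\ge\lambda f(u)$ gives the separation bullet \emph{exactly} as stated, with the threshold in $C_{ij}$ and no smallness assumption on $C_{ij}$, which is cleaner than the paper's constant-losing derivation. On the high-noise side you face the same issue as the paper — an upper bound on $f(\lambda d_{ij})$ needs a two-sided linear estimate $f(u)\asymp u$ on $[0,d_{ij}]$, hence $C_{ij}$ bounded away from $1$, and the literal threshold $\sqrt{1-(\epsilon/C_{ij})^2}$ is only faithful up to the implied constant — and you state this honestly, which is consistent with how the paper itself treats the constants. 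In short: correct, essentially the same proof skeleton, with a more elementary and partly sharper handling of the TV-versus-mean-distance step.
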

For a more formal treatment, we refer to \Cref{sec:Noise}. The above Lemma has the following interpretations.
If the noise level is small, the originally separated subpopulations (at $t = 0)$ will remain separated.
This implies that if the frequencies (i.e., the mixing weights) were originally heavy-tailed (as in the above discussion), they will remain heavy-tailed even in the low-noise regime, i.e. \Cref{T1:lower} applies ($\tau_1$ is large).
On the other side, as we increase $t$, the clusters start to merge and the heavy-tailed distribution of the mixing coefficients becomes lighter (until all the clusters are merged into a single one). Hence, $\tau_1$ will be small. This conceptually indicates that there is no reason for memorizing the training noisy images $x_t$ (and hence the original images $x_0$ which do not appear during training).

In summary, this section highlights the general observation that the tail of the distribution of the frequencies depends on the noise scale. This observation is specific to the way that diffusion models are trained and does not appear in Feldman’s work because it focuses on the multiclass classification. To make this observation more rigorous, we adapted Feldman’s framework with a more general loss function (which should be independent of the mixing coefficients). As the noise level increases, the penalty $\tau_1$ in \Cref{thm:Informal} is decreased because the tails become lighter.


\vspace{-3pt}
\section{Experiments}
\vspace{-3pt}
\begin{table}[tp]
    \centering
    \small  
    \renewcommand{\arraystretch}{1.1}  
    \setlength{\tabcolsep}{4pt}  
    \caption{FID and Memorization results comparing DDPM and \Cref{alg:training_algorithm}. Memorization is measured as DINOv2 similarity between generated samples and their nearest training neighbors. We achieve the same or better FID with significantly lower memorization.}
    \vspace{0.1in}
    \begin{tabular}{l@{\,}|l|c@{\,}c|c@{\,}c|c@{\,}c|}
        \toprule
        \multirow{2}{*}{\!\!} & \multirow{2}{*}{\!\!} & \multicolumn{6}{c|}{\# Train Images} \\
        & & \multicolumn{2}{c|}{300} & \multicolumn{2}{c|}{1k} & \multicolumn{2}{c|}{3k} \\
        & & DDPM & Ours & DDPM & Ours & DDPM & Ours \\
        \midrule
        \multirow{4}{*}{\rotatebox{90}{\!\!CIFAR-10}} 
        & FID & $25.1$ & $\mathbf{23.91}$ & $10.46$ & $\mathbf{10.36}$ & $14.73$ & $\mathbf{14.26}$ \\
        & S$>$0.9 & $78.96$ & $\mathbf{44.84}$ & $75.86$ & $\mathbf{69.08}$ & $53.40$ & $\mathbf{52.24}$ \\
        & S$>$0.925 & $67.2$ & $\mathbf{20.22}$ & $57.98$ & $\mathbf{47.26}$ & $11.92$ & $\mathbf{11.36}$ \\
        & S$>$0.95 & $56.56$ & $\mathbf{9.64}$ & $43.44$ & $\mathbf{26.34}$ & $0.08$ & $\mathbf{0.06}$ \\
        \midrule
        \multirow{4}{*}{\rotatebox{90}{\!\!FFHQ}} 
        & FID & $16.21$ & $\mathbf{15.05}$ & $12.26$ & $\mathbf{11.3}$ & $\mathbf{6.42}$ & $6.46$ \\
        & S$>$0.85 & $63.38$ & $\mathbf{49.68}$ & $55.36$ & $\mathbf{32.08}$ & $21.58$ & $\mathbf{20.08}$ \\
        & S$>$0.875 & $55.48$ & $\mathbf{40.01}$ & $43.82$ & $\mathbf{17.48}$ & $4.98$ & $\mathbf{4.53}$ \\
        & S$>$0.9 & $47.86$ & $\mathbf{29.86}$ & $33.92$ & $\mathbf{7.52}$ & $0.46$ & $\mathbf{0.42}$ \\
        \midrule
        \multirow{4}{*}{\rotatebox{90}{\!\!ImageNet}} 
        & FID & \multicolumn{2}{c|}{--------} & $50.2$ & $\mathbf{47.19}$ & $40.66$ & $\mathbf{39.87}$ \\
        & S$>$0.9 & \multicolumn{2}{c|}{--------} & $54.72$ & $\mathbf{26.68}$ & $32.86$ & $\mathbf{28.40}$ \\
        & S$>$0.925 & \multicolumn{2}{c|}{--------} & $41.66$ & $\mathbf{15.56}$ & $12.32$ & $\mathbf{9.44}$ \\
        & S$>$0.95 & \multicolumn{2}{c|}{--------} & $25.86$ & $\mathbf{5.54}$ & $6.08$ & $\mathbf{4.02}$ \\
        \bottomrule
    \end{tabular}
    \label{tab:metrics}
\end{table}

\begin{table*}[htp]
    \centering
    \renewcommand{\arraystretch}{1.1} 
    \setlength{\tabcolsep}{2pt}  
    \footnotesize 
    \caption{Comparison between DDPM, our \Cref{alg:training_algorithm} and results obtained by training with only corrupted data (masking or additive Gaussian noise). As shown, our algorithm achieves low memorization since it uses noisy data in the high-noise regime, but it also achieves low FID (contrary to the algorithms only using corrupted data) as it can copy the high-frequency details from the training samples.}
    \vspace{0.1in}
    \begin{tabular}{l|cccc|cccc|cccc|}
        \toprule
        \multirow{2}{*}{Metric} & \multicolumn{12}{c|}{\# Training Images} \\
        \cmidrule(lr){2-13}
        & \multicolumn{4}{c|}{300} & \multicolumn{4}{c|}{1k} & \multicolumn{4}{c|}{3k} \\
        & DDPM & Masking & Noise & Ours & DDPM & Masking & Noise & Ours & DDPM & Masking & Noise & Ours \\
        \midrule
        FID & $16.21$ & $23.40$ & $27.92$ & $\bm{15.05}$ & $12.26$ & $15.73$ & $25.57$ &  $\bm{11.3}$ & $\bm{6.42}$ & $7.44$ & $16.28$ & $6.46$ \\
        Sim $>$ 0.85 & $63.38$ & $53.73$ & $29.12$ & $49.68$ & $55.36$ & $38.74$ & $14.83$ & $32.08$ & $21.58$ & $19.74$ & $12.08$ & $20.08$ \\
        Sim $>$ 0.875 & $55.48$ & $41.37$ & $18.73$ & $40.01$ & $43.82$ & $22.94$ & $9.37$ & $17.48$ & $4.98$ & $4.56$ & $3.32$ & $4.53$ \\
        Sim $>$ 0.9 & $47.86$ & $30.34$ & $10.60$ & $29.86$ & $33.92$ & $10.08$ & $6.49$ & $7.52$ & $0.46$ & $0.43$ & $0.36$ & $0.42$ \\
        \bottomrule
    \end{tabular}
    \label{tab:mitigation_strategies_ablation}
\end{table*}

\subsection{Memorization in Unconditional Models}
\label{sec:unconditional}
We start our experimental evaluation by measuring the memorization and performance of unconditional diffusion models in several controlled settings. Specifically, we train models from scratch on CIFAR-10, FFHQ, and (tiny) ImageNet using 300, 1000 and 3000 training samples. For each one of these settings, we compute the Fréchet Inception Distance~\citep{fid} (FID) between 50,000 generated samples and 50,000 dataset samples as a measure of quality. Following prior work~\citep{somepalli2022diffusion, somepalli2023understanding, daras2023ambient}, we measure memorization by computing the similarity score (i.e., inner product) of each generated sample to its nearest neighbor in the embedding space of DINOv2~\citep{oquab2023dinov2}. For all these experiments, we compare the performance of \Cref{alg:training_algorithm} against the regular training of diffusion models (see Eq.\eqref{eq:x0_pred_loss}).

\paragraph{Choice of $t_{\mathrm{n}}$.} Our method has a single parameter $t_{\mathrm{n}}$ that needs to be controlled. We argue that there is an interval $(t_\mathrm{min}, t_{\mathrm{max}})$ that contains reasonable choices of $t_{\mathrm{n}}$. Setting $t_{\mathrm{n}}$ too low, i.e., ($t_{\mathrm{n}} \leq t_{\mathrm{min}}$), essentially reverts back to the original algorithm that produces memorized images of good quality. But also, setting $t_{\mathrm{n}}$ too high, i.e., $t_{\mathrm{n}} \geq t_{\mathrm{max}}$, will also lead to memorization as there is more time in the sampling trajectory (the interval $[0, t_{\mathrm{max}}]$), where we use the memorized score. Values in the range $(t_{\mathrm{min}}, t_{\mathrm{max}})$ achieve low memorization and strike good balances in the quality-memorization trade-off.

\paragraph{Decreasing memorization without sacrificing quality.} Most of the prior mitigation strategies for memorization often decrease the image generation quality. Here, we ask: how much do we need to memorize to achieve a given image quality? To answer this, we tune the value $t_{\mathrm{n}}$ to train models using \Cref{alg:training_algorithm} that match the FID obtained by DDPM, and we measure their memorization levels. To report memorization, we use three thresholds in the similarities of DINOv2 embeddings that semantically correspond to: i) potentially memorized image, ii) (partially) memorized image, and, iii) exact copy of an image in the training set. The thresholds are tuned separately for each dataset to express these semantics. We present analytic results for 300, 1k and 3k training images from CIFAR-10, FFHQ and (tiny)-ImageNet in  \Cref{tab:metrics}\footnote{For tiny ImageNet, we do not report results in the $300$ samples setting since there are 200 different classes and so for some of the classes we do not observe any samples.}. As shown, for the same or better FID, our models achieve significantly lower memorization levels. This leads to the surprising conclusion that \emph{models learned by the DDPM loss are not Pareto optimal for small datasets}. That said, the benefit from our algorithm in both FID and memorization shrinks as the dataset grows.

\paragraph{Other points in the Pareto frontier.} So far, our goal was to reduce memorization while keeping FID the same as DDPM. However, by appropriately tuning the value $t_{\mathrm{n}}$, we can achieve other points in the Pareto frontier that achieve varying trade-offs between memorization and quality of generated images. We present these results for a model trained on 300 images from FFHQ in \Cref{fig:tradeoff}. We see that setting $\sigma_{t_{\mathrm{n}}} \in [0.4, 4]$ corresponds to Pareto optimal points, while setting the value of $t_{\mathrm{n}}$ too low or too high brings us back to the DDPM performance, as expected. For $\sigma_{t_{\mathrm{n}}}=4$, we almost match the FID that DDPM gets with 1000 images, while we only use $300$ images for training, establishing our Algorithm as much more data-efficient than DDPM.

\paragraph{Comparison with other mitigation strategies.} For completeness, we include comparisons with two other mitigation strategies that reduce memorization in the unconditional setting. These methods are known to achieve lower memorization but at the expense of FID. We compare with a model trained on linearly corrupted data (random inpainting), as in the work of ~\citep{daras2023ambient}, and a model trained with only noisy data as in ~\citep{daras2024consistent}. We present the results in  \Cref{tab:mitigation_strategies_ablation}. As shown, our algorithm produces superior behavior as it achieves lower memorization for the same or better FID. The superior performance comes from the ability our method has to generate high-frequency details, contrary to the existing methods that only use solely noisy data and are not capable of such behavior.

\subsection{Memorization in Text-Conditional Models}
\label{sec:text-conditioned-exp}
We continue our evaluation in text-conditional models. Here, the primal source of memorization is the text-conditioning itself. Wen, Liu, Chen, and Lyu \cite{wen2024detecting} observe that for certain trigger prompts, the prediction of the network always converges to the same training point, independent of the image initialization. Our method mitigates image memorization by training with noisy images, so by itself, it cannot mitigate memorization that arises from the text-conditioning. However, we will show that when we combine our method with strategies that mitigate the impact of text memorization, we achieve state-of-the-art results in memorization reduction while keeping the quality of the generated images high.
\begin{table}[htp]
\centering
\setlength{\tabcolsep}{6pt}
\caption{Memorization and FID results for text-conditional models. \small{Sim denotes the average similarity between a generated sample and its nearest neighbor in the dataset, while $95\%$ is the $95\%$ percentile of the similarities distribution. CLIP measures the image-text alignment. The combination of our method with existing methods from Somepalli et al. \cite{somepalli2023understanding} (S23) and Wen, Liu, Chen, and Lyu \cite{wen2024detecting} (W24) achieves strong CLIP/FID results with reduced memorization.}}
\vspace{0.1in}
\begin{tabular}{lcccr}
\toprule
\textbf{Method} & \textbf{Sim} & \textbf{95\%} & \textbf{CLIP} & \textbf{FID} \\
\midrule
\multicolumn{5}{l}{\textit{Without text mitigation:}} \\
Baseline & 0.378 & 0.649 & \textbf{0.306} & \textbf{18.18} \\
Ours & \textbf{0.373} & \textbf{0.636} & 0.305 & 18.34 \\
\midrule
\multicolumn{5}{l}{\textit{Text mitigation:}} \\
\somepallietal & 0.319 & 0.573 & 0.302 & \textbf{20.55} \\
\somepallietal + ours & 0.308 & 0.547 & \textbf{0.306} & 21.30 \\
\wenetal & 0.208 & 0.300 & 0.293 & 21.44 \\
\wenetal + ours & \textbf{0.192} & \textbf{0.267} & 0.293 & 20.74 \\
\bottomrule
\end{tabular}
\label{tab:text_cond}
\end{table}

Following prior work~\citep{somepalli2023understanding}, we finetune Stable Diffusion on $10$k image-text pairs from a curated subset of LAION~\citep{schuhmann2022laion} and we measure image quality and memorization of the resulting models. We compare with existing state-of-the-art methods for reducing memorizing arising from the text-conditioning. Specifically, we compare with the work of Somepalli et al. \cite{somepalli2023understanding} where corruption is added to the text-embedding during training and with the work of Wen, Liu, Chen, and Lyu \cite{wen2024detecting} where the model is explicitly trained to pay attention to the visual content (for details, we refer the reader to the associated papers). 

We include all the results in \Cref{tab:text_cond}. As shown, the combination of our work with existing methods achieves state-of-the-art memorization performance while performing on par in terms of image quality. As expected, without any text-mitigation our algorithm fails to improve significantly the memorization since the model remains heavily reliant on the text-conditioning, effectively ignoring the visual content.

\section{Conclusion and Future Work}
Our work provides a positive note on the rather pessimistic landscape of results regarding the memorization-quality trade-off in diffusion models. We manage to push the Pareto frontier in various data availability settings for both text-conditional and unconditional models. We further provide theoretical evidence for the plausibility of generation of diverse structures without memorization. We remark that our method does not come with any privacy guarantees or optimality properties and that despite some encouraging first theoretical evidence, an end-to-end analysis for the proposed algorithm is currently lacking. We believe that these constitute exciting research directions for future research.

\section*{Acknowledgments}
{We would like to thank Gavin Brown for comments on a first draft of this work. We also thank the anonymous ICML reviewers for their comments and suggestions.}

\bibliography{references}

@inproceedings{arbas2023polynomial,
  title={Polynomial time and private learning of unbounded gaussian mixture models},
  author={Arbas, Jamil and Ashtiani, Hassan and Liaw, Christopher},
  booktitle={International Conference on Machine Learning},
  pages={1018--1040},
  year={2023},
  organization={PMLR}
}

@article{ross2024geometric,
  title={A geometric framework for understanding memorization in generative models},
  author={Ross, Brendan Leigh and Kamkari, Hamidreza and Wu, Tongzi and Hosseinzadeh, Rasa and Liu, Zhaoyan and Stein, George and Cresswell, Jesse C and Loaiza-Ganem, Gabriel},
  journal={arXiv preprint arXiv:2411.00113},
  year={2024}
}

@inproceedings{zhu2014capturing,
  title={Capturing long-tail distributions of object subcategories},
  author={Zhu, Xiangxin and Anguelov, Dragomir and Ramanan, Deva},
  booktitle={Proceedings of the IEEE Conference on Computer Vision and Pattern Recognition},
  pages={915--922},
  year={2014}
}

@article{russo2019much,
  title={How much does your data exploration overfit? Controlling bias via information usage},
  author={Russo, Daniel and Zou, James},
  journal={IEEE Transactions on Information Theory},
  volume={66},
  number={1},
  pages={302--323},
  year={2019},
  publisher={IEEE}
}

@article{xu2017information,
  title={Information-theoretic analysis of generalization capability of learning algorithms},
  author={Xu, Aolin and Raginsky, Maxim},
  journal={Advances in neural information processing systems},
  volume={30},
  year={2017}
}

@inproceedings{bassily2018learners,
  title={Learners that use little information},
  author={Bassily, Raef and Moran, Shay and Nachum, Ido and Shafer, Jonathan and Yehudayoff, Amir},
  booktitle={Algorithmic Learning Theory},
  pages={25--55},
  year={2018},
  organization={PMLR}
}

@inproceedings{steinke2020reasoning,
  title={Reasoning about generalization via conditional mutual information},
  author={Steinke, Thomas and Zakynthinou, Lydia},
  booktitle={Conference on Learning Theory},
  pages={3437--3452},
  year={2020},
  organization={PMLR}
}

@article{karras2022elucidating,
  title={Elucidating the design space of diffusion-based generative models},
  author={Karras, Tero and Aittala, Miika and Aila, Timo and Laine, Samuli},
  journal={Advances in neural information processing systems},
  volume={35},
  pages={26565--26577},
  year={2022}
}

@misc{le2015tinyimagenet,
  author       = {Ya Le and Xuan S. Yang},
  title        = {Tiny ImageNet Visual Recognition Challenge},
  year         = {2015},
  howpublished = {CS231N Course Report, Stanford University},
  url          = {https://vision.stanford.edu/teaching/cs231n/reports/2015/pdfs/yle_project.pdf}
}

@article{bousquet2002stability,
  title={Stability and generalization},
  author={Bousquet, Olivier and Elisseeff, Andr{\'e}},
  journal={The Journal of Machine Learning Research},
  volume={2},
  pages={499--526},
  year={2002},
  publisher={JMLR. org}
}

@article{holistic-eval-text-to-image,
  title={Holistic evaluation of text-to-image models},
  author={Lee, Tony and Yasunaga, Michihiro and Meng, Chenlin and Mai, Yifan and Park, Joon Sung and Gupta, Agrim and Zhang, Yunzhi and Narayanan, Deepak and Teufel, Hannah and Bellagente, Marco and others},
  journal={Advances in Neural Information Processing Systems},
  volume={36},
  year={2024}
}

@inproceedings{brown2022strong,
  title={Strong memory lower bounds for learning natural models},
  author={Brown, Gavin and Bun, Mark and Smith, Adam},
  booktitle={Conference on Learning Theory},
  pages={4989--5029},
  year={2022},
  organization={PMLR}
}

@article{livni2024information,
  title={Information theoretic lower bounds for information theoretic upper bounds},
  author={Livni, Roi},
  journal={Advances in Neural Information Processing Systems},
  volume={36},
  year={2024}
}

@inproceedings{cheng2022memorize,
  title={Memorize to generalize: on the necessity of interpolation in high dimensional linear regression},
  author={Cheng, Chen and Duchi, John and Kuditipudi, Rohith},
  booktitle={Conference on Learning Theory},
  pages={5528--5560},
  year={2022},
  organization={PMLR}
}

@article{attias2024information,
  title={Information complexity of stochastic convex optimization: Applications to generalization and memorization},
  author={Attias, Idan and Dziugaite, Gintare Karolina and Haghifam, Mahdi and Livni, Roi and Roy, Daniel M},
  journal={arXiv preprint arXiv:2402.09327},
  year={2024}
}

@inproceedings{brown2021memorization,
  title={When is memorization of irrelevant training data necessary for high-accuracy learning?},
  author={Brown, Gavin and Bun, Mark and Feldman, Vitaly and Smith, Adam and Talwar, Kunal},
  booktitle={Proceedings of the 53rd annual ACM SIGACT symposium on theory of computing},
  pages={123--132},
  year={2021}
}

@article{feldman2020neural,
  title={What neural networks memorize and why: Discovering the long tail via influence estimation},
  author={Feldman, Vitaly and Zhang, Chiyuan},
  journal={Advances in Neural Information Processing Systems},
  volume={33},
  pages={2881--2891},
  year={2020}
}

@misc{chen2024learninggeneralgaussianmixtures,
      title={Learning general Gaussian mixtures with efficient score matching}, 
      author={Sitan Chen and Vasilis Kontonis and Kulin Shah},
      year={2024},
      eprint={2404.18893},
      archivePrefix={arXiv},
      primaryClass={cs.DS},
}

@inproceedings{feldman2020does,
  title={Does learning require memorization? a short tale about a long tail},
  author={Feldman, Vitaly},
  booktitle={Proceedings of the 52nd Annual ACM SIGACT Symposium on Theory of Computing},
  pages={954--959},
  year={2020}
}

@article{ncsnv3,
title={Score-based generative modeling through stochastic differential equations},
author={Song, Yang and Sohl-Dickstein, Jascha and Kingma, Diederik P and Kumar, Abhishek and Ermon, Stefano and Poole, Ben},
journal={arXiv preprint arXiv:2011.13456},
year={2020}
}

@article{ncsn,
title={Generative modeling by estimating gradients of the data distribution},
author={Song, Yang and Ermon, Stefano},
journal={Advances in Neural Information Processing Systems},
volume={32},
year={2019}
}

@article{ddpm,
title={Denoising diffusion probabilistic models},
author={Ho, Jonathan and Jain, Ajay and Abbeel, Pieter},
journal={Advances in Neural Information Processing Systems},
volume={33},
pages={6840--6851},
year={2020}
}

@article{vincent2011connection,
title={A connection between score matching and denoising autoencoders},
author={Vincent, Pascal},
journal={Neural computation},
volume={23},
number={7},
pages={1661--1674},
year={2011},
publisher={MIT Press}
}

@article{bansal2022cold,
title={Cold diffusion: Inverting arbitrary image transforms without noise},
author={Bansal, Arpit and Borgnia, Eitan and Chu, Hong-Min and Li, Jie S and Kazemi, Hamid and Huang, Furong and Goldblum, Micah and Geiping, Jonas and Goldstein, Tom},
journal={arXiv preprint arXiv:2208.09392},
year={2022}
}

@article{somepalli2022diffusion,
title={Diffusion Art or Digital Forgery? Investigating Data Replication in Diffusion Models},
author={Somepalli, Gowthami and Singla, Vasu and Goldblum, Micah and Geiping, Jonas and Goldstein, Tom},
journal={arXiv preprint arXiv:2212.03860},
year={2022}
}

@article{daras2023soft,
title={Soft Diffusion: Score Matching with General Corruptions},
author={Giannis Daras and Mauricio Delbracio and Hossein Talebi and Alex Dimakis and Peyman Milanfar},
journal={Transactions on Machine Learning Research},
issn={2835-8856},
year={2023},
url={https://openreview.net/forum?id=W98rebBxlQ},
note={}
}

@article{efron2011tweedie,
title={Tweedie’s formula and selection bias},
author={Efron, Bradley},
journal={Journal of the American Statistical Association},
volume={106},
number={496},
pages={1602--1614},
year={2011},
publisher={Taylor \& Francis}
}

@inproceedings{
daras2023ambient,
title={Ambient Diffusion: Learning Clean Distributions from Corrupted Data},
author={Giannis Daras and Kulin Shah and Yuval Dagan and Aravind Gollakota and Alex Dimakis and Adam Klivans},
booktitle={Thirty-seventh Conference on Neural Information Processing Systems},
year={2023},
url={https://openreview.net/forum?id=wBJBLy9kBY}
}

@article{oquab2023dinov2,
title={DINOv2: Learning Robust Visual Features without Supervision},
author={Oquab, Maxime and Darcet, Timoth{\'e}e and Moutakanni, Th{\'e}o and Vo, Huy and Szafraniec, Marc and Khalidov, Vasil and Fernandez, Pierre and Haziza, Daniel and Massa, Francisco and El-Nouby, Alaaeldin and others},
journal={arXiv preprint arXiv:2304.07193},
year={2023}
}

@inproceedings{ren2024unveiling,
  title={Unveiling and mitigating memorization in text-to-image diffusion models through cross attention},
  author={Ren, Jie and Li, Yaxin and Zeng, Shenglai and Xu, Han and Lyu, Lingjuan and Xing, Yue and Tang, Jiliang},
  booktitle={European Conference on Computer Vision},
  pages={340--356},
  year={2024},
  organization={Springer}
}

@article{hintersdorf2025finding,
  title={Finding nemo: Localizing neurons responsible for memorization in diffusion models},
  author={Hintersdorf, Dominik and Struppek, Lukas and Kersting, Kristian and Dziedzic, Adam and Boenisch, Franziska},
  journal={Advances in Neural Information Processing Systems},
  volume={37},
  pages={88236--88278},
  year={2025}
}

@article{wu2024erasediff,
  title={Erasediff: Erasing data influence in diffusion models},
  author={Wu, Jing and Le, Trung and Hayat, Munawar and Harandi, Mehrtash},
  journal={arXiv preprint arXiv:2401.05779},
  year={2024}
}

@inproceedings{liu2024iterative,
  title={Iterative ensemble training with anti-gradient control for mitigating memorization in diffusion models},
  author={Liu, Xiao and Guan, Xiaoliu and Wu, Yu and Miao, Jiaxu},
  booktitle={European Conference on Computer Vision},
  pages={108--123},
  year={2024},
  organization={Springer}
}

@article{wang2024evaluating,
  title={Evaluating and mitigating ip infringement in visual generative ai},
  author={Wang, Zhenting and Chen, Chen and Sehwag, Vikash and Pan, Minzhou and Lyu, Lingjuan},
  journal={arXiv preprint arXiv:2406.04662},
  year={2024}
}

@article{zhang2024wasserstein,
  title={Wasserstein proximal operators describe score-based generative models and resolve memorization},
  author={Zhang, Benjamin J and Liu, Siting and Li, Wuchen and Katsoulakis, Markos A and Osher, Stanley J},
  journal={arXiv preprint arXiv:2402.06162},
  year={2024}
}

@article{jain2024classifier,
  title={Classifier-Free Guidance inside the Attraction Basin May Cause Memorization},
  author={Jain, Anubhav and Kobayashi, Yuya and Shibuya, Takashi and Takida, Yuhta and Memon, Nasir and Togelius, Julian and Mitsufuji, Yuki},
  journal={arXiv preprint arXiv:2411.16738},
  year={2024}
}

@article{fid,
title={Gans trained by a two time-scale update rule converge to a local nash equilibrium},
author={Heusel, Martin and Ramsauer, Hubert and Unterthiner, Thomas and Nessler, Bernhard and Hochreiter, Sepp},
journal={Advances in neural information processing systems},
volume={30},
year={2017}
}

@article{daras2023consistent,
  title={Consistent diffusion models: Mitigating sampling drift by learning to be consistent},
  author={Daras, Giannis and Dagan, Yuval and Dimakis, Alexandros G and Daskalakis, Constantinos},
  journal={arXiv preprint arXiv:2302.09057},
  year={2023}
}

@article{kawar2023gsure,
  title={GSURE-Based Diffusion Model Training with Corrupted Data},
  author={Kawar, Bahjat and Elata, Noam and Michaeli, Tomer and Elad, Michael},
  journal={arXiv preprint arXiv:2305.13128},
  year={2023}
}

@article{schuhmann2022laion,
  title={Laion-5b: An open large-scale dataset for training next generation image-text models},
  author={Schuhmann, Christoph and Beaumont, Romain and Vencu, Richard and Gordon, Cade and Wightman, Ross and Cherti, Mehdi and Coombes, Theo and Katta, Aarush and Mullis, Clayton and Wortsman, Mitchell and others},
  journal={Advances in Neural Information Processing Systems},
  volume={35},
  pages={25278--25294},
  year={2022}
}

@article{somepalli2023understanding,
  title={Understanding and Mitigating Copying in Diffusion Models},
  author={Somepalli, Gowthami and Singla, Vasu and Goldblum, Micah and Geiping, Jonas and Goldstein, Tom},
  journal={arXiv preprint arXiv:2305.20086},
  year={2023}
}

@article{daras2024consistent,
  title={Consistent Diffusion Meets Tweedie: Training Exact Ambient Diffusion Models with Noisy Data},
  author={Daras, Giannis and Dimakis, Alexandros G and Daskalakis, Constantinos},
  journal={arXiv preprint arXiv:2404.10177},
  year={2024}
}

@article{bai2024expectation,
  title={An Expectation-Maximization Algorithm for Training Clean Diffusion Models from Corrupted Observations},
  author={Bai, Weimin and Wang, Yifei and Chen, Wenzheng and Sun, He},
  journal={arXiv preprint arXiv:2407.01014},
  year={2024}
}

@article{wang2024integrating,
  title={Integrating Amortized Inference with Diffusion Models for Learning Clean Distribution from Corrupted Images},
  author={Wang, Yifei and Bai, Weimin and Luo, Weijian and Chen, Wenzheng and Sun, He},
  journal={arXiv preprint arXiv:2407.11162},
  year={2024}
}

@article{rozet2024learning,
  title={Learning Diffusion Priors from Observations by Expectation Maximization},
  author={Rozet, Fran{\c{c}}ois and Andry, G{\'e}r{\^o}me and Lanusse, Fran{\c{c}}ois and Louppe, Gilles},
  journal={arXiv preprint arXiv:2405.13712},
  year={2024}
}

@misc{dieleman2024spectral,
  author = {Dieleman, Sander},
  title = {Diffusion is spectral autoregression},
  url = {https://sander.ai/2024/09/02/spectral-autoregression.html},
  year = {2024}
}

@inproceedings{kulin_gmms,
 author = {Shah, Kulin and Chen, Sitan and Klivans, Adam},
 booktitle = {Advances in Neural Information Processing Systems},
 pages = {19636--19649},
 publisher = {Curran Associates, Inc.},
 title = {Learning Mixtures of Gaussians Using the DDPM Objective},
 volume = {36},
 year = {2023}
}

@article{gatmiry2024learning,
  title={Learning mixtures of gaussians using diffusion models},
  author={Gatmiry, Khashayar and Kelner, Jonathan and Lee, Holden},
  journal={arXiv preprint arXiv:2404.18869},
  year={2024}
}

@article{li2024critical,
  title={Critical windows: non-asymptotic theory for feature emergence in diffusion models},
  author={Li, Marvin and Chen, Sitan},
  journal={arXiv preprint arXiv:2403.01633},
  year={2024}
}

@article{chen2025interpolation,
  title={On the interpolation effect of score smoothing},
  author={Chen, Zhengdao},
  journal={arXiv preprint arXiv:2502.19499},
  year={2025}
}

@article{wu2025taking,
  title={Taking a big step: Large learning rates in denoising score matching prevent memorization},
  author={Wu, Yu-Han and Marion, Pierre and Biau, Gerard and Boyer, Claire},
  journal={arXiv preprint arXiv:2502.03435},
  year={2025}
}

@article{kamb2024analytic,
  title={An analytic theory of creativity in convolutional diffusion models},
  author={Kamb, Mason and Ganguli, Surya},
  journal={arXiv preprint arXiv:2412.20292},
  year={2024}
}

@article{biroli2024dynamical,
  title={Dynamical regimes of diffusion models},
  author={Biroli, Giulio and Bonnaire, Tony and De Bortoli, Valentin and M{\'e}zard, Marc},
  journal={Nature Communications},
  volume={15},
  number={1},
  pages={9957},
  year={2024},
  publisher={Nature Publishing Group UK London}
}

@article{de2022convergence,
  title={Convergence of denoising diffusion models under the manifold hypothesis},
  author={De Bortoli, Valentin},
  journal={arXiv preprint arXiv:2208.05314},
  year={2022}
}

@article{benton2024nearly,
  title={Nearly d-linear convergence bounds for diffusion models via stochastic localization},
  author={Benton, Joe and Bortoli, VD and Doucet, Arnaud and Deligiannidis, George},
  year={2024},
  publisher={OpenReview}
}

@article{daras2024much,
  title={How much is a noisy image worth? Data Scaling Laws for Ambient Diffusion},
  author={Daras, Giannis and Cherapanamjeri, Yeshwanth and Daskalakis, Constantinos},
  journal={arXiv preprint arXiv:2411.02780},
  year={2024}
}

@article{liu2022flow,
  title={Flow straight and fast: Learning to generate and transfer data with rectified flow},
  author={Liu, Xingchao and Gong, Chengyue and Liu, Qiang},
  journal={arXiv preprint arXiv:2209.03003},
  year={2022}
}

@article{albergo2023stochastic,
  title={Stochastic interpolants: A unifying framework for flows and diffusions},
  author={Albergo, Michael S and Boffi, Nicholas M and Vanden-Eijnden, Eric},
  journal={arXiv preprint arXiv:2303.08797},
  year={2023}
}

@article{lipman2022flow,
  title={Flow matching for generative modeling},
  author={Lipman, Yaron and Chen, Ricky TQ and Ben-Hamu, Heli and Nickel, Maximilian and Le, Matt},
  journal={arXiv preprint arXiv:2210.02747},
  year={2022}
}

@article{scarvelis2023closed,
  title={Closed-form diffusion models},
  author={Scarvelis, Christopher and Borde, Haitz S{\'a}ez de Oc{\'a}riz and Solomon, Justin},
  journal={arXiv preprint arXiv:2310.12395},
  year={2023}
}

@inproceedings{wen2024detecting,
  title={Detecting, explaining, and mitigating memorization in diffusion models},
  author={Wen, Yuxin and Liu, Yuchen and Chen, Chen and Lyu, Lingjuan},
  booktitle={The Twelfth International Conference on Learning Representations},
  year={2024}
}

@inproceedings{carlini2023extracting,
  title={Extracting training data from diffusion models},
  author={Carlini, Nicolas and Hayes, Jamie and Nasr, Milad and Jagielski, Matthew and Sehwag, Vikash and Tramer, Florian and Balle, Borja and Ippolito, Daphne and Wallace, Eric},
  booktitle={32nd USENIX Security Symposium (USENIX Security 23)},
  pages={5253--5270},
  year={2023}
}

@article{kazdan2024cpsample,
  title={CPSample: Classifier Protected Sampling for Guarding Training Data During Diffusion},
  author={Kazdan, Joshua and Sun, Hao and Han, Jiaqi and Petersen, Felix and Ermon, Stefano},
  journal={arXiv preprint arXiv:2409.07025},
  year={2024}
}

@article{gu2023memorization,
  title={On memorization in diffusion models},
  author={Gu, Xiangming and Du, Chao and Pang, Tianyu and Li, Chongxuan and Lin, Min and Wang, Ye},
  journal={arXiv preprint arXiv:2310.02664},
  year={2023}
}

@inproceedings{chen2024towards,
  title={Towards Memorization-Free Diffusion Models},
  author={Chen, Chen and Liu, Daochang and Xu, Chang},
  booktitle={Proceedings of the IEEE/CVF Conference on Computer Vision and Pattern Recognition},
  pages={8425--8434},
  year={2024}
}

@inproceedings{tramer2022position,
  title={Position: Considerations for Differentially Private Learning with Large-Scale Public Pretraining},
  author={Tram{\`e}r, Florian and Kamath, Gautam and Carlini, Nicholas},
  booktitle={Forty-first International Conference on Machine Learning},
  year={2022}
}

@article{chambon2022roentgen,
  title={Roentgen: vision-language foundation model for chest x-ray generation},
  author={Chambon, Pierre and Bluethgen, Christian and Delbrouck, Jean-Benoit and Van der Sluijs, Rogier and Polacin, Malgorzata and Chaves, Juan Manuel Zambrano and Abraham, Tanishq Mathew and Purohit, Shivanshu and Langlotz, Curtis P and Chaudhari, Akshay},
  journal={arXiv preprint arXiv:2211.12737},
  year={2022}
}

@article{chambon2022adapting,
  title={Adapting pretrained vision-language foundational models to medical imaging domains},
  author={Chambon, Pierre and Bluethgen, Christian and Langlotz, Curtis P and Chaudhari, Akshay},
  journal={arXiv preprint arXiv:2210.04133},
  year={2022}
}

@article{appel2023generative,
  title={Generative AI has an intellectual property problem},
  author={Appel, Gil and Neelbauer, Juliana and Schweidel, David A},
  journal={Harvard Business Review},
  volume={7},
  year={2023}
}

@article{wang2025sindiffusion,
  title={Sindiffusion: Learning a diffusion model from a single natural image},
  author={Wang, Weilun and Bao, Jianmin and Zhou, Wengang and Chen, Dongdong and Chen, Dong and Yuan, Lu and Li, Houqiang},
  journal={IEEE Transactions on Pattern Analysis and Machine Intelligence},
  year={2025},
  publisher={IEEE}
}

@inproceedings{kulikov2023sinddm,
  title={Sinddm: A single image denoising diffusion model},
  author={Kulikov, Vladimir and Yadin, Shahar and Kleiner, Matan and Michaeli, Tomer},
  booktitle={International conference on machine learning},
  pages={17920--17930},
  year={2023},
  organization={PMLR}
}


\newpage

\appendix

\section{Subpopulations Model and Connections to Diffusion Models}

\label{app:Feldman}
In this section, we present a more extensive exposition of the framework of the work of \cite{feldman2020does}. Moreover, we adapt this framework to diffusion models.

\subsection{%
\texorpdfstring{Subpopulations Model of Feldman~\cite{feldman2020does}}
{Subpopulations Model of Feldman (2020)}}

Let us recall the subpopulations model of \cite{feldman2020does}.
Let us consider a continuous data domain $X \subseteq \R^d$. We model the data distribution as a mixture of $N$ fixed distributions $M_1,...,M_N$, where each component corresponds to a subpopulation. For simplicity, we follow Feldman \cite{feldman2020does} and assume that each component $M_i$ has disjoint support $X_i$ (we can relax this condition, see \Cref{remark:GMM}). Without loss of generality, let $X = \cup_i X_i.$  We will now describe the procedure of \cite{feldman2020does} that assigns frequencies to each subpopulation of the mixture.

\begin{enumerate}
    \item First consider a (fixed) list of frequencies $\pi = (\pi_1, \pi_2, ..., \pi_N)$.
    \item For each component $i \in [N]$ of the mixture, we select randomly and independently an element $p_i$ from the list $\pi$.
    \item Finally, to obtain the mixing weights, we normalize the weights $p_1,...,p_N$, i.e., the weight of component $i$ is $D_i = \frac{p_i}{\sum_{j \in [N]} p_j}$.
\end{enumerate}

We summarize the above as follows:
\begin{definition}
[Random Frequencies \citep{feldman2020does}]
\label{def:RandFreq}
For the mixing weights, we first consider a list of subpopulation frequencies $\pi = (\pi_1,...,\pi_N)$. 
The procedure is the following: we 
randomly pick $p_i$ from the list $\pi$ for any index $i \in [N]$ and then we normalize ($p_i/\sum_j p_j$ denotes the frequency of subpopulation $i$).
We denote by $\calD_\pi$ the distribution over probability mass functions on $[N]$ induced by the above procedure.
\end{definition} 

A sample $D \sim \calD_\pi$ is just a list of the frequencies of the $N$ subpopulations.
 
We also denote by $\overline{\pi}$ the resulting marginal distribution over the frequency of any single
element in $i$, i.e.,
\begin{equation}
    \overline{\pi}(a) = \Pr_{D \sim \calD_\pi}[D_i = a]\,.
    \label{marginal}
\end{equation}
Hence, if $D \sim \calD_\pi$, then we can define the true mixture as
\[
M_D(x) = \sum_{i \in [N]} D_i M_i(x)\,.
\]
The above random distribution corresponds to the subpopulations model introduced by Feldman \cite{feldman2020does}. Intuitively the choice of the random coefficients for the mixture corresponds to the fact that the learner does not know the true frequencies of the subpopulations. 

\subsection{%
\texorpdfstring{Adaptation of \cite{feldman2020does}'s Result to Diffusion Models}
{Adaptation of Feldman (2020)'s Result to Diffusion Models}}
As explained in the Background Section \ref{sec:background}, one way to train a generative model is to estimate the score function $\nabla \log M_{D_t}$ for all levels of noise indexed by $t$. For the analysis of this Section, we consider the case of a single fixed $t$.
We define learning algorithms $A$ as (potentially randomized) mappings from datasets $Z$ to \emph{score functions} $s_\theta \sim A(Z).$ 
We further define the expected error of $A$ conditioned on dataset being equal to $Z \in X^n$ (eventually 
$Z$ will be drawn i.i.d. from $M_D$) 
 as
\[
\overline{\mathrm{err}}(\pi, A | Z)
=
\E_{D \sim \calD_\pi(\cdot|Z)} \E_{s_\theta \sim A(Z)} \mathrm{err}_{M_D}(s_\theta)\,,
\]
{where $D \sim \calD_\pi$ is a random list of frequencies according to \Cref{def:RandFreq}}
and $\mathrm{err}_{M_D}(s_\theta)=\E_{x_0 \sim M_D} L(s_\theta; x_0)$
for some loss function $L$.  The results we will present shortly are agnostic to the choice of loss function $L$, but the reader should think of $L$ as the noise prediction loss used in \eqref{eq:noise_pred_loss} for a fixed time $t$.

We remark that the quantity $\overline{\mathrm{err}}(\pi, A | Z)$ measures the generalization error of the output score function (according to loss function $L$) of the learning algorithm $A$ {conditional on the training set being $Z$}. We will relate this generalization error with the loss in the training set.
Recall that any subpopulation $i \in [N]$ of the mixture is associated with a domain $X_i$ (and $X_i \cap X_j = \emptyset$ for $i \neq j$). 

Let $n$ be the training set size.
For any $\ell \in [n]$, consider all the subpopulations $I_\ell \subseteq [N]$ such that $X_i \cap Z = \ell$ for $i \in I_\ell$; in words, $i \in I_\ell$ if there are exactly $\ell$ representatives of cluster $i$ in the dataset $Z.$ We can now define
$Z_\ell = \{x \in X_i \cap Z: i \in I_\ell \} \subseteq Z$. Note that the sets $Z_1,...,Z_N$ partition the training set $Z$. For $\ell \in [n]$, we define
\begin{equation}
    \mathrm{errn}_Z(A, \ell) =
    \E_{s_\theta \sim A(Z)} \sum_{x \in Z_\ell} M_{i_x}(x) L(s_\theta; x)\,.
\end{equation}
Here $i_x \in [N]$ is the unique index of the component whose support contains $x.$
In words, $\mathrm{errn}_Z(A, \ell)$ is the loss of the algorithm $A$ evaluated on the elements of the training set $Z$ that belong to subpopulations will exactly $\ell$ representatives in $Z$.

We show the following result, which is an adaptation of a result of \cite{feldman2020does} and relates the population loss with the empirical losses $\mathrm{errn}_Z(A,1),...,\mathrm{errn}_Z(A,n)$.

\begin{theorem}
\label{lemma:mainFeldman}
Fix a number of samples $n$.
Let $\{M_i\}_{i \in [N]}$ be densities of subpopulations over disjoint subdomains $\{X_i\}_{i \in [N]}$.
Let $\pi$ be the fixed list of frequencies as in \Cref{def:RandFreq} and let 
$\bar{\pi}^N$ the marginal distribution of \eqref{marginal}.
For any learning algorithm $A$ and any fixed dataset $Z \in X^n$, 
it holds that
\begin{equation}
\overline{\mathrm{err}}(\pi, A | Z)
= \overline{\mathrm{err}}_{\mathrm{unseen}}(\pi, A | Z) + \sum_{\ell \in [n]} \tau_\ell \cdot \mathrm{errn}_{Z}(A,\ell)\,,
\end{equation}
where
\begin{enumerate}[noitemsep,topsep=0pt,parsep=0pt,partopsep=0pt]
    \item $\overline{\mathrm{err}}_{\mathrm{unseen}}(\pi, A | Z)$ corresponds to the expected $Z$-conditional loss of the algorithm $A$ on the points that do not appear in the training set $Z$.

    \item $\tau_\ell$ is a coefficient that corresponds to the weight of having subpopulations with exactly $\ell$ representatives. Given $\calD_\pi$ and $\ell \in [n]$, we define
\[
\tau_\ell = \frac{\E_{\alpha \sim \overline{\pi}}[\alpha^{\ell+1}(1-\alpha)^{n-\ell}]}{\E_{\alpha  \sim \overline{\pi}}[\alpha^{\ell} (1-\alpha)^{n-\ell}]}\,.
\]
\end{enumerate}
\end{theorem}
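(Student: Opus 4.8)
The plan is to compute the conditional distribution $\calD_\pi(\cdot \mid Z)$ explicitly and then expand the population error $\err_{M_D}(s_\theta) = \E_{x_0 \sim M_D} L(s_\theta; x_0)$ by splitting $X = \bigcup_i X_i$ into the subpopulations and grouping them by how many representatives they have in $Z$. First I would write, for a fixed dataset $Z \in X^n$, the likelihood of drawing exactly the multiset $Z$ from $M_D$: since the supports are disjoint, this is proportional to $\prod_{i \in [N]} D_i^{\,c_i(Z)}$ times a factor depending only on $Z$ (the product of the $M_i$ densities at the points and a multinomial coefficient), where $c_i(Z) = |X_i \cap Z|$ is the count of cluster $i$ in $Z$. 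Hence by Bayes' rule, $\calD_\pi(D \mid Z) \propto \calD_\pi(D) \prod_i D_i^{c_i(Z)}$. Because the prior $\calD_\pi$ is itself a normalization of i.i.d.\ picks $p_i \sim \pi$, it is convenient to work with the unnormalized $p_i$'s and note that the posterior over $(p_1,\dots,p_N)$ remains a product measure tilted coordinatewise by $p_i^{c_i(Z)}$ (the normalization $\sum_j p_j$ only enters through the $D_i$'s and can be tracked separately, or one can use the standard trick of introducing an independent Gamma/exponential scaling so that the $D_i$'s and the total mass decouple, exactly as in Feldman's Lemma~2.3).

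Next I would plug this posterior into $\overline{\err}(\pi, A \mid Z) = \E_{D \sim \calD_\pi(\cdot \mid Z)} \E_{s_\theta \sim A(Z)} \sum_{i \in [N]} D_i \E_{x \sim M_i} L(s_\theta; x)$ and interchange the expectation over $D$ with the sum over $i$. For each term we need $\E_{D \sim \calD_\pi(\cdot\mid Z)}[D_i]$. Splitting by $\ell = c_i(Z)$: the clusters $i$ with $c_i(Z) = 0$ collectively give the ``unseen'' term $\overline{\err}_{\mathrm{unseen}}(\pi, A \mid Z)$; the clusters with $c_i(Z) = \ell \ge 1$ each contribute $\E[D_i \mid Z]$ times $\E_{x \sim M_i} L(s_\theta; x)$. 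The key computation is showing that for $i$ with $c_i(Z) = \ell$, the posterior mean $\E_{D \sim \calD_\pi(\cdot \mid Z)}[D_i]$ equals exactly $\tau_\ell \cdot M_i(x)$ summed appropriately over the representatives $x \in X_i \cap Z$ — more precisely, that the contribution of such a cluster reorganizes into $\tau_\ell \sum_{x \in X_i \cap Z} M_i(x) L(s_\theta; x)$, matching the definition of $\mathrm{errn}_Z(A,\ell)$. This is where the specific form $\tau_\ell = \E_{\alpha \sim \bar\pi}[\alpha^{\ell+1}(1-\alpha)^{n-\ell}] / \E_{\alpha \sim \bar\pi}[\alpha^{\ell}(1-\alpha)^{n-\ell}]$ emerges: the numerator comes from the extra $D_i$ factor in the expectation (raising the power from $\ell$ to $\ell+1$), and the $(1-\alpha)^{n-\ell}$ factors come from integrating out the contributions of the $n-\ell$ ``other draws'' in a symmetric way, exploiting that the $p_j$ for $j \ne i$ are i.i.d.\ from $\pi$ and independent of $p_i$.

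The cleanest route to the $\tau_\ell$ formula is to exploit exchangeability of the $n$ data points and of the $N$ cluster labels: condition on which cluster each of the $n$ points came from (a composition of $n$ into the $N$ cells), observe that by symmetry the posterior law of $D_i$ given that cell $i$ received $\ell$ of the $n$ points depends only on $\ell$ and $n$, and then compute $\E[D_i \mid c_i = \ell]$ by a ratio of two integrals against $\bar\pi$ — one weighting $\alpha \sim \bar\pi$ by $\alpha^\ell(1-\alpha)^{n-\ell}$ (the ``evidence'' that cell $i$ got exactly $\ell$ of $n$ draws, treating the draws as approximately independent Bernoulli$(\alpha)$ in the large-$N$ / normalized-weights limit that Feldman works in) and one weighting by $\alpha^{\ell+1}(1-\alpha)^{n-\ell}$ (the same, with one extra factor of $\alpha = D_i$ from the quantity being averaged). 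Summing $\E_{x\sim M_i}L(s_\theta;x)$ weighted by $\E[D_i\mid Z]$ over all clusters with $\ell$ representatives, and then using that $\E_{x \sim M_i} L = \sum_{x \in X_i \cap Z} M_i(x) L(s_\theta; x) + (\text{mass outside } Z)$ — where the ``mass outside $Z$'' piece is absorbed into $\overline{\err}_{\mathrm{unseen}}$ — yields the claimed identity after summing over $\ell \in [n]$.

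I expect the main obstacle to be making the decoupling between the individual weights $p_i$ and the normalizer $\sum_j p_j$ fully rigorous, so that the posterior genuinely factorizes and the ``$n-\ell$ other draws'' can be integrated out to produce the clean $(1-\alpha)^{n-\ell}$ factor rather than some messier expression depending on the whole configuration. Feldman handles this via a careful combinatorial identity (his Lemma~2.3 and the surrounding discussion); I would either invoke that identity directly in the disjoint-support setting — which is exactly the setting that makes the likelihood a pure product $\prod_i D_i^{c_i(Z)}$ — or, alternatively, give a self-contained derivation using the Poissonization/Gamma-scaling trick that renders the $p_i$ independent even after conditioning, so that each cluster's posterior weight is an independent size-biased sample and the ratio-of-integrals form of $\tau_\ell$ falls out immediately. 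The interchange of the (finite) sums and expectations is routine, and the identification of $\overline{\err}_{\mathrm{unseen}}$ with ``everything not pinned to a seen point'' is just bookkeeping, so the combinatorial core is the only place requiring real care.
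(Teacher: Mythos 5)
Your proposal follows essentially the same route as the paper's proof: expand $\overline{\mathrm{err}}(\pi, A \mid Z)$ over the domain partitioned by how many representatives each subpopulation has in $Z$, use that $M_D(x)$ and $L(s_\theta, x)$ are independent given $Z$, and reduce everything to the posterior-mean identity $\E_{D \sim \calD_\pi(\cdot \mid Z)}[D_{i_x}] = \tau_\ell$, which the paper (like your primary option) imports directly from Feldman's lemma rather than re-deriving via the tilting/Gamma-decoupling argument you sketch. The remaining differences are bookkeeping only (the paper, as you anticipate, absorbs all points outside $Z$ — including unsampled points of seen clusters — into $\overline{\mathrm{err}}_{\mathrm{unseen}}$), so the two arguments coincide.
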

For the proof we refer to \Cref{app:proof}.
The above general form relates the population error of the model with its loss on the training set. The crucial parameters that relate the two quantities are the coefficients $\tau_1,...,\tau_n$. If the coefficient $\tau_1$ is large, it means that if the model does not fit the training examples that appear once in the dataset (''rare examples''), it will have to pay roughly $\tau_1$ in the generalization error. As shown by \cite{feldman2020does}, $\tau_1$ is controlled by how much heavy-tailed is the distribution of the frequencies of the mixture model. This is the topic of the next section, where we also investigate the effect of adding noise to the training set.

\begin{remark}
[Gaussian Mixture Models]
Subpopulations are often modeled as Gaussians. If the
probability of the overlap between the subpopulations is sufficiently small (the means are far), then one can reduce this case
to the disjoint one by modifying the components $M_i$
to have disjoint supports while changing the marginal
distribution over $Z$ by at most $\delta$ in the TV distance. 
\label{remark:GMM}
\end{remark}

\subsection{Heavy-Tailed Distributions of Frequencies}
In this section, we are going to formally explain what it means for the frequencies of the original dataset to be heavy-tailed \citep{zhu2014capturing,feldman2020does}. This heavy-tailed structure will then allow us to control the generalization error in \Cref{thm:Informal}. }Following Feldman \cite{feldman2020does}, we will assume that the mixing coefficients $D_1,...,D_N$ are drawn from a heavy-tailed distribution since this is the case in most datasets \cite{feldman2020does,feldman2020neural}.
We will be interested in subpopulations
that have only one representative in the training set $Z$ (these are the examples that will cost roughly $\tau_1$ in the error of \Cref{thm:Informal}).
We will refer to them as \emph{single} subpopulations.
For this to happen given that $|Z| = n$, it should be roughly speaking the case where some frequencies $D_i$ are of order $1/n$.

The quantity that controls how many of the frequencies $D_i$ will be of order $1/n$ is the marginal distribution $\overline{\pi}(a) = \Pr_{D}[D_i = a].$
We first note that the expected number of singleton examples is determined by the weight of the entire tail
of frequencies below $1/n$ in $\overline{\pi}$. In particular, one can show (see \cite{feldman2020does}) that the
expected number of singleton points
is at least
\[
\frac{n}{2} \cdot \mathrm{weight}(\overline{\pi}, [0,1/n])\,, \quad \mathrm{where}
\]
\begin{align*}
    \mathrm{weight}(\overline{\pi}, [0,1/n]) &:= 
    \E_{D \sim \calD}\left[\sum_{i \in [N]} D_i \mathbf{1}\{D_i \in [0,1/n]\} \right]
    = N \cdot \E_{a \sim \overline{\pi}}[a \mathbf{1}\{a \in [0,1/n]\}]\,.
\end{align*}

The above $\mathrm{weight}$ function essentially controls how  heavy-tailed our distribution over frequencies is. Typically, we will call a list of frequencies $\pi$ heavy-tailed if
\[
\mathrm{weight}\left(\overline{\pi}, \left[\frac{1}{2n}, 1/n\right]\right) = \Omega(1)\,.
\]
In words, there should be a constant number of subpopulations with frequencies of order $1/n$.
 This definition is important because it can then lower bound the value $\tau_1$ in \Cref{thm:Informal} and hence it can lower bound the generalization loss of not fitting single subpopulations.

\begin{lemma}
[Lemma 2.6 in \cite{feldman2020does}]
For any $\pi$, it holds that
$\tau_1 \geq \frac{1}{5n} \cdot \mathrm{weight}(\overline{\pi}, [\frac{1}{3n}, \frac{2}{n}])$.
\label{lemma:T1}
\end{lemma}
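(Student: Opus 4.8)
The plan is to read $\tau_1$ as a conditional mean of the frequency and to lower bound it by the contribution of frequencies of order $1/n$, which is exactly what $\mathrm{weight}(\overline\pi,[\tfrac1{3n},\tfrac2n])$ measures. Write $w(\alpha)=\alpha(1-\alpha)^{n-1}$; then by definition $\tau_1=\E_{\alpha\sim\overline\pi}[\alpha\,w(\alpha)]/\E_{\alpha\sim\overline\pi}[w(\alpha)]=\E_{\alpha\sim\nu}[\alpha]$, where $\nu$ is the tilted probability measure $\nu(\mathrm{d}\alpha)\propto w(\alpha)\,\overline\pi(\mathrm{d}\alpha)$; equivalently $\tau_1\,\E_{\overline\pi}[w]=\E_{\overline\pi}[\alpha\,w]$. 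I would first record the normalization identity $\E_{\alpha\sim\overline\pi}[\alpha]=\E_{D\sim\calD_\pi}[D_i]=1/N$, which is forced by the symmetry of the frequency-generating procedure of \Cref{def:RandFreq} across the $N$ subpopulations together with $\sum_i D_i\equiv 1$; and recall that by the definition of $\mathrm{weight}$ one has $\mathrm{weight}(\overline\pi,I)=N\,\E_{\alpha\sim\overline\pi}[\alpha\,\mathbf 1\{\alpha\in I\}]$ for $I:=[\tfrac1{3n},\tfrac2n]$.

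The core estimate is a lower bound on the numerator restricted to $I$, and the trick is to keep $w(\alpha)=\alpha(1-\alpha)^{n-1}$ together rather than splitting it. Since $\beta\mapsto\beta(1-\beta)^{n-1}$ is unimodal on $[0,1]$ (its derivative has the sign of $1-n\beta$, so it increases up to $1/n$ and then decreases), and since $I$ contains $1/n$, the minimum of $w$ over $I$ is attained at an endpoint; hence for every $\alpha\in I$,
\[
\alpha^2(1-\alpha)^{n-1}=\alpha\cdot w(\alpha)\ \ge\ \alpha\cdot c_n,\qquad c_n:=\min\Bigl\{\tfrac1{3n}\bigl(1-\tfrac1{3n}\bigr)^{n-1},\ \tfrac2n\bigl(1-\tfrac2n\bigr)^{n-1}\Bigr\}.
\]
Integrating against $\overline\pi$ over $I$ gives $\E_{\overline\pi}[\alpha^2(1-\alpha)^{n-1}]\ge c_n\,\E_{\overline\pi}[\alpha\,\mathbf 1\{\alpha\in I\}]$, while for the denominator the trivial bound $(1-\alpha)^{n-1}\le 1$ gives $\E_{\overline\pi}[\alpha(1-\alpha)^{n-1}]\le\E_{\overline\pi}[\alpha]=1/N$. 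Dividing and using the $\mathrm{weight}$ identity,
\[
\tau_1\ \ge\ c_n\, N\,\E_{\overline\pi}[\alpha\,\mathbf 1\{\alpha\in I\}]\ =\ c_n\cdot\mathrm{weight}(\overline\pi,I)\,.
\]

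It then remains to verify $c_n\ge\tfrac1{5n}$, i.e. $\bigl(1-\tfrac1{3n}\bigr)^{n-1}\ge\tfrac35$ and $\bigl(1-\tfrac2n\bigr)^{n-1}\ge\tfrac1{10}$ for $n\ge 3$. Both are elementary: the first quantity is decreasing in $n$ with limit $e^{-1/3}\approx 0.717>\tfrac35$, and the second is increasing in $n$ (for $n\ge3$) with limit $e^{-2}\approx 0.135>\tfrac1{10}$, and one checks the few smallest values by hand. The case $n=1$ is immediate since then $w\equiv\alpha$ on $[0,1]$ and $\min_{\alpha\in[\frac13,1]}\alpha=\tfrac13\ge\tfrac15$, giving $\tau_1=\E_{\overline\pi}[\alpha^2]/\E_{\overline\pi}[\alpha]\ge\tfrac13\,\mathrm{weight}(\overline\pi,I)$. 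The one genuinely special case is $n=2$, where the right endpoint of $I=[\tfrac16,1]$ is the degenerate point $\alpha=1$ and $c_2=0$; there I would restrict to the sub-window $[\tfrac16,\tfrac12]\subset I$ (on which $\alpha(1-\alpha)\ge\tfrac5{36}$) to capture the mass in $[\tfrac16,\tfrac12]$, and dispose of the mass in $(\tfrac12,1]$ using that at most one frequency can exceed $\tfrac12$.

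The main obstacle is really just the identification of the right argument for the core estimate: bounding the two factors $\alpha\ge\tfrac1{3n}$ and $(1-\alpha)^{n-1}\ge(1-\tfrac2n)^{n-1}$ \emph{separately} only yields $c_n\approx e^{-2}/(3n)$, which is roughly a factor $5$ short of $\tfrac1{5n}$ — that crude version still proves the qualitative $\tau_1=\Omega(1/n)$ statement needed in the body, but not the stated constant. Keeping $\alpha(1-\alpha)^{n-1}$ together and invoking unimodality of $\beta(1-\beta)^{n-1}$ recovers the missing factor. Everything else — the tilted-measure reformulation, the normalization $\E_{\overline\pi}[\alpha]=1/N$, the one-line denominator bound, and the small-$n$ bookkeeping — is routine.
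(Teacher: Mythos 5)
First, note that the paper does not prove \Cref{lemma:T1} at all --- it is quoted verbatim as Lemma 2.6 of \cite{feldman2020does} --- so there is no in-paper proof to compare against; your argument is a plausible reconstruction of the cited one. For $n\ge 3$ your proof is correct and complete up to routine calculus: the identity $\E_{\overline\pi}[\alpha]=1/N$ follows from exchangeability of \Cref{def:RandFreq} and $\sum_i D_i=1$; the identity $\mathrm{weight}(\overline\pi,I)=N\,\E_{\overline\pi}[\alpha\,\mathbf 1\{\alpha\in I\}]$ matches the paper's definition; the denominator bound $\E[\alpha(1-\alpha)^{n-1}]\le \E[\alpha]=1/N$ is trivial; and the unimodality of $w(\beta)=\beta(1-\beta)^{n-1}$ correctly reduces the numerator estimate to the two endpoint values. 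The required endpoint inequalities do hold: $(1-\tfrac{1}{3n})^{n-1}\ge e^{-1/3}>\tfrac35$ (e.g.\ via $\ln(1-x)\ge -x/(1-x)$), and $(1-\tfrac2n)^{n-1}\ge \tfrac19>\tfrac1{10}$ for all $n\ge 3$. Your observation that keeping $\alpha(1-\alpha)^{n-1}$ together (rather than bounding the factors separately) is what recovers the constant $\tfrac1{5n}$ is exactly right, and your $n=1$ case is fine.

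The genuine gap is the $n=2$ case, which you flag but do not close, and the patch you sketch is insufficient as stated. The fact that at most one frequency can exceed $\tfrac12$ only gives $\mathrm{weight}(\overline\pi,(\tfrac12,1])\le 1$; it does not force $\tau_1$ to dominate that mass. Indeed, for a general exchangeable random probability vector the $n=2$ inequality is simply false: take $D=(1-\delta,\tfrac{\delta}{N-1},\dots)$ with probability $q$ and $D=(\tfrac1N,\dots,\tfrac1N)$ with probability $1-q$ (symmetrized). Then $\mathrm{weight}(\overline\pi,[\tfrac16,1])\approx q$, while $\E[\alpha(1-\alpha)]\approx \tfrac{1-q}{N}$ and $\E[\alpha^2(1-\alpha)]\approx \tfrac{q\delta}{N}+\tfrac{1-q}{N^2}$, so $\tau_1\to 0$ as $\delta\to 0,\ N\to\infty$ even though the weight stays $\approx q$ --- and this example has at most one coordinate above $\tfrac12$. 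So any correct treatment of $n=2$ must exploit the specific iid-draw-then-normalize structure of $\calD_\pi$ (e.g.\ the fact that a single dominant coordinate is necessarily accompanied, with comparable probability, by realizations in which two or more coordinates of comparable size split the mass, which props up the numerator of $\tau_1$); your argument never uses this structure. Since the regime of interest in the paper is large $n$, this is a corner case, but as written your proof establishes the stated bound only for $n=1$ and $n\ge 3$, not ``for any $\pi$'' at every $n$.
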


As an illustration, if $\pi$ is the Zipf distribution and the number of clusters $N \geq n$ then $\tau_1 = \Omega(1/n)$
and $\mathrm{weight}(\overline{\pi}, [0,1/n]) = \Omega(1)$ (see \cite{feldman2020does} for more examples). On the contrary, when $\pi$ is not heavy-tailed, $\tau_1$ will be small.

\begin{lemma}
[Lemma 2.7 in \cite{feldman2020does}]
\label{T1:upper}
 Let $\pi$ be a frequency prior such that for some $\theta \leq 1/(2n)$, $\mathrm{weight}(\overline{\pi}, [\theta, t/n]) = 0,$ where $t = \ln(1/(\theta \beta))$, $\beta = \mathrm{weight}(\overline{\pi}, [0, \theta])$. Then $\tau_1 \leq 2\theta$.
\end{lemma}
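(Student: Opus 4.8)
The plan is to bound $\tau_1 = \E_{\alpha\sim\bar\pi}[\alpha^2(1-\alpha)^{n-1}] \big/ \E_{\alpha\sim\bar\pi}[\alpha(1-\alpha)^{n-1}]$ by controlling the numerator from above and the denominator from below, using the structural assumption that $\bar\pi$ puts no mass on the band $[\theta, t/n]$ with $t = \ln(1/(\theta\beta))$ and $\beta = \mathrm{weight}(\bar\pi,[0,\theta])$. First I would split the support of $\bar\pi$ into the ``small'' region $[0,\theta]$ and the ``large'' region $(t/n, 1]$ (there is nothing in between). For the numerator, on the small region I bound $\alpha^2(1-\alpha)^{n-1} \le \theta \cdot \alpha$, so that this part contributes at most $\theta \cdot \E[\alpha \mathbf 1\{\alpha\le\theta\}] = \theta \beta / N$ after recalling $\mathrm{weight}(\bar\pi,[0,\theta]) = N\,\E_{a\sim\bar\pi}[a\mathbf 1\{a\le\theta\}] = \beta$; wait — more carefully, I should track whether $\tau_1$ is phrased in terms of $\bar\pi$ as a probability measure (so $\E_{\alpha\sim\bar\pi}$ integrates against $\bar\pi$ directly) or weighted by $N$, and normalize constants accordingly. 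For the large region, I use that $(1-\alpha)^{n-1} \le e^{-(n-1)\alpha} \le e^{-(n-1)t/n}$, which for $t/n \le 1$ and $n$ not too small is $\approx e^{-t} = \theta\beta$; combined with $\alpha^2 \le \alpha$ this gives a contribution $\lesssim \theta\beta$ to the numerator as well.

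Next I would lower-bound the denominator. The key point is that the small region alone already gives enough: on $[0,\theta]$ we have $(1-\alpha)^{n-1} \ge (1-\theta)^{n-1} \ge 1 - (n-1)\theta \ge 1/2$ since $\theta \le 1/(2n)$, so $\E_{\alpha\sim\bar\pi}[\alpha(1-\alpha)^{n-1}\mathbf 1\{\alpha\le\theta\}] \ge \tfrac12 \E_{\alpha\sim\bar\pi}[\alpha\mathbf 1\{\alpha\le\theta\}] = \tfrac{\beta}{2N}$ (again up to the normalization convention). Dividing the numerator bound (of order $\theta\beta/N$) by the denominator bound (of order $\beta/N$) yields $\tau_1 \lesssim \theta$, and a careful accounting of the constants — checking that the large-region contribution to the numerator is genuinely dominated and does not inflate the constant past $2$ — gives the claimed $\tau_1 \le 2\theta$.

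I expect the main obstacle to be the bookkeeping on the large region $(t/n,1]$: I need the exponential decay $e^{-(n-1)\alpha}$ evaluated at the band edge $\alpha = t/n$ to beat the potentially large mass $\bar\pi((t/n,1])$ there, and this is exactly why $t$ is chosen to be $\ln(1/(\theta\beta))$ — it is the smallest $t$ for which $e^{-t} \le \theta\beta$, matching the small-region contribution. The delicate step is handling the $n/(n-1)$ slack in the exponent and the case $t/n > 1$ (where the large region is empty and the bound is immediate), and then verifying the final constant is at most $2$ rather than some larger absolute constant. The rest is the routine splitting-and-bounding described above; I would also double-check the normalization of $\mathrm{weight}$ versus $\E_{\alpha\sim\bar\pi}$ at the very start so the $N$'s cancel cleanly.
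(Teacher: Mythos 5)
The paper does not actually prove this statement—it is imported verbatim as Lemma~2.7 of \cite{feldman2020does}—so I am judging your sketch on its own terms. Your overall mechanism is the right one: split the support of $\overline{\pi}$ into $[0,\theta]$ and $(t/n,1]$ using the empty band, lower bound the denominator of $\tau_1$ by the small region via $(1-\theta)^{n-1}\ge 1-(n-1)\theta\ge 1/2$, and use $e^{-t}=\theta\beta$ to match the large-region contribution to the small-region scale; the case $t/n>1$ and the $\mathrm{weight}$-versus-$\E_{\alpha\sim\overline{\pi}}$ normalization are also handled correctly. This does give $\tau_1=O(\theta)$. However, it does not give the stated constant $2$, and the step you defer (``careful accounting of the constants'') is exactly where the content of the lemma lies. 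Concretely: (i) you bound the small-region numerator by $\theta\,\E[\alpha\mathbf{1}\{\alpha\le\theta\}]=\theta\beta/N$ and divide by the denominator bound $\beta/(2N)$, which already yields $2\theta$ from the dominant term alone; adding any positive large-region term then puts your bound strictly above $2\theta$ (with your large-region estimate it is $2\theta(1+e^{t/n})\ge 4\theta$). The fix is to bound the small-region numerator by $\theta$ times the \emph{small-region denominator}, so that this part contributes $\theta$, not $2\theta$, to the ratio. (ii) Your large-region bound $(1-\alpha)^{n-1}\le e^{-(n-1)t/n}=\theta\beta\,e^{t/n}$ combined with total $\alpha$-weight $\le 1$ leaves a factor $e^{t/n}$, which is not an ``$n/(n-1)$ slack'': in the live regime $t/n\in(\ln(2n)/n,\,1]$ it can be as large as $e$, and the resulting term exceeds $\theta$.

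To actually get the large-region contribution below $\theta$ you need two refinements your sketch omits: keep the polynomial factor rather than the exponential one (since $\alpha(1-\alpha)^{n-1}$ is decreasing past $1/n$ and $t\ge\ln(2n)>1$, its supremum on $(t/n,1]$ is $(t/n)(1-t/n)^{n-1}$, which near $t/n\approx 1$ is far smaller than $e^{-(n-1)t/n}$), and use that the $\alpha$-weight of the large region is $1-\beta$, not $1$, together with $\beta=e^{-t}/\theta\ge 2n e^{-t}$, which is what saves you near the left end $t\approx\ln(2n)$ where $\beta\approx 1$. One is then left to verify an inequality of the shape $x\,(1-\beta)\,e^{nx}(1-x)^{n-1}\le(1-\theta)^{n-1}$ for all $x=t/n\in(\ln(2n)/n,1]$; this is elementary but is genuine work, and with only the bounds you state it is false near $x=1$ (there $e^{t/n}(1-\beta)\approx e>1\ge(1-\theta)^{n-1}$). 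So as written the proposal establishes $\tau_1\le C\theta$ for some absolute constant $C>2$ but not the claimed $\tau_1\le 2\theta$; the ``routine splitting-and-bounding'' framing undersells precisely the step the lemma is about.
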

The above lemma indicates that when the frequencies are not heavy-tailed then $\tau_1$ is small (and hence generalization is not hurt by not memorizing).

\subsection{The Effects of Noise}
\label{sec:Noise}

In this section we analyze the effect of adding noise to the training set. We distinguish two cases: the low noise regime and the high noise regime.

\paragraph{Low Noise Regime.} 
When the noise level is small, the originally separated subpopulations (at $t = 0)$ will remain separated.
This implies that if the frequencies of the subpopulations were originally heavy-tailed (as in the above discussion), they will remain heavy-tailed even in the low-noise regime.
 This will imply that some clusters will be represented by singletons ($\ell = 1$) and any algorithm that satisfies $\mathrm{errn}_Z(A, 1) \neq 0$ has to pay $\tau_1 \cdot \mathrm{errn}_Z(A, 1)$ in the population error with $\tau_1$ being lower bounded as in \Cref{lemma:T1}. We interpret $\mathrm{errn}_Z(A,1) \approx 0$ as evidence for memorization.
To be more concrete, we will need the following definition that is a smooth generalization of single representative of a subpopulation.
\begin{definition}
We will say that a subpopulation $C$ has an $\eps$-smoothed single representative in a set of points $S$ belonging to $C$ if 
for any $x,x' \in S$, it holds that $\|x - x'\| \leq \eps.$
\end{definition}
Intuitively this means that if there are more than one images in the training set $Z$ from $C$, they are all very close to each other. This will be the case in diffusion with low noise.

 \begin{lemma}
[Subpopulations Remain Heavy-Tailed]
\label{lemma:SmoothHeavyTails}
Consider an example $x_0 \in \R^d$ that is the unique representative of a subpopulation $j \in [N]$ in the training set $Z$ with $\|x_0\| \leq \poly(d)$. 
Consider $m$ noisy copies $\{x_t^i\}_{i \in [m]}$ of $x_0$ at noise level $t:$
$x_t^i = \sqrt{1 - \sigma_t^2} x_0 + \sigma_t z_t^i, z_t^i \sim \calN(0, I_d)\,.$
Then the subpopulation $j$ has a $\poly(1/d)$-smoothed single representative in the set $\{x_t^i\}_{i \in [m]}$ for $\sigma_t = \poly(1/d)$ with  probability at least $1 - m \exp(-d/2)$.
 \end{lemma}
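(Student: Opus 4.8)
The plan is to reduce the statement to a routine concentration bound for the norm of a standard Gaussian vector. By the definition of an $\eps$-smoothed single representative, it suffices to produce some $\eps = \poly(1/d)$ such that, with probability at least $1 - m\exp(-d/2)$, every pair of noisy copies satisfies $\|x_t^i - x_t^{i'}\| \le \eps$. The key simplification is that the deterministic VP rescaling $\sqrt{1-\sigma_t^2}\,x_0$ is shared by all copies, so it cancels in differences: $x_t^i - x_t^{i'} = \sigma_t (z_t^i - z_t^{i'})$, and hence $\|x_t^i - x_t^{i'}\| \le \sigma_t(\|z_t^i\| + \|z_t^{i'}\|)$ by the triangle inequality. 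It is therefore enough to bound each individual Gaussian norm $\|z_t^i\|$.

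First I would invoke the $1$-Lipschitz concentration of the map $z \mapsto \|z\|$ (or, equivalently, a $\chi^2_d$ tail bound), which gives $\Pr[\|z_t^i\| \ge \sqrt d + s] \le \exp(-s^2/2)$ for all $s \ge 0$; choosing $s = \sqrt d$ yields $\Pr[\|z_t^i\| \ge 2\sqrt d] \le \exp(-d/2)$. A union bound over $i \in [m]$ then shows that, with probability at least $1 - m\exp(-d/2)$, all the norms $\|z_t^i\|$ are simultaneously at most $2\sqrt d$. On this event, $\|x_t^i - x_t^{i'}\| \le 4\,\sigma_t\,\sqrt d$ for every pair $i,i'$, and since $\sigma_t = \poly(1/d)$ this bound is $\poly(1/d)$; taking $\eps := 4\sigma_t\sqrt d$ establishes the smoothed-single-representative property for subpopulation $j$ in $\{x_t^i\}_{i\in[m]}$.

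To be thorough I would also verify, on the same event, that the copies $\{x_t^i\}$ genuinely stay associated with subpopulation $j$ rather than drifting toward another cluster: writing $\|x_t^i - x_0\| \le \sigma_t\|z_t^i\| + \big(1-\sqrt{1-\sigma_t^2}\big)\|x_0\| \le 2\sigma_t\sqrt d + \sigma_t^2\|x_0\|$ (using $1-\sqrt{1-\sigma_t^2}\le\sigma_t^2$), the hypothesis $\|x_0\| \le \poly(d)$ together with $\sigma_t = \poly(1/d)$ makes both terms $\poly(1/d)$, so the whole set lies in a $\poly(1/d)$-ball around $x_0 \in X_j$. There is no real obstacle here — the content is elementary Gaussian concentration — and the only points needing a little care are pinning down the constants so the failure probability is at most $m\exp(-d/2)$, and checking that a polynomially small $\sigma_t$ simultaneously suppresses both the Gaussian fluctuation scale $\sigma_t\sqrt d$ and the rescaling drift $\sigma_t^2\|x_0\|$ down to $\poly(1/d)$.
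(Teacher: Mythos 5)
Your proof is correct, and it rests on the same engine as the paper's argument (Gaussian norm concentration at scale $\sqrt{d}$ plus a union bound over the $m$ copies), but your decomposition is slightly different and a bit cleaner. The paper controls $\|x_t^i - x_0\|$ directly, Taylor-expanding $\sqrt{1-\sigma_t^2}\approx 1-\sigma_t^2/2$ and choosing $\sigma_t = o(1/\|x_0\|)$ so that the drift term $\poly(1/d)\,\|x_0\|$ is small; pairwise closeness of the copies then follows by the triangle inequality through $x_0$. You instead observe that the shared deterministic term $\sqrt{1-\sigma_t^2}\,x_0$ cancels exactly in $x_t^i - x_t^{i'} = \sigma_t(z_t^i - z_t^{i'})$, so the definitional requirement ($\|x_t^i - x_t^{i'}\|\le\eps$ for all pairs) follows with no Taylor step and no use of the hypothesis $\|x_0\|\le\poly(d)$ at all; that hypothesis only enters your supplementary check that the copies remain in a $\poly(1/d)$-ball around $x_0$, which mirrors the role it plays in the paper's proof. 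Two small points in your favor: your concentration threshold $\Pr[\|z\|\ge 2\sqrt d]\le \exp(-d/2)$ (via Lipschitz concentration of the norm) is the safe choice for the claimed failure probability, whereas a threshold of exactly $\sqrt d$ would not give an exponentially small bound; and you correctly flag that ``$\sigma_t=\poly(1/d)$'' must be read as choosing the polynomial decay of $\sigma_t$ large enough relative to the degrees in $\sqrt d$ and $\|x_0\|$, which is the same (harmless) looseness present in the statement and in the paper's own choice $\sigma_t = o(1/\|x_0\|)$.
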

 The proof appears in  \Cref{sec:addProofs}. The above lemma implies that if the original dataset contains various well separated images (in the sense that correspond to representatives of single subpopulations), then after adding noise to each one of them (and even if we create multiple copies for each example), the clusters will remain separated when $\sigma_t$ is small. This implies that the single subpopulations remain and \Cref{lemma:T1} applies ($\tau_1$ is large).

For an illustration, let us consider the GMM density function $q = \sum_{i = 1}^N w_i \calN(\mu_i, I)$. It is a standard calculation to see that at time $t$, the pdf of the forward diffusion process is $q_t = \sum_{i = 1}^N w_i \calN({\sqrt{1-\sigma_t^2}} \mu_i, I)$, which means that the clusters are starting to concentrate around $0$ as $t \to 1$ and the images from different subpopulations are starting to look more and more indistinguishable (since the TV distance between the components is contracting with $t)$.  We will say that two components $\calN, \calN'$ are $\eps$-separated if $\mathrm{TV}(\calN, \calN') > 2\eps$.

\begin{lemma}
[Clusters Are Separated in Low Noise] Any pair of Gaussians with original total variation $C = 1/600$ will be $\eps$-separated at noise scale {$\sigma_t \leq \sqrt{1-(2\eps/C)^2}$}.
\label{merging2}
\end{lemma}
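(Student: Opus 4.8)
The plan is to reduce the claim to a one-dimensional fact about the total variation between two unit-covariance Gaussians and then exploit concavity. By rotational invariance, for any $\mu,\mu'\in\R^d$ we have $\mathrm{TV}(\calN(\mu,I),\calN(\mu',I)) = g(\|\mu-\mu'\|)$ where $g(r):=2\Phi(r/2)-1$ and $\Phi$ is the standard normal CDF (the likelihood ratio of the two densities is monotone along $\mu-\mu'$ and crosses $1$ at the midpoint, so the TV equals $\Phi(r/2)-\Phi(-r/2)$). Under the forward process \eqref{eq:Forward}, the component $\calN(\mu_i,I)$ becomes $\calN_i^t=\calN(\sqrt{1-\sigma_t^2}\,\mu_i,I)$, so the gap between the two means is multiplied by $\lambda:=\sqrt{1-\sigma_t^2}\in[0,1]$. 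Writing $\delta:=\|\mu_i-\mu_j\|$, the hypothesis $\mathrm{TV}(\calN_i,\calN_j)=C$ reads $g(\delta)=C$, and we must show $g(\lambda\delta)>2\eps$.

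The key observation is that $g$ is concave on $[0,\infty)$ and vanishes at $0$: indeed $g(0)=0$, $g'(r)=\phi(r/2)$, and $g''(r)=-\tfrac{r}{4}\phi(r/2)\le 0$ (with $\phi$ the standard normal density), so $g$ is in fact strictly concave on $(0,\infty)$. A concave function vanishing at the origin is star-shaped — applying concavity to $\lambda r=\lambda\cdot r+(1-\lambda)\cdot 0$ gives $g(\lambda r)\ge \lambda g(r)$ for all $\lambda\in[0,1]$, strictly when $r>0$ and $\lambda\in(0,1)$. Hence $\mathrm{TV}(\calN_i^t,\calN_j^t)=g(\lambda\delta)\ge \lambda\,g(\delta)=\sqrt{1-\sigma_t^2}\cdot C$. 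The assumption $\sigma_t\le\sqrt{1-(2\eps/C)^2}$ rearranges to $\sqrt{1-\sigma_t^2}\ge 2\eps/C$, so this bound already gives $\mathrm{TV}(\calN_i^t,\calN_j^t)\ge 2\eps$, and for a genuinely distinct pair ($\delta>0$, and $\sigma_t>0$ so $\lambda\in(0,1)$) the star-shaped inequality is strict, yielding $\mathrm{TV}(\calN_i^t,\calN_j^t)>2\eps$, i.e. $\eps$-separation.

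I do not expect a genuinely hard step here: the whole content is that Gaussian TV, viewed as a function of the mean gap, is concave through the origin and hence decays at most linearly under mean-contraction, while the contraction factor $\sqrt{1-\sigma_t^2}$ stays above $2\eps/C$ in the stated noise range. The only points requiring care are getting the exact TV formula and the direction of the contraction right, and the minor strictness bookkeeping needed to land on $\mathrm{TV}>2\eps$ rather than $\ge 2\eps$, which follows from strict concavity of $g$ away from the origin (the truly degenerate boundary case $\sigma_t=0$ and $2\eps=C$ is vacuous). The particular value $C=1/600$ plays no role in this argument; it is fixed only for consistency with the surrounding GMM setup and the complementary high-noise merging bound.
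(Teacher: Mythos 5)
Your proof is correct, but it takes a genuinely different route from the paper. The paper's proof of \Cref{merging2} (given jointly with \Cref{merging} in \Cref{proof:Merging}) invokes the result of \cite{arbas2023polynomial} that in the small-TV regime ($C\le 1/600$) one has $\mathrm{TV}(\calN(\mu,I),\calN(\mu',I))=\Theta(\|\mu-\mu'\|)$, with explicit constants $\|\mu_t-\mu_t'\|/200 \le \mathrm{TV}(\calN_t,\calN_t') \le \|\mu_t-\mu_t'\|/\sqrt{2}$; applying the lower bound after the mean contraction by $\sqrt{1-\sigma_t^2}$ yields separation only for $\sigma_t \le \sqrt{1-200^2(2\eps/\|\mu-\mu'\|)^2}$, i.e.\ the stated threshold is recovered only up to constant factors (consistent with the lemma being labeled informal in the main text). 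You instead use the exact closed form $\mathrm{TV}(\calN(\mu,I),\calN(\mu',I))=g(\|\mu-\mu'\|)$ with $g(r)=2\Phi(r/2)-1$, note $g(0)=0$ and $g''(r)=-\tfrac{r}{4}\phi(r/2)\le 0$, and use star-shapedness of a concave function through the origin to get $g(\lambda\delta)\ge\lambda g(\delta)=\sqrt{1-\sigma_t^2}\,C\ge 2\eps$; your derivative computations, the contraction of the means under \eqref{eq:Forward}, and the strictness bookkeeping are all correct. What your route buys: it is self-contained (no external citation), it proves the lemma with exactly the stated constant $\sqrt{1-(2\eps/C)^2}$, and, as you observe, it needs no smallness assumption on $C$ at all; what the paper's route buys is a single two-sided characterization that handles the low-noise separation and the high-noise merging (\Cref{merging}) bounds simultaneously with one citation. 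Two trivial caveats: the boundary case $\sigma_t=0$, $2\eps=C$ gives $\mathrm{TV}=2\eps$ rather than $>2\eps$, so calling it ``vacuous'' is a slight mislabel (it is a genuine but immaterial corner, given the lemma's informal constants), and your star-shaped inequality could equally be replaced by the even more elementary observation that $g$ is nondecreasing together with one mean-value estimate, though concavity is the cleanest way to get the linear lower bound you need.
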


For the proof, see \Cref{proof:Merging}.

\paragraph{High Noise Regime.} As we increase $t$, we add more and more noise to the images. This means that the clusters start to merge and the heavy-tailed distribution of the mixing coefficients becomes lighter (until all the clusters are merged into a single one). To illustrate this phenomenon, we will consider a mixture of Gaussians, which is the standard model for clustering tasks (we expect similar behavior for more general mixture models).
Let us again consider the density function $q = \sum_{i = 1}^N w_i \calN(\mu_i, I)$. Also, let the pdf of the forward diffusion process be $q_t = \sum_{i = 1}^N w_i \calN({\sqrt{1-\sigma_t^2}} \mu_i, I)$. We will say that two components $\calN, \calN'$ can be $\eps$-merged if
$
\mathrm{TV}(\calN, \calN') \leq \eps.
$

\begin{lemma}
[Clusters Merge in High Noise] Any pair of Gaussians with original total variation  $C = 1/600$ will be $\eps$-merged at noise scale {$\sigma_t \geq \sqrt{1-(\eps/C)^2}$}.
\label{merging}
\end{lemma}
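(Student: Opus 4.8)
The plan is to reduce the claim to a statement about how the total variation distance between two Gaussians with a common covariance contracts as their means are rescaled by $\sqrt{1-\sigma_t^2}$. The two components in $q_t$ are $\calN(\sqrt{1-\sigma_t^2}\,\mu_i, I)$ and $\calN(\sqrt{1-\sigma_t^2}\,\mu_j, I)$, so I would first recall the exact formula $\mathrm{TV}(\calN(a,I),\calN(b,I)) = 2\Phi\!\left(\tfrac{\|a-b\|}{2}\right) - 1$, where $\Phi$ is the standard normal CDF; equivalently $\mathrm{TV} = \mathrm{erf}\!\left(\tfrac{\|a-b\|}{2\sqrt 2}\right)$. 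Applying this at $t=0$ gives $C = 2\Phi(\|\mu_i-\mu_j\|/2)-1$, and at time $t$ we get $\mathrm{TV}(\calN_i^t,\calN_j^t) = 2\Phi\!\left(\sqrt{1-\sigma_t^2}\,\|\mu_i-\mu_j\|/2\right)-1$. So the whole lemma is really the one-variable inequality: if $2\Phi(r/2)-1 = C$ and $\sigma_t \geq \sqrt{1-(\eps/C)^2}$ (equivalently $\sqrt{1-\sigma_t^2}\,r \leq (\eps/C)\,r$), then $2\Phi(\sqrt{1-\sigma_t^2}\,r/2)-1 \leq \eps$.

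The key step is a simple concavity/linear-bound argument on $g(s) := 2\Phi(s\,r/2)-1$ for $s\in[0,1]$. Since $g$ is concave on $[0,\infty)$ in its argument near the relevant range — more robustly, since $\Phi(x)-\tfrac12$ is subadditive and $g(0)=0$ — we have $g(\lambda) \le \lambda\, g(1)$ is false in general (concave functions lie \emph{above} the chord), so I would instead use the opposite-direction bound: for a function with $g(0)=0$ that is concave, $g(\lambda) \ge \lambda g(1)$, which is the wrong direction. The cleaner route is to use that $\Phi(x) - \tfrac12 \le x\,\phi(0) = x/\sqrt{2\pi}$ for $x\ge 0$ (tangent line at $0$ dominates by concavity of $\Phi-\tfrac12$ on $[0,\infty)$), hence $g(s) \le s\, r/\sqrt{2\pi}$. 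Meanwhile $C = g(1) \ge $ (a linear lower bound valid on a bounded range, e.g. $\Phi(x)-\tfrac12 \ge c_0 x$ for $x \in [0, x_{\max}]$ with an explicit $c_0$), giving $r \le C/c_0$. Combining, with $s = \sqrt{1-\sigma_t^2} \le \eps/C$, we get $g(s) \le (\eps/C)\cdot r/\sqrt{2\pi} \le (\eps/C)\cdot(C/c_0)/\sqrt{2\pi} = \eps/(c_0\sqrt{2\pi})$. For this to be $\le \eps$ we need $c_0\sqrt{2\pi} \ge 1$, i.e. $c_0 \ge 1/\sqrt{2\pi} \approx 0.399$; since $C = 1/600$ is tiny, the relevant $r$ is tiny, and on $[0, r]$ with $r$ small the slope of $\Phi - \tfrac12$ is essentially $\phi(0) = 1/\sqrt{2\pi}$, so $c_0$ can be taken arbitrarily close to $1/\sqrt{2\pi}$, and the specific constant $C = 1/600$ is exactly what makes the slack work out. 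I would carry out this matching of constants explicitly, using a quantitative lower bound like $\Phi(x) - \tfrac12 \ge \tfrac{x}{\sqrt{2\pi}}(1 - x^2/6)$ for small $x$ and checking $x = r/2 \lesssim C$ keeps the correction factor above, say, $1-1/600$.

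The main obstacle — really the only delicate point — is pinning down the numerical constants so that the chain of inequalities closes with $C = 1/600$; everything else is the exact Gaussian TV formula plus monotonicity of $\Phi$. A secondary (conceptual rather than technical) point to address is the claim that the merged component inherits coefficient $w_i + w_j$: this is immediate because if two components become statistically indistinguishable ($\mathrm{TV} \le \eps$) one treats the mixture $w_i \calN_i^t + w_j \calN_j^t$ as a single effective cluster of weight $w_i + w_j$, and I would state this as the definitional bookkeeping it is rather than something requiring proof. I expect the full write-up to be about half a page, dominated by the elementary calculus bounds on $\Phi$.
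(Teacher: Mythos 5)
Your overall route is essentially the paper's: pass to the time-$t$ components $\calN(\sqrt{1-\sigma_t^2}\,\mu_i,I)$, $\calN(\sqrt{1-\sigma_t^2}\,\mu_j,I)$ and exploit that, for small separations, the total variation is (near-)linear in the mean distance. The only real difference is cosmetic: the paper invokes the two-sided estimate $\mathrm{TV}(\calN(\mu,I),\calN(\mu',I))=\Theta(\|\mu-\mu'\|)$ in the small-TV regime (citing Arbas et al., with explicit constants such as $\|\mu_t-\mu_t'\|/\sqrt2$ for the upper bound) and phrases the sufficient condition in terms of $\|\mu-\mu'\|$, whereas you use the exact formula $2\Phi(\|a-b\|/2)-1$ and try to phrase everything in terms of $C$.

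The genuine problem is the last step, where you claim the constants close exactly ``because $C=1/600$ is tiny.'' They cannot. Writing $h(r):=2\Phi(r/2)-1$, your chain needs a slope $c_0$ with $h(r)\ge c_0 r$ \emph{and} $c_0\sqrt{2\pi}\ge 1$; but $\Phi(x)-\tfrac12 < x/\sqrt{2\pi}$ strictly for $x>0$, so every admissible $c_0$ satisfies $c_0<1/\sqrt{2\pi}$ and your bound ends at $\eps/(c_0\sqrt{2\pi})>\eps$. Taking $c_0$ closer to $1/\sqrt{2\pi}$ shrinks the excess but never removes it, and in fact no argument can remove it: $h$ is strictly concave with $h(0)=0$, so $h(sr)>s\,h(r)=\eps$ at $s=\eps/C\in(0,1)$, i.e.\ at the literal threshold $\sigma_t=\sqrt{1-(\eps/C)^2}$ the time-$t$ total variation slightly exceeds $\eps$ whenever $\eps<C$ (for $\eps\ge C$ the claim is trivial since TV only contracts). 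The correct repair is to read (and prove) the lemma up to constant factors, which is how the paper treats it: either conclude $\mathrm{TV}(\calN_i^t,\calN_j^t)\le \eps\,(1+O(C^2))$ — here the excess factor is about $1+r^2/24\approx 1+10^{-6}$ — or build a constant-factor slack into the threshold (e.g.\ require $\sqrt{1-\sigma_t^2}\le \eps/(2C)$), exactly as the paper's own proof does by deriving $\sigma_t\ge\sqrt{1-2(\eps/\|\mu-\mu'\|)^2}$ and then translating $\|\mu-\mu'\|$ into $C$ only up to the $\Theta(\cdot)$ constants. Your closing remark that the merged component carries weight $w_i+w_j$ by definition is fine and matches the paper.
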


For the proof, see \Cref{proof:Merging}. As the clusters are getting merged, then their coefficients are added up and their distribution is no more heavy-tailed. Hence, \Cref{T1:upper} implies that $\tau_1$ will be small. This conceptually indicates that there is no reason for memorizing the training noisy images $x_t$ (and hence the original images $x_0$ which do not appear during training).

 Given the above discussion, we reach the conclusion that if the frequencies of the original subpopulations are heavy-tailed then, in the low-noise regime, the training set will have single subpopulations and, in that case, fitting these single representatives is required for successful generalization. However, in the high-noise regime, the noisy training set does not have isolated examples and, in principle, there is no reason to memorize its elements (and hence even elements of the original set). We believe that this discussion sheds some light on the nature of memorization needed for optimal generative modeling and motivates our training \Cref{alg:training_algorithm} that avoids memorization only in the high-noise regime.

\section{Noisy Data Training of stable Diffusion using v-Prediction}

The variance-preserving forward process defines the following transition probability distribution: 
\begin{align*}
    p(X_t = x_t | X_0) = \normal( x_t; \alpha_t X_0, \sigma_t^2 I ) \;\; \text{and} \;\; p(X_t = x_t | X_s) = \normal( x_t; \; (\alpha_t / \alpha_s) X_s, \sigma_{t|s}^2 I ).
\end{align*}
where $\sigma_{t|s}^2 = (1 - \frac{ \alpha_t^2 \sigma_s^2 }{ \sigma_t^2 \alpha_s^2 } ) \sigma_t^2$.  Let $t_{\mathrm{n}}$ be the noise scale corresponding to the noisy data and the noisy data $x_{t_{\mathrm{n}}}$ from the clean data $x_0$ has the probability distribution  $ p(X_{t_{\mathrm{n}}} = x_{t_{\mathrm{n}}} | X_0) = \normal( x_{t_{\mathrm{n}}}; \; \alpha_{t_{\mathrm{n}}} X_0, \sigma_{t_{\mathrm{n}}}^2 I )$. In this case, the following Lemma holds. 

\begin{lemma}
    $\E[ X_{t_{\mathrm{n}}} | X_t ] = \frac{ \alpha_{t_{\mathrm{n}}} \sigma_{t | t_{\mathrm{n}}}^2 }{ \sigma_t^2 } \E[ X_0 | X_t ] + \frac{ \alpha_t \sigma_{t_{\mathrm{n}}}^2 }{ \sigma_t^2 \alpha_{t_{\mathrm{n}}} } X_t$. 
\end{lemma}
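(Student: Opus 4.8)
The plan is to exploit the Markov structure $X_0 \to X_{t_{\mathrm{n}}} \to X_t$ of the forward process and reduce everything to a one-line Gaussian conditioning computation. First I would condition on $X_0$ in addition to $X_t$ and apply the tower property: $\E[X_{t_{\mathrm{n}}}\mid X_t] = \E\big[\,\E[X_{t_{\mathrm{n}}}\mid X_t, X_0]\,\big|\,X_t\,\big]$. Conditioning on $X_0$ is what makes the argument go through even though $X_0$ itself need not be Gaussian: by the two transition kernels in the statement and the Markov property, given $X_0=x_0$ we can write $X_{t_{\mathrm{n}}} = \alpha_{t_{\mathrm{n}}} x_0 + \sigma_{t_{\mathrm{n}}}\epsilon$ and $X_t = \tfrac{\alpha_t}{\alpha_{t_{\mathrm{n}}}}X_{t_{\mathrm{n}}} + \sigma_{t|t_{\mathrm{n}}}\xi = \alpha_t x_0 + \tfrac{\alpha_t \sigma_{t_{\mathrm{n}}}}{\alpha_{t_{\mathrm{n}}}}\epsilon + \sigma_{t|t_{\mathrm{n}}}\xi$, where $\epsilon,\xi\sim\normal(0,I_d)$ are independent. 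In particular $(X_{t_{\mathrm{n}}}, X_t)$ is jointly Gaussian given $X_0=x_0$.

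Next I would compute $\E[X_{t_{\mathrm{n}}}\mid X_t, X_0]$ by the standard linear-regression formula for jointly Gaussian vectors. Writing $U := X_{t_{\mathrm{n}}} - \alpha_{t_{\mathrm{n}}} x_0 = \sigma_{t_{\mathrm{n}}}\epsilon$ and $V := X_t - \alpha_t x_0 = \tfrac{\alpha_t\sigma_{t_{\mathrm{n}}}}{\alpha_{t_{\mathrm{n}}}}\epsilon + \sigma_{t|t_{\mathrm{n}}}\xi$, one gets $\Cov(U,V) = \tfrac{\alpha_t\sigma_{t_{\mathrm{n}}}^2}{\alpha_{t_{\mathrm{n}}}}I_d$ and $\Cov(V,V) = \big(\tfrac{\alpha_t^2\sigma_{t_{\mathrm{n}}}^2}{\alpha_{t_{\mathrm{n}}}^2} + \sigma_{t|t_{\mathrm{n}}}^2\big)I_d = \sigma_t^2 I_d$, the last equality using the defining identity $\sigma_{t|t_{\mathrm{n}}}^2 = \sigma_t^2 - \tfrac{\alpha_t^2\sigma_{t_{\mathrm{n}}}^2}{\alpha_{t_{\mathrm{n}}}^2}$ from the statement. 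Hence $\E[U\mid V] = \tfrac{\alpha_t\sigma_{t_{\mathrm{n}}}^2}{\alpha_{t_{\mathrm{n}}}\sigma_t^2}V$, which yields
\[
\E[X_{t_{\mathrm{n}}}\mid X_t, X_0] = \alpha_{t_{\mathrm{n}}} X_0 + \frac{\alpha_t\sigma_{t_{\mathrm{n}}}^2}{\alpha_{t_{\mathrm{n}}}\sigma_t^2}\big(X_t - \alpha_t X_0\big) = \Big(\alpha_{t_{\mathrm{n}}} - \frac{\alpha_t^2\sigma_{t_{\mathrm{n}}}^2}{\alpha_{t_{\mathrm{n}}}\sigma_t^2}\Big)X_0 + \frac{\alpha_t\sigma_{t_{\mathrm{n}}}^2}{\alpha_{t_{\mathrm{n}}}\sigma_t^2}X_t .
\]
Using the same identity once more, the coefficient of $X_0$ simplifies: $\alpha_{t_{\mathrm{n}}} - \tfrac{\alpha_t^2\sigma_{t_{\mathrm{n}}}^2}{\alpha_{t_{\mathrm{n}}}\sigma_t^2} = \tfrac{\alpha_{t_{\mathrm{n}}}^2\sigma_t^2 - \alpha_t^2\sigma_{t_{\mathrm{n}}}^2}{\alpha_{t_{\mathrm{n}}}\sigma_t^2} = \tfrac{\alpha_{t_{\mathrm{n}}}\sigma_{t|t_{\mathrm{n}}}^2}{\sigma_t^2}$. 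Finally, applying $\E[\,\cdot\mid X_t]$ to both sides, pulling the deterministic constants out of the conditional expectation, and using $\E[X_t\mid X_t]=X_t$, gives $\E[X_{t_{\mathrm{n}}}\mid X_t] = \tfrac{\alpha_{t_{\mathrm{n}}}\sigma_{t|t_{\mathrm{n}}}^2}{\sigma_t^2}\,\E[X_0\mid X_t] + \tfrac{\alpha_t\sigma_{t_{\mathrm{n}}}^2}{\sigma_t^2\alpha_{t_{\mathrm{n}}}}\,X_t$, which is the claim.

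The computation is routine; the only two points requiring care are (i) invoking the Markov property so that $X_t$ is a noisy rescaling of $X_{t_{\mathrm{n}}}$ with fresh noise independent of $(X_0, X_{t_{\mathrm{n}}})$ — this is exactly what legitimizes conditioning on the (non-Gaussian) $X_0$ — and (ii) the two applications of the algebraic identity $\sigma_{t|t_{\mathrm{n}}}^2 = \big(1 - \tfrac{\alpha_t^2\sigma_{t_{\mathrm{n}}}^2}{\sigma_t^2\alpha_{t_{\mathrm{n}}}^2}\big)\sigma_t^2$ that both make $\Cov(V,V)=\sigma_t^2 I_d$ and collapse the coefficient of $X_0$. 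An essentially equivalent alternative would be to set $R := X_{t_{\mathrm{n}}} - \tfrac{\alpha_{t_{\mathrm{n}}}\sigma_{t|t_{\mathrm{n}}}^2}{\sigma_t^2}X_0 - \tfrac{\alpha_t\sigma_{t_{\mathrm{n}}}^2}{\sigma_t^2\alpha_{t_{\mathrm{n}}}}X_t$, expand it in $(\epsilon,\xi)$ to check that its $X_0$-component vanishes and that it is uncorrelated with $X_t$, conclude $R\perp X_t$ from joint Gaussianity of $(\epsilon,\xi)$ together with independence from $X_0$, and then use $\E[R\mid X_t]=\E[R]=0$; I would present the tower-property version as the main proof.
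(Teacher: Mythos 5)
Your proof is correct, but it takes a genuinely different route from the paper's. The paper applies Tweedie's formula twice — once for the channel $X_0 \to X_t$ (giving $\nabla \log p_t(X_t) = \frac{\alpha_t \E[X_0 \mid X_t] - X_t}{\sigma_t^2}$) and once for the channel $X_{t_{\mathrm{n}}} \to X_t$ (giving $\nabla \log p_t(X_t) = \frac{(\alpha_t/\alpha_{t_{\mathrm{n}}}) \E[X_{t_{\mathrm{n}}} \mid X_t] - X_t}{\sigma_{t\mid t_{\mathrm{n}}}^2}$) — then equates the two expressions for the score and solves for $\E[X_{t_{\mathrm{n}}} \mid X_t]$; your argument instead bypasses the score entirely, using the tower property $\E[X_{t_{\mathrm{n}}} \mid X_t] = \E\big[\E[X_{t_{\mathrm{n}}} \mid X_t, X_0] \mid X_t\big]$ and the standard jointly-Gaussian conditioning formula given $X_0$, with the identity $\sigma_{t\mid t_{\mathrm{n}}}^2 = \sigma_t^2 - \alpha_t^2\sigma_{t_{\mathrm{n}}}^2/\alpha_{t_{\mathrm{n}}}^2$ doing the same algebraic work in both places. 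Your computations check out: $\Cov(V,V) = \sigma_t^2 I_d$ by marginal consistency of the kernels, the regression coefficient is $\alpha_t\sigma_{t_{\mathrm{n}}}^2/(\alpha_{t_{\mathrm{n}}}\sigma_t^2)$, and the $X_0$-coefficient collapses to $\alpha_{t_{\mathrm{n}}}\sigma_{t\mid t_{\mathrm{n}}}^2/\sigma_t^2$, matching the claim. What each approach buys: yours is more elementary and self-contained — it needs only the transition kernels, the Markov property, and Gaussian linear regression, and it makes explicit why the non-Gaussianity of $X_0$ is harmless (everything Gaussian happens conditionally on $X_0$); the paper's is shorter and reuses Tweedie's formula, which is already a central tool elsewhere in the paper (Eq.\ (2)), at the cost of implicitly relying on the same Markov/consistency structure through the validity of the second Tweedie application.
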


\begin{proof}
    Let $p_t( \cdot  )$ denote the probability density of the random variable $X_t$. Observe that $X_t = \alpha_t X_0 + \sigma_t Z$. Using Tweedie's formula, we have
    \begin{align*}
        \nabla \log p_t(X_t) = \frac{ \alpha_t \E[ X_0 | X_t ] - X_t }{ \sigma_t^2 }.
    \end{align*}
    Additionally, the random variable $X_t = (\alpha_t / \alpha_{t_{\mathrm{n}}}) X_{t_{\mathrm{n}}} + \sigma_{t | t_{\mathrm{n}}} Z$. Using Tweedie's formula, we can write the score function
    \begin{align*}
        \nabla \log p_t(X_t) = \frac{ (\alpha_t / \alpha_{t_{\mathrm{n}}}) \E[ X_{t_{\mathrm{n}}} | X_t ] - X_t }{ \sigma_{t | t_{\mathrm{n}}}^2 }.
    \end{align*}
    Using the above two equations, we have
    \begin{align*}
        \frac{ (\alpha_t / \alpha_{t_{\mathrm{n}}}) \E[ X_{t_{\mathrm{n}}} | X_t ] - X_t }{ \sigma_{t | t_{\mathrm{n}}}^2 } &= \frac{ \alpha_t \E[ X_0 | X_t ] - X_t }{ \sigma_t^2 }  \\
        \E[ X_{t_{\mathrm{n}}} | X_t ] &= \frac{ \alpha_{t_{\mathrm{n}}} \sigma_{t | t_{\mathrm{n}}}^2 }{ \alpha_t \sigma_t^2 } (\alpha_t \E[ X_0 | X_t ] - X_t) + \frac{\alpha_{t_{\mathrm{n}}} X_t}{ \alpha_t } = \frac{ \alpha_{t_{\mathrm{n}}} \sigma_{t | t_{\mathrm{n}}}^2 }{ \sigma_t^2 } \E[ X_0 | X_t ] + \frac{ \alpha_t \sigma_{t_{\mathrm{n}}}^2 }{ \sigma_t^2 \alpha_{t_{\mathrm{n}}} } X_t 
    \end{align*}\,.
    
\end{proof}

\begin{lemma}
    Predicting $\alpha_t Z - \sigma_t \frac{ ( X_{t_{\mathrm{n}}} - \frac{ \alpha_t \sigma_{t_{\mathrm{n}}}^2 }{ \sigma_t^2 \alpha_{t_{\mathrm{n}}} } X_t ) }{ \frac{ \alpha_{t_{\mathrm{n}}} \sigma_{t | t_{\mathrm{n}}}^2 }{ \sigma_t^2 } } $ gives us that the optimal $v$-prediction. 
\end{lemma}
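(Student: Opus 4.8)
For an $L^2$ regression objective the population‑optimal network is the conditional expectation of the target given its input: if $T$ denotes the displayed target and $h_\theta(X_t,t)$ the network, then $\E\lVert h_\theta(X_t,t)-T\rVert^2 = \E\lVert h_\theta(X_t,t)-\E[T\mid X_t]\rVert^2 + \E\lVert T-\E[T\mid X_t]\rVert^2$ (the cross term vanishes by the tower property, since $T$ does not depend on $\theta$), so the minimizer over measurable functions of $(X_t,t)$ is $\E[T\mid X_t]$. Hence the claim reduces to showing $\E[T\mid X_t] = \E[v\mid X_t]$, where $v:=\alpha_t Z-\sigma_t X_0$ is the usual $v$‑parametrization ($X_t=\alpha_t X_0+\sigma_t Z$, $\alpha_t^2+\sigma_t^2=1$) and $\E[v\mid X_t]$ is the optimal $v$‑prediction. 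I would carry this out in two steps: (i) use the previous Lemma to show that the denoised estimate embedded in $T$ is conditionally unbiased for $X_0$, and (ii) do the short $v$‑parametrization algebra that identifies the two conditional means.

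\textbf{Step (i).} Write $\widehat{X}_0$ for the quantity $\big(X_{t_{\mathrm{n}}} - \tfrac{\alpha_t\sigma_{t_{\mathrm{n}}}^2}{\sigma_t^2\alpha_{t_{\mathrm{n}}}}X_t\big)\big/\big(\tfrac{\alpha_{t_{\mathrm{n}}}\sigma_{t|t_{\mathrm{n}}}^2}{\sigma_t^2}\big)$ appearing inside $T$; note $\widehat X_0$ is a function of the observed pair $(X_{t_{\mathrm{n}}},X_t)$ only, so $T$ can be assembled during training without clean data. The previous Lemma gives $\E[X_{t_{\mathrm{n}}}\mid X_t] = \tfrac{\alpha_{t_{\mathrm{n}}}\sigma_{t|t_{\mathrm{n}}}^2}{\sigma_t^2}\,\E[X_0\mid X_t] + \tfrac{\alpha_t\sigma_{t_{\mathrm{n}}}^2}{\sigma_t^2\alpha_{t_{\mathrm{n}}}}X_t$, and since $\widehat X_0$ is an affine function of $X_{t_{\mathrm{n}}}$ with $X_t$‑measurable coefficients, taking $\E[\cdot\mid X_t]$ and solving yields
\[
\E[\widehat X_0\mid X_t] \;=\; \E[X_0\mid X_t].
\]
That is, feeding the optimal ambient denoiser of the previous Lemma through the formula defining $\widehat X_0$ returns exactly the clean‑data denoiser.

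\textbf{Step (ii) and the main obstacle.} Under $\alpha_t^2+\sigma_t^2=1$ the $v$‑variable has the equivalent form $v=\alpha_t Z-\sigma_t X_0=(\alpha_t X_t-X_0)/\sigma_t$ (substitute $Z=(X_t-\alpha_t X_0)/\sigma_t$), so $\E[v\mid X_t]=(\alpha_t X_t-\E[X_0\mid X_t])/\sigma_t$. Reading the symbol $Z$ in the displayed target as the noise residual consistent with the surrogate clean estimate (i.e.\ $X_t=\alpha_t\widehat X_0+\sigma_t Z$), the same substitution gives
\[
T \;=\; \alpha_t Z-\sigma_t\widehat X_0 \;=\; \frac{\alpha_t X_t-(\alpha_t^2+\sigma_t^2)\widehat X_0}{\sigma_t} \;=\; \frac{\alpha_t X_t-\widehat X_0}{\sigma_t},
\]
and taking $\E[\cdot\mid X_t]$ together with Step (i) yields $\E[T\mid X_t]=(\alpha_t X_t-\E[X_0\mid X_t])/\sigma_t=\E[v\mid X_t]$, which with the bias--variance decomposition above finishes the proof. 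The algebra here is routine; the one delicate point is exactly this identification of $Z$: it must be the residual $(X_t-\alpha_t\widehat X_0)/\sigma_t$ that the $v$‑parametrization implies (equivalently the forward noise when $X_t$ is viewed as a direct noising of $X_0$), \emph{not} the Gaussian independently injected to pass from $X_{t_{\mathrm{n}}}$ to $X_t$ --- the two differ by a factor $\sigma_{t|t_{\mathrm{n}}}/\sigma_t$ in their $X_t$‑conditional means, which is what would throw a naive substitution off by a scalar. Once this bookkeeping is pinned down (so that the standard $v\!\leftrightarrow\! x_0$ conversion used by the Stable Diffusion pipeline sends the optimal output back to $\E[X_0\mid X_t]$), the statement follows from the previous Lemma plus $\alpha_t^2+\sigma_t^2=1$.
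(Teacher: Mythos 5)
Your argument is correct, and in fact the paper states this lemma \emph{without} any proof, so your write-up supplies exactly the missing derivation in the way the preceding lemma is clearly meant to be used: the surrogate estimate $\widehat X_0=\bigl(X_{t_{\mathrm{n}}}-\tfrac{\alpha_t\sigma_{t_{\mathrm{n}}}^2}{\sigma_t^2\alpha_{t_{\mathrm{n}}}}X_t\bigr)\big/\bigl(\tfrac{\alpha_{t_{\mathrm{n}}}\sigma_{t|t_{\mathrm{n}}}^2}{\sigma_t^2}\bigr)$ satisfies $\E[\widehat X_0\mid X_t]=\E[X_0\mid X_t]$ by the previous lemma, the $L^2$ minimizer over functions of $X_t$ is the conditional mean of the target, and the identity $v=\alpha_t Z-\sigma_t X_0=(\alpha_t X_t-X_0)/\sigma_t$ (using $\alpha_t^2+\sigma_t^2=1$) closes the loop. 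Your flagged subtlety about the symbol $Z$ is also a genuine and correctly quantified clarification of the statement rather than a pedantic aside: if $Z$ is read as the true forward noise $(X_t-\alpha_t X_0)/\sigma_t$ or as the residual $(X_t-\alpha_t\widehat X_0)/\sigma_t$, the target's conditional mean given $X_t$ equals the optimal $v$-prediction (and only the latter reading is computable from noisy data); whereas if $Z$ were the fresh Gaussian injected to pass from $X_{t_{\mathrm{n}}}$ to $X_t$ (as in line 10 of the training algorithm), its conditional mean given $X_t$ is rescaled by $\sigma_{t|t_{\mathrm{n}}}/\sigma_t$, which I verified, and the lemma as stated would then require a corresponding correction of the first term. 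So the proposal is complete and, if anything, sharpens the paper's unproved statement by pinning down the interpretation under which it is true.
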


\section{Proofs}

\subsection{Proof of Lemma \ref{lemma:ambient_diffusion_sampling_dist}}
\label{app:proof2}
\begin{proof}
The distribution of the training data conditioned on the dataset $S_{t_{\mathrm{n}}}$ is $q_0(x) = \frac{1}{n}\sum_{x_{t_{\mathrm{n}}} \in S_{t_{\mathrm{n}}}} \delta(x - x_{t_{\mathrm{n}}}).$ To obtain iterates at time $t$, we add additional noise to points $x_{t_{\mathrm{n}}} \in S_{t_{\mathrm{n}}}$. Particularly, the following relation holds for any $t \in (t_{\mathrm{n}}, T]$:
\[
X_t^{\mathrm{Amb}} = \sqrt{\frac{1-\sigma_t^2}{1-\sigma_{t_{\mathrm{n}}}^2}} X_{t_{\mathrm{n}}} + \sqrt{\frac{\sigma_t^2 - \sigma_{t_{\mathrm{n}}}^2}{1-\sigma_{t_{\mathrm{n}}}^2}} \eps\,, \eps \sim \calN(0,I)\,.
\]
This induces a distribution for each time $t$:
\[
q_t(x | S_{t_{\mathrm{n}}}) =  \frac{1}{n}
\sum_{x_{t_{\mathrm{n}}} \in S_{t_{\mathrm{n}}}}
\calN \left(x; \sqrt{\frac{1-\sigma_t^2}{1-\sigma_{t_{\mathrm{n}}}^2}} x_{t_{\mathrm{n}}}, \frac{\sigma_t^2 - \sigma_{t_{\mathrm{n}}}^2}{1-\sigma_{t_{\mathrm{n}}}^2} \cdot I\right)\,.
\]
The score of
the Gaussian mixture $q_t$ is given by
\[
s_t^{\mathrm{Amb}}(x | S_{t_{\mathrm{n}}}) = \frac{1}{\frac{\sigma_t^2 - \sigma_{t_{\mathrm{n}}}^2}{1-\sigma_{t_{\mathrm{n}}}^2}} \sum_{x_{t_{\mathrm{n}}} \in S_{t_{\mathrm{n}}}} \left(\sqrt{\frac{1-\sigma_t^2}{1-\sigma_{t_{\mathrm{n}}}^2}} x_{t_{\mathrm{n}}} - x\right) \frac{\calN(x; \sqrt{\frac{1-\sigma_t^2}{1-\sigma_{t_{\mathrm{n}}}^2}} x_{t_{\mathrm{n}}}, \frac{\sigma_t^2 - \sigma_{t_{\mathrm{n}}}^2}{1-\sigma_{t_{\mathrm{n}}}^2} \cdot I)}{\sum_{y \in S_{t_{\mathrm{n}}} } 
\calN(x; \sqrt{\frac{1-\sigma_t^2}{1-\sigma_{t_{\mathrm{n}}}^2}} y, \frac{\sigma_t^2 - \sigma_{t_{\mathrm{n}}}^2}{1-\sigma_{t_{\mathrm{n}}}^2} \cdot I)
}\,.
\]
Since the reverse flow of Eq.\eqref{eq:deterministic_reverse} provably reverses the
forward diffusion~\citep{ncsnv3}, the distribution $q_0^\leftarrow$
equals the empirical data distribution
$q_0$, which is a sum of delta functions on the noisy training set $S_{t_{\mathrm{n}}}$.
\end{proof}

\subsection{ Technical Details about the
 Subpopulations Model}

\subsubsection{Proof \Cref{lemma:mainFeldman}}
\label{app:proof}
\begin{proof}
For each subpopulation with exactly $\ell$ representatives, we put in the set $X_{Z \# \ell}$ those representatives. Observe that the collection of sets $\{X_{Z \# \ell}\}$ partitions $Z$ for $\ell \in \{1,...,n\}.$
Set $X_Z = \cup_{\ell \in [n]} X_{Z \# \ell}.$
The unseen points correspond to the set $X_{Z \# 0}.$

With this notation in hand, we define
\[
\mathrm{errn}_Z(A,\ell) = \E_{s_\theta \sim A(Z)} \sum_{x \in X_{Z \# \ell}} M_{i_x}(x) L(s_\theta, x)\,. 
\]
where $i_x$ is the index of the unique component whose support contains $x$.
We have that
\[
\overline{\mathrm{err}}(\pi, A | Z) 
= 
\E_{D \sim D_\pi^X(\cdot|Z)} \E_{s_\theta \sim A(Z)} \sum_{x \in X} M_D(x) \cdot L(s_\theta, x) \,.
\] 
We now decompose $X = X_Z \cup X_{Z \# 0}$ and write
\[
\overline{\mathrm{err}}(\pi, A | Z) 
= \sum_{x \in X_Z} \E_{D, s_\theta}[M_D(x) \cdot L(s_\theta, x)]
+ 
\sum_{x \in X_{Z \# 0}} \E_{D, s_\theta}[M_D(x) \cdot L(s_\theta, x)]\,.
\]
Let us first deal with the second term. For any $x \in X_{Z \# 0}$, it holds
\[
\E_{D, s_\theta}[M_D(x) \cdot L(s_\theta, x)] = \E_{D \sim D_\pi(\cdot | Z)} M_D(x) \cdot \E_{s_\theta \sim A(Z)} L(s_\theta, x),
\]
{because the way we choose $D$ is independent of the random variable $L(s_\theta, x)$ which only depends on the way the algorithm picks the score function given the dataset.} 

Set $p(x,Z) = \E_{D \sim D_\pi(\cdot | Z)} M_D(x)$. Hence, from the elements that do not appear in $Z$, we get a contribution
\begin{equation}
    \sum_{x \in X_{Z \# 0}} p(x,Z) \cdot \E_{s_\theta 
    \sim A(Z)} L(s_\theta, x) \,.
\end{equation}
Let us now deal with the elements appearing in $Z.$ Fix $\ell \in [n]$. For any $x \in X_{Z \# \ell},$ we have that
\[
\E_{D, s_\theta}[M_D(x) \cdot L(s_\theta, x)]
=
\E_{D \sim D_\pi(\cdot | Z)}[M_D(x)] \cdot 
\E_{s_\theta \sim A(Z)} L(s_\theta, x)\,,
\]
since the random variables $L(s_\theta, x)$ and $M_D(x)$ are independent given $Z$. 
By Lemma 2.1 in \cite{feldman2020does} and since the supports of the components $M_1,...,M_N$ are disjoint, we know that $\E_{D \sim D_\pi(\cdot | Z)}[M_D(x)] = \E[D(i_x) M_{i_x}(x)] = \E[D(i_x)] M_{i_x}(x) =  \tau_\ell M_{i_x}(x)$, where $i_x$ is the index of the component whose support contains $x$. Hence, we have that
\[
\sum_{x \in X_Z} \E[M_D(x) \cdot L(s_\theta, x)] 
= \sum_{\ell \in [n]} 
\sum_{x \in X_{Z \# \ell}} \tau_\ell \cdot M_{i_x}(x) \cdot \E_{s_\theta \sim A(Z)} L(s_\theta, x)
=
\sum_\ell \tau_\ell \cdot 
\sum_{x \in X_{Z \# \ell}} M_{i_x}(x) \E_{s_\theta \sim A(Z)} L(s_\theta, x)
\]
In total, we have shown that
\[
\overline{\mathrm{err}}(\pi, A |Z )
= \sum_{\ell \in [n]} \tau_\ell \cdot  \mathrm{errn}_Z(A, \ell) + \overline{\mathrm{err}}_{\mathrm{unseen}}(\pi, A |Z )\,.
\]
This completes the proof.
\end{proof}

\subsubsection{Proof of Lemma \ref{lemma:SmoothHeavyTails}}
\label{sec:addProofs}

\begin{proof}[Proof of Lemma \ref{lemma:SmoothHeavyTails}]
We have that the $i$-th noisy example can be written as $x_t^i = \sqrt{1-\sigma_t^2} x_0^i + \sigma_t z_t^i$.
Let us set $\sigma_t = o(1/\|x_0\|)) = \poly(1/d)$.
 Using Taylor's approximation for $\sqrt{1-x}$ around $x= 0$ $(\sqrt{1-x} = 1 - x/2 - o(x))$ , we can write 
 \[
 \| x_t^i - (1-\poly(1/d)) x_0 - \poly(1/d) z_t^i \| \leq \eps\,,
 \]
 for some $\eps = \poly(1/d)$ sufficiently small. This means that
 \[
 \|x_t^i - x_0\| \leq \eps + \poly(1/d) \|x_0\| + \poly(1/d) \|z_t^i\|
 \]
 By Gaussian concentration, we have that
 \[
 \Pr_{z_t^i}[ \|z_t^i\| > \sqrt{d}] \leq \exp(-d/2)\,.
 \]
 Let us define the bad event $E_m$ which corresponds to ''subpopulation $j$ does not have a $\poly(1/d)$-smoothed single representative in the set $\{x_t^i\}_{i \in [m]}$ for $\sigma_t = \poly(1/d)$''.
 A union bound over the $m$ noisy examples gives that
 \[
\Pr_{z_t^1,...,z_t^m}[E_m] 
 \leq m \cdot \exp(-d/2)\,.
 \]
 \end{proof}

\subsubsection{Proofs of \Cref{merging2} and \Cref{merging}}
\label{proof:Merging}
\begin{proof}
[Proofs of \Cref{merging2} and \Cref{merging}]
The proof relies on the fact that when the total variation distance between two identity-covariance Gaussians is smaller than an absolute constant, then the total variation is up to constants characterized by the distance between the means \cite{arbas2023polynomial}.
When the original total variation is at most $1/600$, \cite{arbas2023polynomial} shows that
\[
\mathrm{TV}(\calN(\mu, I), \calN(\mu', I)) = \Theta(\|\mu - \mu'\|)\,.
\]
The lemmas follow by noting that the densities of $\calN(\mu_i, I), 
\calN(\mu_i', I)$ at noise scale $\sigma_t$ (denoted as $\calN_t, \calN'_t)$ satisfy
\[
\calN_t = \calN(\sqrt{1-\sigma_t^2} \mu, I), ~~
\calN_t' = \calN(\sqrt{1-\sigma_t^2} \mu', I)\,.
\]
Since $\sqrt{1-\sigma_t^2} \leq 1$, the means are contracting and so $\mathrm{TV}(\calN_t, \calN_t') \leq 1/600.$ Also:
\[
\mathrm{TV}(\calN_t, \calN_t') \leq \|\mu_t - \mu_t'\|/\sqrt{2}
=
\sqrt{1-\sigma_t^2}\cdot  \|\mu - \mu'\|/\sqrt{2}\,.
\]
If we want to make this quantity at most $\eps,$ it suffices to take $\sigma_t \geq \sqrt{1-2(\eps/\|\mu - \mu'\|)^2}$.

For the other side, by \citep{arbas2023polynomial}, $\mathrm{TV}(\calN_t, \calN_t') \geq \|\mu_t - \mu_t'\|/200 = \sqrt{1-\sigma_t^2} \|\mu-\mu'\|/200.$ (we  assume that the original variation is smaller than 1/600 and we contract it by adding noise). If this should be at least $2\eps$, then it should be that
$\sigma_t \leq \sqrt{1-200^2(2\eps/\|\mu - \mu'\|)^2}$.
This concludes the proof.
\end{proof}

\section{Experimental Details}

We open-source our code: \url{https://github.com/kulinshah98/memorization_noisy_data} 

\medskip

\noindent For all of our experiments regarding unconditional generation, we use the Adam optimizer with a learning rate of 0.0001, betas (0.9, 0.999), an epsilon value of 1e-8, and a weight decay of 0.01. The model for FFHQ and CIFAR-10 is trained for 30,000 iterations with a batch size of 256 and the model for Imagenet is trained for 512 batch size for 80,000 iterations. For experiments on the Imagenet dataset, we train a class-conditional model. 

\medskip

\noindent For FFHQ and CIFAR-10 experiments, we randomly sample 300, 1000 and 3000 samples from the complete dataset to create the dataset with limited size. We use Tiny Imagenet dataset which consists of 200 classes \cite{le2015tinyimagenet}. We sample 5 images randomly from each class to create a dataset consisting of 1000. Similarly, we sample 15 images from each class to create a dataset consisting of 3000 images. For the unconditional and conditional generation experiments, we used with the implementation of \citep{karras2022elucidating} and default parameters of the implementation. 

\medskip

\noindent For text-conditioned experiments, we use the implementation of \cite{somepalli2023understanding} and implement additional baseline \cite{wen2024detecting} and our method in the implementation. Similar to previous works, we use LAION-10k dataset to train the stable diffusion v2 model for 100000 number of iterations using batch size 16. We use the final checkpoint after the complete training to evaluate the memorization, clipscore and fidelity. For the text-conditioned experiments, we tried adding nature noise at noise scale $\{ 25, 50, 100 \}$ and chose the model with best image quality.

\section{Images Generated using our Method}

In this section, we present various images generates using our method. The images can be found in \Cref{fig:enter-label1,fig:enter-label2,fig:enter-label3}.

\begin{figure}
    \centering
    \includegraphics[width=0.9\linewidth]{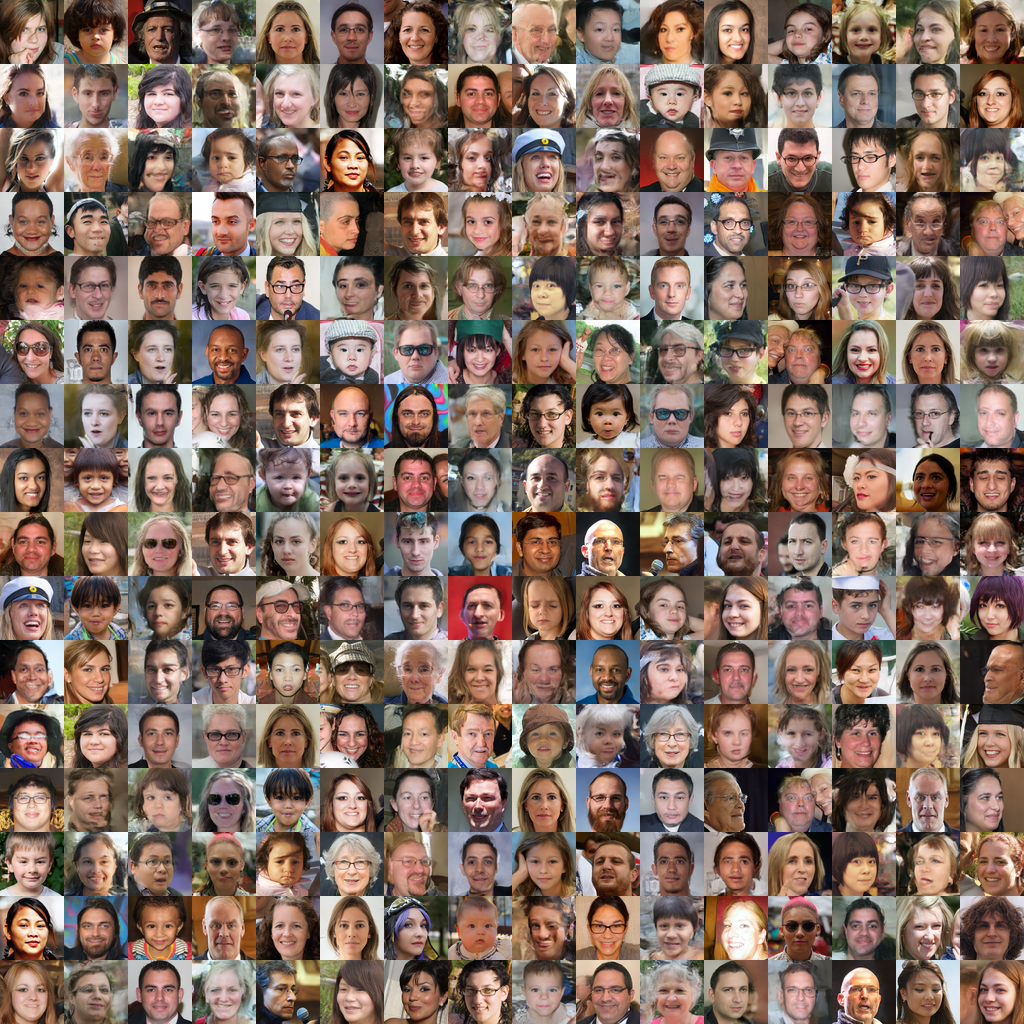}
    \caption{Images generated using a model trained with our method on 300 samples}
    \label{fig:enter-label1}
\end{figure}

\begin{figure}
    \centering
    \includegraphics[width=0.9\linewidth]{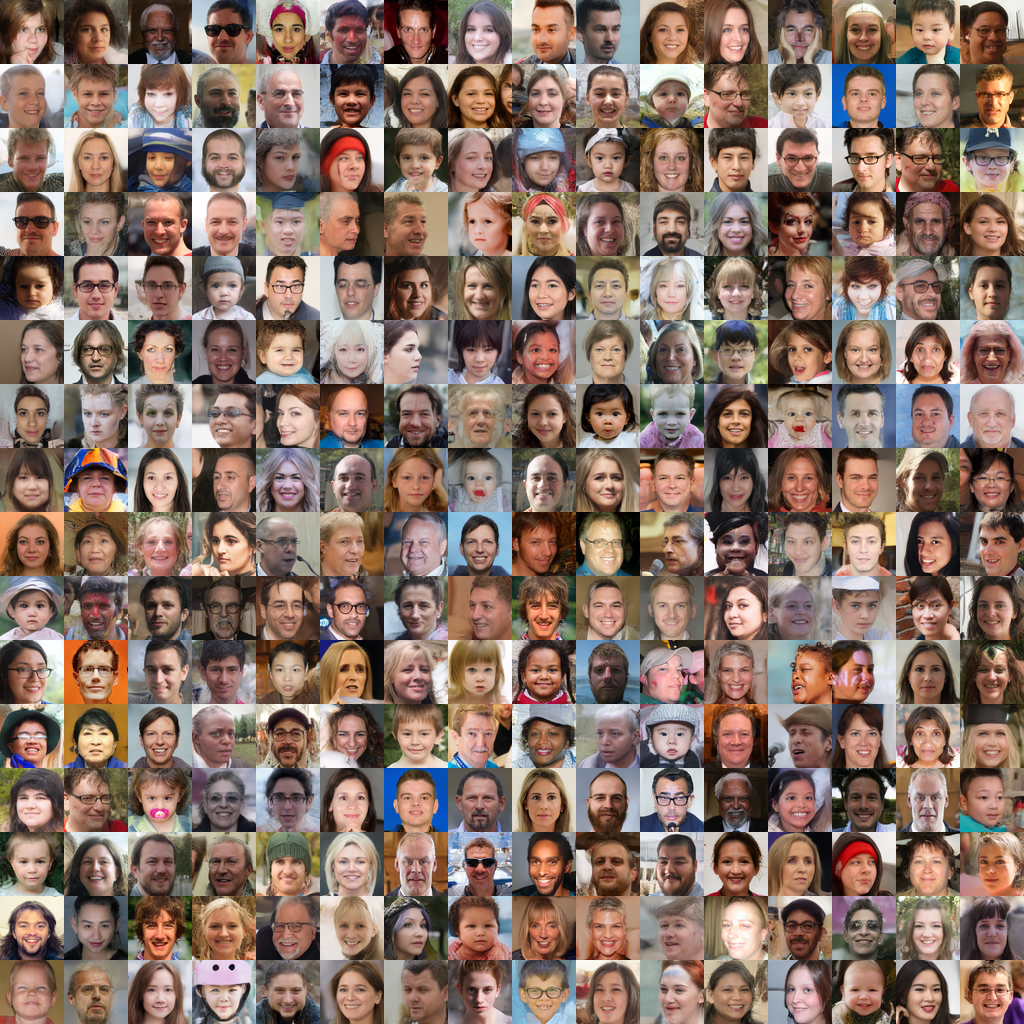}
    \caption{Images generated using a model trained with our method on 1000 samples}
    \label{fig:enter-label2}
\end{figure}

\begin{figure}
    \centering
    \includegraphics[width=0.9\linewidth]{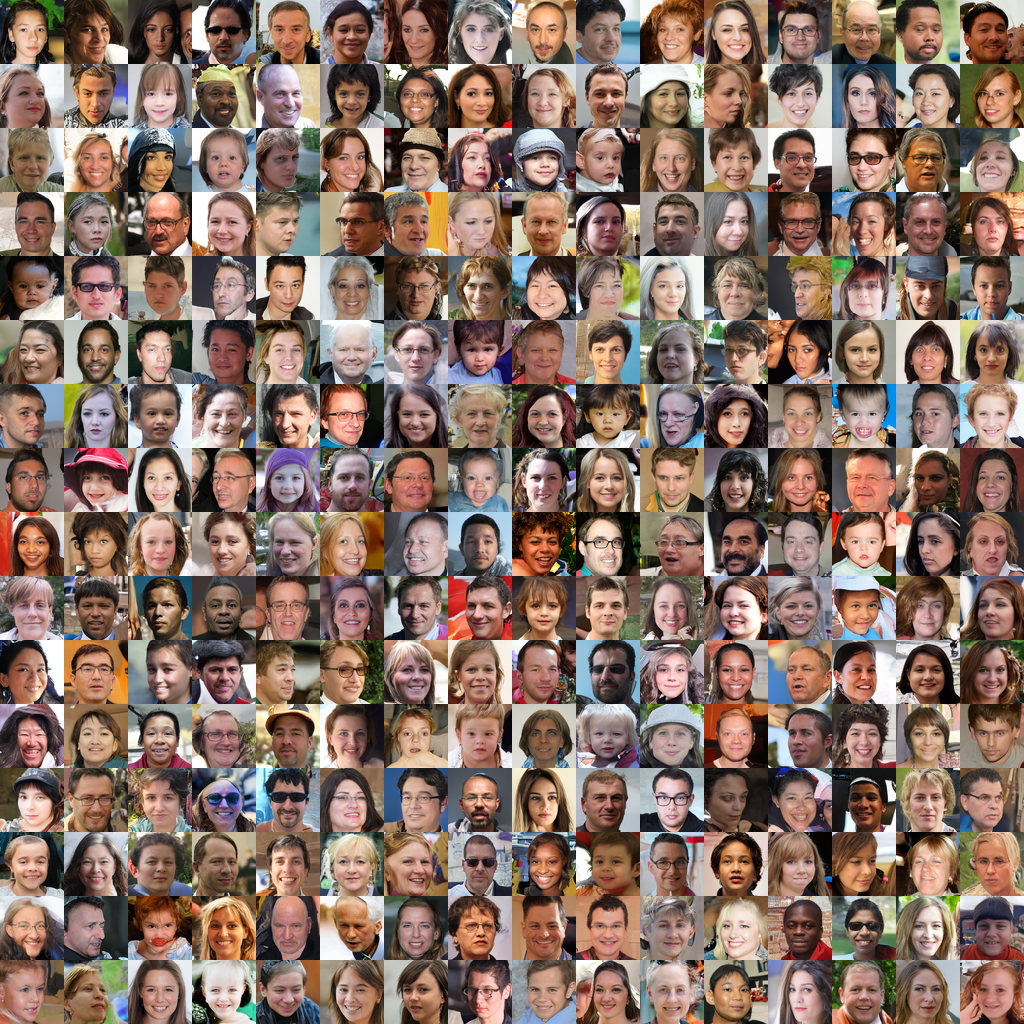}
    \caption{Images generated using a model trained with our method on 3000 samples}
    \label{fig:enter-label3}
\end{figure}

\end{document}